\newcommand{\cH}{\mathcal{H}}
\newcommand{\cC}{\mathcal{C}}
\newcommand{\ldim}[1]{\mathrm{Ldim}(#1)}
\newcommand{\ignore}[1]{}
\newcommand{\N}{{\mathbb N}}
\newcommand{\1}{{\mathbbm 1}}
\DeclareMathOperator*{\poly}{\mathrm{poly}}
\newcommand{\R}{{\mathbb R}}
\newcommand{\eps}{\epsilon}
\newcommand{\anote}[1]{{\color{magenta} Amos: #1}}
\newcommand{\new}[1]{{\color{red} #1}}
\newcommand{\calB}{{\cal B}}
\newcommand{\db}{S}
\newcommand{\stepref}[1]{Step~(\ref{#1})}
\newcommand{\steprefrange}[2]{Steps~(\ref{#1})--(\ref{#2})}
\newcommand{\error}{{\rm error}}
\newcommand{\DDD}{\mathcal P}
\newcommand{\maj}{\mathsf{MAJ}}
\newcommand{\test}{\operatorname{\rm test}}
\newcommand{\testExample}{\operatorname{\rm test}}
\newcommand{\inr}{\in_{\mbox{\tiny R}}}
\newcommand{\remove}[1]{}
\newcommand{\AAA}{\mathcal A}
\newcommand{\BBB}{\mathcal B}
\newcommand{\Alg}{\AAA}
\newcommand{\AlgLearnComposition}{\AAA_{\rm ClosureLearn}} 
\newcommand{\AlgClosureLearn}{\AAA_{\rm ClosureLearn}}
\newcommand{\AlgPrivateAgnostic}{\AAA_{\rm PrivateAgnostic}}
\newcommand{\AlgRelabel}{\AAA_{\rm Relabel}}
\newcommand{\AlgRelabelAndLearn}{\AAA_{\rm RelabelAndLearn}}
\newcommand{\ppp}{\mathsf p}
\DeclarePairedDelimiter{\ourset}{\{}{\}}
\newcommand{\VC}{\operatorname{\rm VC}}
\renewcommand{\epsilon}{\varepsilon}
\newcommand{\E}{\operatorname*{\mathbb{E}}}
\newtheorem{theorem}{Theorem}[section]
\newtheorem{lemma}[theorem]{Lemma}
\newtheorem{claim}[theorem]{Claim}
\newtheorem{definition}[theorem]{Definition}
\newtheorem{example}[theorem]{Example}
\newtheorem{prop}[theorem]{Proposition}
\newtheorem{obs}[theorem]{Observation}
\newtheorem{question}[theorem]{Question}
\theoremstyle{remark}
\newtheorem{remark}[theorem]{Remark}
\Crefname{equation}{}{}
\Crefname{lemma}{Lemma}{Lemmas}
\renewcommand{\cref}{\Cref}
\def\@opargbegintheorem#1#2#3{\trivlist
\item[\hskip\dimexpr\labelsep\relax{\bf #1\ #2}]\hspace{-3px}{\bf (#3)}\ \itshape}
\title{Closure Properties for Private Classification and Online Prediction}
\author{
Noga Alon\thanks{
Department of Mathematics, Princeton University, Princeton, New Jersey, USA 
and Schools of Mathematics and Computer Science, Tel Aviv University, Tel Aviv, Israel.
Research supported in part by NSF grant DMS-1855464, ISF grant 281/17, BSF grant 2018267 and the Simons Foundation.}
%Noga Alon's research is supported in part by ISF grant No. 281/17, GIF
%grant No. G-1347-304.6/2016 and the Simons Foundation.}
\and
Amos Beimel\thanks{Department of Computer Science, Ben-Gurion University, Beer-Sheva, Israel.
This work was done while  visiting Georgetown University, supported by NSF grant no.~1565387, TWC: Large: Collaborative: Computing Over Distributed Sensitive Data and by ERC grant 742754 (project NTSC) and also supported by ISF grant 152/17 and  by a grant from the Cyber Security Research Center at Ben-Gurion University of the Negev.}
\and
Shay Moran\thanks{Google AI, Princeton.}
\and
Uri Stemmer\thanks{Department of Computer Science, Ben-Gurion University, Beer-Sheva, Israel, and Google Research. Partially supported by ISF grant 1871/19.}
%Part of this research was done while the author was a member at the Institute for Advanced Study, Princeton
%and was supported by the National Science Foundation under agreement No. CCF-1412958.}
}
\begin{document}
\maketitle
\begin{abstract}
%We prove closure properties for online learning and differentially private learning.
Let~$\cH$ be a class of boolean functions and consider a {\it composed class} $\cH'$ that is derived from~$\cH$ using some arbitrary aggregation rule (for example, $\cH'$ may be the class of all 3-wise majority-votes of functions in $\cH$). We upper bound the Littlestone dimension of~$\cH'$ in terms of that of~$\cH$. 
%The bounds are proved using combinatorial arguments that exploit a connection between the Littlestone dimension and Thresholds. 
As a corollary, we derive closure properties for online learning and private PAC learning.

The derived bounds on the Littlestone dimension exhibit an undesirable exponential dependence. For private learning, we  prove close to optimal bounds that circumvents this suboptimal dependency. The improved bounds on the sample complexity of private learning are derived algorithmically via transforming a private learner for the original class $\cH$ to a private learner for the composed class~$\cH'$. Using the same ideas we show that any ({\em proper or improper}) private algorithm that learns a class of functions $\cH$ in the realizable case (i.e., when the examples are labeled by some function in the class) can be transformed to a private algorithm that learns the class $\cH$ in the agnostic case.
\end{abstract}

\section{Introduction}

We study closure properties for learnability of binary-labeled hypothesis classes in two related settings:
online learning and differentially private PAC learning.

\paragraph{Closure Properties for Online Learning.}
Let $\cH$ be a class of experts that can be online learned with vanishing regret. 
	That is, there exists an algorithm $\Alg$ such that given any sequence of $T$ prediction tasks,
	the number of false predictions made by $\Alg$ is larger by at most $R(T)=o(T)$ 
	than the number of false predictions made by the best expert in $\cH$. 

Consider a scenario where the sequence of tasks is such that every single expert in $\cH$ predicts poorly on it, 
	however there is a small unknown set of experts $h_1,\ldots, h_k\in\cH$ that can predict well by collaborating.
	More formally, there is an aggregation rule $G:\{0,1\}^k\to\{0,1\}$ 
	such that the combined expert $G(h_1,\ldots, h_k)$ exhibits accurate predictions on a significant majority of the tasks. 
	For example, a possible aggregation rule $G$ could be the majority-vote of the $k$ experts.
	Since we assume that the identities of the $k$ experts are not known, it is natural to consider the class 
	$\cH'=\{G(h_1,\ldots,h_k) : h_i\in \cH\},$ 
	which consists of all possible $G$-aggregations of $k$ experts from $\cH$.
	%tE.g., if $G$ is a majority-vote then each expert in $\cH'$ is the $k$-wise majority of experts from $\cH$ .
	We study the following question:

%\begin{center}
%\noindent\fbox{
%\parbox{.875\columnwidth}{
\begin{question}\label{q:online}
Can the optimal regret with respect to $\cH'$ be bounded in terms of that of~$\cH$?
\end{question}
%}}
%\end{center}

%\begin{framed}[.9\textwidth]
%\begin{question}
%Can the regret with respect to $\cH'$ be bounded as a function of the regret with respect to $\cH$?
%\end{question}
%\end{framed}

The {\it Littlestone dimension} is a combinatorial parameter that determines online learnability~\citep{Littlestone87online,Bendavid09agnostic}.
	In particular, $\cH$~is online learnable if and only if it has a finite Littlestone dimension~$d<\infty$, and the best possible regret $R(T)$ for online learning~$\cH$ satisfies 
	\begin{equation}\label{eq:ldimonline}
	\Omega (\sqrt{dT}) \leq R(T) \leq O(\sqrt{dT\log T}).
	\end{equation}
Furthermore, if it is known that if one of the experts never errs (a.k.a the realizable setting), 
then the optimal regret is exactly $d$.\footnote{More precisely, there is a deterministic algorithm which makes no more than $d$ mistakes, and for every deterministic algorithm there is a (realizable) input sequence on which it makes at least $d$ mistakes. For randomized algorithms a slightly weaker lower bound of $d/2$ holds with respect to the expected number of mistakes.}
(The regret is called mistake-bound in this context.)

Thus, the above question boils down to asking whether the Littlestone dimension of $\cH'$ is bounded by a function of the Littlestone dimension of $\cH$.
One of the two main results in this work provides an affirmative answer to this question (\Cref{thm:littlestone}).

\vspace{2mm}

We next discuss a variant of this question in the setting of Differentially Private (DP) learning.
	The two settings of online and DP-learning are intimately related (see, e.g.,~\cite{Bun20Privateonline,Abernathy17onlilnedp,Joseph2019TheRO,Gonen19privateonline}).
	In particular, both online learning and DP-learning are characterized by the finiteness of 
	the Littlestone dimension~\citep{Littlestone87online,Bendavid09agnostic,BunNSV15,AlonLMM19,Bun20Privateonline}.
%	We will later discuss a formal manifestation of this relationship between the online and DP learning
%	which is captured by the Littlestone dimension~\cite{Bun20Privateonline,AlonLMM19}.

\paragraph{Closure Properties for Differentially Private Learning.}

Imagine the following medical scenario:
consider a family $\cH$ of viruses for which there is an algorithm $\Alg$ 
	that can learn to diagnose any specific virus~$h\in \cH$ given enough labeled medical data.
	Further assume that~$\Alg$ has the desired property of being differentially private learning algorithm as defined by~\citep{KasiviswanathanLNRS11};
	that is, it is a PAC learning algorithm in which the privacy of every patient whose data is used during training is guarded in the formal sense of differential privacy~\citep{DworkMNS06}.
	
Assume that an outbreak of a deadly disease $h'$ has occurred in several locations all over the world 
	and that it is known that $h'$ is caused by some relatively small, yet unknown group of viruses from $\cH$.
	That is, our prior information is that there are unknown viruses $h_1,\ldots, h_k\in \cH$ for a relatively small $k$ 
	such that $h'=G(h_1,\ldots, h_k)$ for some rule $G$.
	For example, $G$ could be the OR function in which case $h'$ occurs if and only if 
	the patient is infected with at least one of the viruses $h_1,\ldots, h_k$.

It would be highly beneficial if one could use the algorithm $\Alg$ to diagnose $h'$
	in an automated fashion. Moreover, doing it in a private manner could encourage health institutions
	in the different locations to contribute their patients' data. 
	This inspires the following question: 
%\begin{center}
%\noindent\fbox{
%\parbox{.875\columnwidth}{
\begin{question}\label{q:privacy}
Can one use the algorithm $\Alg$ to \emph{privately} learn to diagnose $h'$? 
How does the sample complexity of this learning task scale as a function of $G$?
\end{question}
%}}
%\end{center}

\paragraph{Differential Privacy, Online Learning, and the Littlestone Dimension.}

Question~\ref{q:privacy} and Question~\ref{q:online} are equivalent in the sense that both 
online learning and DP-learning are characterized by the finiteness of 
the Littlestone dimension~\citep{Littlestone87online,Bendavid09agnostic,BunNSV15,AlonLMM19,Bun20Privateonline}.
%In particular, our answer to \Cref{q:online} also settles \Cref{q:privacy}.

Note however that unlike the bounds relating the Littlestone dimension to online learning,
	which are tight up to logarithmic factors (see \Cref{eq:ldimonline}),
	the bounds relating the Littlestone dimension and DP-learning
	are {\it very far from each other}; specifically, if $d$ denotes the Littlestone dimension of $\cH$
then the lower bound on the sample complexity of privately learning $\cH$ scales with $\log^* d$~\citep{BunNSV15,AlonLMM19}, 
while the best known\footnote{The lower bound is tight up to polynomial factors \citep{kaplan2019privately}, 
however the upper bound is not known to be tight: 
for example, as far as we know, it is possible that the sample complexity of private learning scales linearly
with $\VC(\cH) + \log^*(\ldim{\cH})$.} upper bound scales with $\exp(d)$~\citep{Bun20Privateonline}. 

Thus, while our solution to Question~\ref{q:online} yields an affirmative answer to Question~\ref{q:privacy}, 
the implied quantitative bounds are far from being realistically satisfying. 
Specifically, every finite $\cH$ is learnable with privacy using $O(\log |\cH|)$ samples~\citep{KasiviswanathanLNRS11}, 
and so if $\cH$ is finite and not too large, the bounds implied by the Littlestone dimension are not meaningful.
We therefore focus on deriving effective bounds for private learning,
which is the content of \Cref{thm:private} (see \Cref{thm:privacy} for a precise statement).

\paragraph{Littlestone Classes.}
It is natural to ask which natural hypothesis classes have bounded Littlestone dimension.
    First, it holds that $\ldim{\cH} \leq \log\lvert \cH\rvert$ for every $\cH$, 
    so for finite classes the Littlestone dimension scales rather gracefully with their size.

There are also natural infinite Littlestone classes:
    for example, let the domain $X=\mathbb{F}^n$ be an $n$-dimensional vector space over some field $\mathbb{F}$
    and let $\cH\subseteq\{0,1\}^X$ consist of all affine subspaces of $V$ of dimension $\leq d$.
    It can be shown here that $\ldim{\cH}=d$.
    (For example, the class of all lines in $\mathbb{R}^{100}$ has Littlestone dimension $1$.)
    A bit more generally, any class of hypotheses that can be described by
    polynomial \emph{equalities} of a bounded degree has bounded Littlestone dimension.
    (Observe that if one replaces ``equalities'' with ``inequalities''
    then the Littlestone dimension may become unbounded, however the VC dimension remains bounded
    (e.g.\ Halfspaces).)
    We note in passing that this can be further generalized to classes that are definable 
    in {\it stable theories}, which is a deep and well-explored notion in model theory.
    We refer the reader to~\cite{Chase19modelmachine}, Section 5.1 for such examples.

\paragraph{Organization.}

Formal statement of our main results and description of our techniques appears in \Cref{sec:main}, specifically,
a short overview of the proofs is given in \Cref{sec:technical}.
Definitions and background results are provided in \Cref{sec:Preliminaries}.
The complete proofs appear in the rest of the paper. %\cref{sec:littlestoneproof,sec:relabel,sec:agnostic,sec:privacyproof}:
Closure properties for Littlestone classes is proved in \Cref{sec:littlestoneproof}.
The effective bounds for  private learning are given in  \Cref{sec:relabel} and \Cref{sec:agnostic,sec:privacyproof}.
We note that each of these parts can be read independently of the other.
%Finally, some suggestions for future research appear in \Cref{sec:conclusion}.

%We first address the composability of Littlestone classes by bounding the Littlestone dimension
%of a composed class $\cH'$ in terms of the Littlestone dimension of $\cH$, which we denote by $d$.
%In some cases however, the obtained bounds have an exponential dependence on $d$, which is unavoidable.
%Therefore, in the second part of this manuscript we focus on DP-learning and provide stronger bounds, 
%with only a polynomial increase in the sample complexity.

\section{Main Results and Techniques}
\label{sec:main}

Let $G:\{0,1 \}^k \to\{0,1\}$ be a boolean function and let $\cH_1,\ldots, \cH_k\subseteq \{0,1\}^X$ be hypothesis classes.
Denote by $G(\cH_1,\ldots,\cH_k)$ the following class
$ G(\cH_1,\ldots,\cH_k) = \{G(h_1,\ldots,h_k) : h_i\in \cH_i\}.$
For example, if $G(x_1,x_2) = x_1 \land x_2$ then 
$G(\cH_1,\cH_2) = \cH_1 \land \cH_2 = \{h_1\land h_2 : h_i\in\cH_i\}$ 
is the class of all pairwise intersections/conjunctions of a function from $\cH_1$ and a function from $\cH_2$.

\begin{theorem}[A Closure Theorem for the Littlestone Dimension]\label{thm:littlestone} %%CHANGED UNTIL TODO%%
Let $G:\{0,1\}^k\to\{0,1\}$ be a boolean function,
let~$\cH_1,\ldots,\cH_k\subseteq\{0,1\}^X$ be classes, and let $d\in\mathbb{N}$
such that~$\ldim{\cH_i} \leq d$ for every $i\leq k$. Then,
%There is a function $c_G:\mathbb{N}^k\to\mathbb{N}$ such that the following holds.
%Let~$\cH_1,\ldots,\cH_k\subseteq\{0,1\}^X$ be classes with $\ldim{\cH_i} = d_i$. Then,
\[   \ldim{G(\cH_1,\ldots,\cH_k)} \leq \tilde O(2^{2k}k^2d),\]
where $\tilde O$ conceals polynomial factors in $\log k$ and $\log d$.
\end{theorem}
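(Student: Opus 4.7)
I would prove the claim by a combinatorial argument on a witness Littlestone tree. Set $L := \ldim{G(\cH_1,\ldots,\cH_k)}$ and fix a maximal Littlestone (mistake) tree $T$ of depth $L$ for $G(\cH_1,\ldots,\cH_k)$, where each internal node $v$ carries a query point $x_v\in X$. For each leaf $\lambda$, fix a decomposition of the witnessing hypothesis as $g_\lambda = G(h_1^\lambda,\ldots,h_k^\lambda)$ with $h_i^\lambda \in \cH_i$. At every internal node $v$ and every leaf $\lambda$ below $v$ this defines a \emph{local label vector}
\[
  w_v^\lambda := (h_1^\lambda(x_v),\ldots,h_k^\lambda(x_v)) \in \{0,1\}^k,
\]
constrained by $G(w_v^\lambda) = \epsilon(v,\lambda)$, where $\epsilon(v,\lambda)\in\{0,1\}$ is the path bit at $v$. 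Hence at each $v$, leaves in the left subtree contribute vectors from $G^{-1}(0)$ (at most $2^k$ values) and leaves in the right subtree contribute vectors from $G^{-1}(1)$ (at most $2^k$ values), giving at most $2^{2k}$ possible $(a,b)$ pairs per node.

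\textbf{Core reduction.}
The heart of the proof is a tree-Ramsey / regular-subtree pruning of $T$ to a subtree $T^\star$ of depth $L^\star = \tilde\Omega(L/2^{2k})$ satisfying the following regularity: at every internal node $v$ of $T^\star$ there exist fixed vectors $a_v\in G^{-1}(0)$ and $b_v\in G^{-1}(1)$ such that $w_v^\lambda = a_v$ for every leaf $\lambda$ of $T^\star$ in $v$'s left subtree and $w_v^\lambda = b_v$ for every leaf in the right subtree. Because $G(a_v)\neq G(b_v)$, we have $a_v\neq b_v$, so there is always some coordinate $c(v)\in[k]$ with $a_{v,c(v)}\neq b_{v,c(v)}$. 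A second (now standard) pigeonhole over the $k$ colors on internal nodes locates a single coordinate $i^\star\in[k]$ and a subtree $T^{\star\star}\subseteq T^\star$ of depth $L^{\star\star}=\tilde\Omega(L^\star/k)$ in which $c(v)=i^\star$ at every internal node. Reading off the $i^\star$-th coordinate of $a_v$ and $b_v$ as the left and right edge labels of $T^{\star\star}$ exhibits it as a Littlestone tree for $\cH_{i^\star}$ witnessed by the $h_{i^\star}^\lambda$; hence $L^{\star\star}\leq \ldim{\cH_{i^\star}}\leq d$. Chaining the inequalities and absorbing the polylog slack from the Ramsey step into $\tilde O(\cdot)$ yields $L = \tilde O(2^{2k} k^2 d)$.

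\textbf{Main obstacle.}
The bottleneck is the regular-subtree pruning that produces $T^\star$. A vanilla monochromatic-subtree Ramsey would guarantee only a subtree of depth $O(\log L)$, which is far too weak. What is required is a depth-preserving pruning: roughly, process the tree level by level, using pigeonhole over the $\leq 2^{2k}$ possible $(a,b)$ pairs at each surviving node to fix a common choice, while retaining a constant fraction of the leaves below each fixed node so that further pruning remains possible. Maintaining the invariant that sufficiently many leaves survive in each subtree—so that the depth shrinks by only a factor of roughly $2^{2k}$ rather than exponentially—is the delicate part; the polylog factors hidden in $\tilde O$ come directly from this bookkeeping and from iterating the pigeonhole across the $\sim\log L$ levels.
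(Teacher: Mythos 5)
You have put your finger on the right bottleneck yourself, but I believe you have badly underestimated how lossy the ``regular-subtree pruning'' step actually is, and as a result the bound you would get out of this argument is exponential in $d$, not linear. The coloring you set up assigns to each pair (internal node $v$, descendant leaf $\lambda$) a color $w_v^\lambda\in\{0,1\}^k$, and you want a subtree in which this color is constant on each half below each internal node. This is a Halpern--L\"auchli/Milliken-style tree Ramsey statement, and those are quantitatively very weak. Concretely, follow the recursion: at the root, fix $(a_r,b_r)$; the left half of the leaves is $\leq 2^k$-colored by $\phi_r$, and even the \emph{tight} bound (the multi-color analogue of the paper's union argument in Proposition~4.2, namely that $\sum_i d_i\geq L-c+1$) only yields a monochromatic color-$a_r$ subtree in the left half of depth $\geq (L-1)/2^k-O(1)$, and similarly on the right. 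So after regularizing a \emph{single} level of the subtree, the available depth has shrunk by a factor of roughly $2^k$, not by an additive term. Iterating, $F(L)\geq 1+F(L/2^k-O(1))$, which gives $F(L)=\Theta(\log L/k)$. Combined with your second pigeonhole (depth divided by $k$) and the final inequality $L^{\star\star}\leq d$, this yields $\log L\leq O(dk^2)$, i.e.\ $L\leq 2^{O(dk^2)}$ --- exponential in $d$, far from $\tilde O(2^{2k}k^2 d)$. The ``retain a constant fraction of leaves'' heuristic cannot repair this: pinning the pair $(a_v,b_v)$ exactly requires restricting to a single color class, and the adversary is free to make each color class have density exactly $1/2^k$; keeping a constant fraction without pinning the color forfeits regularity.

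The paper's actual proof sidesteps this combinatorial obstruction entirely by being \emph{algorithmic}. They first handle $G=\maj_k$: letting $\cH=\bigcup_i\cH_i$ (so $\ldim{\cH}\leq 3d+3\log k$ by Proposition~4.3), they observe that on any sequence realizable by $\maj_k(\cH,\dots,\cH)$ one of the $k$ voters errs on at most a $(1/2-1/k)$ fraction of rounds, so the agnostic online learner for $\cH$ with regret $O(\sqrt{\ldim{\cH}\,T\log T})$ is a \emph{weak} online learner for $\maj_k(\cH,\dots,\cH)$; online boosting amplifies this to a mistake bound of $\tilde O(k^2\ldim{\cH})$, hence $\ldim{\maj_k(\cH,\dots,\cH)}\leq\tilde O(k^2 d)$. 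A general $G$ is then written as a DNF with $m\leq 2^k$ terms, each term and the top-level OR realized as a padded $\maj$ gate, and applying the $\maj$ bound twice gives $\tilde O(m^2 k^2 d)=\tilde O(2^{2k}k^2 d)$. The critical resource there --- the existence of a low-regret agnostic online learner and an online booster, i.e.\ \Cref{eq:ldimonline} and the realizable online boosting framework --- is what lets one exploit Littlestone structure with only a polynomial loss; the bare shattered-tree characterization, which is all your argument uses, does not seem to suffice. (The paper even remarks that it does not know how to prove \Cref{thm:littlestone} via growth-rate/Sauer-Shelah tools for Littlestone classes, which is close kin to the route you attempt.)
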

In particular, if $\ldim{\cH_i} < \infty$ for all $i\leq d$ then $\ldim{G(\cH_1,\ldots, \cH_k)}<\infty$.
    Consequently, if each of the $\cH_i$'s is online learnable then $G(\cH_1,\ldots, \cH_k)$ is online learnable. 
    We comment that if the aggregating function $G$ is simple then one can obtain better bounds.
    For example, if $G$ is a majority-vote, a $k$-wise OR, or a $k$-wise AND function then
    a bound of $\tilde O(k^2\cdot d)$ holds. (See \cref{sec:littlestonecomp}.)

Another combinatorial parameter which arises in the relationship between online and DP learning
	is the {\it threshold dimension}:
	a sequence $x_1,\ldots, x_k\in X$ is {\it threshold-shattered} by $\cH$
	if there are $h_1,\ldots,h_k\in \cH$ such that $h_i(x_j) = 1$ if and only if $i\leq j$ for all~$i,j\leq k$.
	The {\it threshold dimension}, $T(\cH)$ is the maximum size of a sequence that is threshold-shattered by $\cH$.
	The threshold dimension plays a key role in showing that DP learnable classes
	have a finite Littlestone dimension~\citep{AlonLMM19}. A classical theorem by~\cite{Shelah78classification} 
	in model theory shows that the Littlestone and the threshold dimensions are exponentially related.\footnote{
	The threshold dimension may be interpreted as a combinatorial abstraction of the geometric notion of {\it margin}.
	Under this interpretation, Shelah's result may be seen as an extension of the classical Perceptron's mistake-bound analysis by~\cite{perceptron}.} In particular $\ldim{\cH}<\infty$ if and only if $T(\cH)<\infty$.
	(See \Cref{thm:shelah} in the preliminaries section.)
	We prove the following closure theorem in terms of the threshold dimension.

\begin{theorem}[A Closure Theorem for the Threshold Dimension]\label{thm:thresholdscomp}
Let $G:\{0,1\}^k\to\{0,1\}$ be a boolean function, let~$\cH_1,\ldots,\cH_k\subseteq\{0,1\}^X$ be classes, and let $t\in\mathbb{N}$
such that~$T(\cH_i) < t$ for every $i\leq k$. Then,
\[T\bigl(G(\cH_1,\ldots,\cH_k)\bigr) <  2^{4 k4^{k}\cdot t}.\]
Moreover, an exponential dependence in $t$  is necessary:
	for every~$t \geq 6$ there exists a class $\cH$ such that $T(\cH)\leq t $ %~$T(\cH) \leq (4+\frac{2}{t})t\leq 6t$ 
	and 
	\[T\Bigl(\{h_1\lor h_2 : h_1,h_2\in\cH\}\Bigr) \geq  2^{\lfloor t/5 \rfloor}.\] % 2^t.\]
\end{theorem}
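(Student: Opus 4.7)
The statement has two parts. I would prove the upper bound via a Ramsey-theoretic argument and the lower bound via an explicit construction.

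\emph{Upper bound.} Suppose, toward contradiction, that $x_1,\dots,x_N$ is threshold-shattered by $G(\cH_1,\dots,\cH_k)$ via witnesses $f_i = G(h_{i,1},\dots,h_{i,k})$ with $h_{i,l}\in\cH_l$. For each ordered pair of distinct indices $(i,j)$ record the $k$-bit pattern
\[
\sigma_{i,j} = \bigl(h_{i,1}(x_j),\dots,h_{i,k}(x_j)\bigr) \in \{0,1\}^k,
\]
and color every unordered pair $\{i,j\}$ with $i<j$ by $(\sigma_{i,j},\sigma_{j,i})\in\{0,1\}^{2k}$; the number of colors is at most $4^k$. Applying the classical multi-color Ramsey bound $R_c(s)\leq c^{c(s-1)}$ with $c=4^k$ and $s=t+1$ produces a monochromatic subset $I\subseteq[N]$ of size $t+1$ once $N\geq 2^{2k\cdot 4^k\cdot t}$, which is well within the bound $2^{4k\cdot 4^k\cdot t}$ claimed by the theorem.

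On such an $I$, every pair $i<j$ shares the same color $(\sigma,\tau)$; the threshold-shattering condition forces $G(\sigma)=1\neq 0 = G(\tau)$, so $\sigma\neq\tau$, and some coordinate $l\in[k]$ satisfies $\sigma_l\neq\tau_l$. After possibly reversing the order on $I$ to reduce to $\sigma_l=1,\tau_l=0$, the $l$-th functions $(h_{i,l})_{i\in I}$ \emph{strictly} threshold-shatter $(x_i)_{i\in I}$ in $\cH_l$: $h_{i,l}(x_j)=1$ iff $i<j$ for $i,j\in I$. Enumerating $I=\{i_1<\dots<i_{t+1}\}$ and pairing $h_{i_p,l}$ with $x_{i_{p+1}}$ for $p=1,\dots,t$ converts this into an ordinary threshold-shattering of $t$ points inside $\cH_l$, contradicting $T(\cH_l)<t$.

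\emph{Lower bound.} Here the plan is to build, for each $t\geq 6$, a class $\cH$ on $X=\{1,\dots,2^m\}$ with $m=\lfloor t/5\rfloor$ satisfying (i) every tail set $\{j:j\geq i\}$ decomposes as a union of two members of $\cH$, so that the $2$-fold OR-closure threshold-shatters all $2^m$ points of $X$; and (ii) no nested chain of threshold witnesses inside $\cH$ itself has length more than $t$. A natural candidate is a family of structured intervals cut out of a binary-tree partition of $[2^m]$, augmented by boundary pieces that let two members of $\cH$ jointly encode the $m$-bit binary representation of the tail's endpoint; the constant $5$ in the exponent absorbs the encoding overhead together with the depth of nesting needed to bound $T(\cH)$.

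\emph{Main obstacle.} The upper bound is essentially routine Ramsey theory once the right $2k$-bit ``type'' coloring is isolated. The real difficulty is the lower bound: one must design a class narrow enough to have small threshold dimension yet whose disjunction-closure is exponentially richer. Most naive restrictions that cap $T(\cH)$---bounded-length intervals, sparse sets, dyadic intervals---unfortunately also cap $T(\cH\lor\cH)$ by the same order of magnitude, so the construction has to exploit the asymmetry between nested threshold chains inside $\cH$ and two-piece disjunctive decompositions of tail sets.
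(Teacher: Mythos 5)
Your upper-bound argument follows the same Ramsey-theoretic route as the paper's, but it has a quantitative gap that makes the conversion back to threshold shattering fail. From the monochromatic set $I=\{i_1<\dots<i_{t+1}\}$ you only know $h_{i,l}(x_j)$ for \emph{distinct} $i,j\in I$; the diagonal values $h_{i,l}(x_i)$ are not constrained by the coloring. Your shift ``pair $h_{i_p,l}$ with $x_{i_{p+1}}$'' does not dodge the diagonal: when $p=q+1$ you are evaluating $h_{i_{q+1},l}(x_{i_{q+1}})$, which is unknown, and this happens for $t-1$ of the pairs $(p,q)$. In general, off-diagonal threshold data on $n$ indices only lets you extract a genuine threshold chain by choosing interleaved index sets $a_1<b_1<a_2<b_2<\dots<a_m<b_m$, which forces $m\le\lfloor n/2\rfloor$; so a size-$(t{+}1)$ monochromatic set only certifies $T(\cH_l)\ge\lfloor(t+1)/2\rfloor$, not $T(\cH_l)\ge t$. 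The fix is to take a monochromatic set of size $2t$ (which is exactly what the paper does, writing $A=\{i_1<j_1<\dots<i_t<j_t\}$ and pairing functions at the $i$'s with points at the $j$'s); the Ramsey bound $R_{4^k}(2t)\le (4^k)^{4^k(2t-1)}=2^{2k\cdot4^k(2t-1)}$ still fits under $2^{4k\cdot4^k t}$, so only the set size needs changing, not the final bound.

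For the lower bound your text is a plan, not a proof: you propose a deterministic construction (a binary-tree interval family with boundary ``encoding'' pieces) and explicitly flag, without resolving, the central difficulty that naive restrictions bounding $T(\cH)$ also bound $T(\cH\lor\cH)$. The paper sidesteps exactly this obstacle with a \emph{probabilistic} construction: take $m=2^{\lfloor t/5\rfloor}$, and build $\cH=\{f_1,\dots,f_m,g_1,\dots,g_m\}$ on domain $[m]$ by setting $f_i(j)=g_i(j)=0$ for $j>i$ and, independently for each $j\le i$, flipping a fair coin to decide which one of $f_i(j),g_i(j)$ is $1$ (the other $0$). Then $f_i\lor g_i=\1_{\{1,\dots,i\}}$ deterministically, so $\cH\lor\cH$ threshold-shatters $m$ points, while a union bound over pairs of $k$-subsets shows that with positive probability no $k\times k$ all-ones block appears, which forces $T(\cH)\le 2k\le t$ for $k=2\lfloor t/5\rfloor+1$. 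Your deterministic ``binary-tree plus boundary encoding'' idea might be salvageable, but as written you neither construct the class nor verify either required bound, so the lower bound is genuinely missing from your proposal.
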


Note that the bounds in \Cref{thm:littlestone} and \Cref{thm:thresholdscomp} escalate rapidly with $k$ (the arity of $G$) and with~$t$. 
	It will be interesting to determine tight bounds.
%	Furthermore, it will be interesting to derive an algorithmic proof of \Cref{thm:littlestone},
%	which directly bounds the regret/mistake-bound of $G(\cH_1\ldots\cH_k)$
%	in terms of the regret/mistake-bound of the~$\cH_i$'s.}

\vspace{2mm}
	
By \cite{AlonLMM19,Bun20Privateonline}, \Cref{thm:littlestone} also implies closure properties for DP-learnable classes.
	However,  the quantitative bounds are even worse:
	not only do the bounds on the Littlestone dimension of $G(\cH_1,\ldots, \cH_k)$ escalate rapidly with $d$ and $k$,
	also the quantitative relationship between the Littlestone dimension and DP-learning sample complexity
	is very loose, and the best bounds exhibit a tower-like gap between the upper and lower bounds.
	For example, if the class of functions $\cH$ is finite and its Littlestone dimension is $\omega(\log\log |\cH|)$, then
	the bound of \Cref{thm:littlestone} is most likely to be much worse than the generic application of the exponential 
	mechanism, whose sample complexity is the logarithm of the size of the class. 
	We therefore explore the closure properties  of differentially-private learning algorithms directly
	and  derive the following bound. %\shay{I think the statement ``\ldots the class of functions is finite and its Littlestone dimension is $\omega(1)$'' may confuse some readers; it implicitly assumes a non-uniform setting where one considers a parametrized sequence of finite classes which is not so standard for some COLT readers (such as myself). Perhaps write directly that the exponential mechanism gives a $\log\lvert \cH\rvert$ dependence and thus the Littlestone bounds are  	informative only when the Littlestone dimension is much smaller than $\log\lvert \cH\rvert$\ldots}

\begin{theorem}[A Closure Theorem for Private Learning (informal)]\label{thm:private}
Let $G:\{0,1\}^k\to\{0,1\}$ be a boolean function.
Let~$\cH_1,\ldots,\cH_k\subseteq\{0,1\}^X$ be classes 
that are $(\epsilon,\delta)$-differentially private and $(\alpha,\beta)$-accurate learnable   with sample complexity $m_i$ respectively. 
Then, $G(\cH_1,\ldots,\cH_k)$ is $(\eps,\delta)$-private and $(\alpha,\beta)$-accurate learnable with sample complexity
$$ \tilde{O}\left( \sum_{i=1}^k m_i\right)\cdot \poly(k,1/\epsilon,1/\alpha,\log(1/\beta)).$$
\end{theorem}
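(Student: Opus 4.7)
The plan is to carry out an algorithmic reduction from privately learning $G(\cH_1,\ldots,\cH_k)$ to privately learning each $\cH_i$. The reduction rests on two pillars: (i) a realizable-to-agnostic transformation for private learners (also promised in the abstract), and (ii) an iterative ``relabel-and-learn'' procedure that, for each coordinate $i$, privately produces a labeling of the sample consistent with some $h_i \in \cH_i$ and then invokes the private learner for $\cH_i$ on that labeling. These two steps correspond to the subroutines \AlgPrivateAgnostic, \AlgRelabel, and \AlgClosureLearn{} declared in the preamble.

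First I would establish the realizable-to-agnostic reduction for each $\cH_i$. The key idea is that an agnostic sample can be turned into a (nearly) realizable one by selecting a maximal subset of points that is consistent with some $h \in \cH_i$; doing this selection in a stable, privacy-preserving manner, using standard DP tools such as stability, subsampling, and the exponential mechanism applied on top of the realizable learner, yields an $(\epsilon,\delta)$-DP agnostic learner for $\cH_i$ whose sample complexity is only a $\poly(1/\epsilon,1/\alpha,\log(1/\beta))$ factor worse than the realizable one.

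Next, given a sample $S = \{(x_j, y_j)\}_{j=1}^m$ labeled by $h' = G(h_1,\ldots,h_k)$, I would process the coordinates $i = 1,\ldots,k$ in turn. At step $i$, the algorithm privately recovers a labeling $b_i \in \{0,1\}^m$ meant to approximate $(h_i(x_1),\ldots,h_i(x_m))$. Crucially, for every $j$ the vector $(h_1(x_j),\ldots,h_k(x_j))$ must lie in the preimage $G^{-1}(y_j)$, a set of size at most $2^{k-1}$, so only combinatorially few candidate internal labelings per point are consistent with the observed label; the relabeling subroutine exploits this (for instance by private selection over a small collection of candidate extensions, combined with calls to the agnostic learner built in the previous paragraph) to recover $b_i$. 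Feeding the relabeled sample to the agnostic DP learner for $\cH_i$ produces $\hat h_i$ that is $\alpha/k$-close to $h_i$ under the underlying distribution. The final output is $G(\hat h_1,\ldots,\hat h_k)$, and a union bound shows that it is $\alpha$-close to $h'$; the loss of $k$ in the accuracy parameter is absorbed in the $\poly(k,1/\alpha)$ factor.

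The main obstacle is bounding the privacy cost across the $k$ rounds while keeping the total sample complexity \emph{additive} in the $m_i$'s rather than multiplicative. I would run each round with privacy parameter $\epsilon / \Theta(\sqrt{k \log(1/\delta)})$ and invoke advanced composition, using privacy-amplification-by-subsampling so that each invocation of the subroutine for $\cH_i$ consumes only a $\tilde O(m_i)$-sized subsample of $S$. A secondary obstacle is arguing that the recovered relabeling is \emph{correct} on enough points despite ignorance of the other hypotheses $h_{\neq i}$: here I would induct on $i$, using the fact that at every step the true $h_i$ is always a feasible candidate, so the agnostic learner applied to any admissible relabeling cannot incur error much larger than that of $h_i$ itself. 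Combining these ingredients gives the claimed sample complexity $\tilde O\bigl(\sum_i m_i\bigr) \cdot \poly(k, 1/\epsilon, 1/\alpha, \log(1/\beta))$.
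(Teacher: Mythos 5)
Your high-level plan (iteratively relabel the sample with a pattern from $\cH_i$, hand it to the private learner for $\cH_i$, and compose over $i=1,\ldots,k$) is the same as the paper's, which implements it via $\AlgRelabel$ and $\AlgClosureLearn$. However there are several substantive gaps between what you sketch and what actually goes through.

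\textbf{Wrong inductive invariant.} You want to privately recover a labeling $b_i\approx (h_i(x_1),\dots,h_i(x_m))$ so that $\hat h_i$ is $\alpha/k$-close to the \emph{true} $h_i$. This is generally impossible: from $y_j = G(h_1(x_j),\dots,h_k(x_j))$ you cannot disentangle the individual $h_i(x_j)$'s. The set $G^{-1}(y_j)$ can contain up to $2^{k-1}$ candidate bit-vectors \emph{per point}, and these ambiguities compound across points, so ``$h_i$ is a feasible candidate'' does not make the learned $\hat h_i$ close to $h_i$. The paper's inductive invariant is weaker and is what makes the argument work: after step $i$ the algorithm has hypotheses $h_1,\dots,h_i$ for which there \emph{exists} some completion $c_{i+1}\in\cH_{i+1},\dots,c_k\in\cH_k$ with $\error_\DDD\bigl(G(h_1,\dots,h_i,c_{i+1},\dots,c_k)\bigr)$ small. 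Correspondingly, the score used in the exponential mechanism in $\AlgRelabel$ is $q(S_i,h)=|S_i|\min_{c_{i+1},\dots,c_k}\error_{S_i}(G(h_1,\dots,h_{i-1},h,c_{i+1},\dots,c_k))$, not a consistency count. Your proposal never introduces this ``best-completion'' score, and without it the induction has nothing to carry forward.

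\textbf{Data-dependent exponential mechanism.} The set of candidate patterns is $H=\Pi_{\cH_i}(P)$ where $P$ is the set of points appearing in the current block. This candidate set is \emph{data dependent}, so the off-the-shelf privacy guarantee of the exponential mechanism does not apply. The paper handles this with the matched-sensitivity notion and a bespoke privacy lemma (\Cref{lemma:RelabelNewPrivacy}), showing a constant extra $\epsilon$ (order $3$) and a multiplicative $O(1)$ loss in $\delta$. Waving at ``standard DP tools such as stability, subsampling, and the exponential mechanism'' does not close this gap.

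\textbf{The improper case.} For improper learners $\Alg_i$, you cannot conclude that $\error_\DDD(\hat h_i, h)$ is small from $\error_{\tilde D}(\hat h_i)$ small by VC arguments, because $\hat h_i$ need not lie in $\cH_i$. The paper's central technical device here is to split each block as $S_i = D_i\circ T_i$, publicly release the relabeled $T_i$ (showing in \Cref{lemma:RelabelNewPrivacy} that this keeps the procedure private w.r.t.\ $D_i$), pick $\overline h\in\cH_i$ consistent with the released part, and then invoke DP generalization (\Cref{thm:GeneralizationMultiplicative}) on the predicate $\1\{\overline h(x)\neq \hat h_i(x)\}$ to transfer from empirical to population error. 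Your proposal does not mention any of this; the agnostic learner you build in the first step doesn't resolve it either, since the issue is controlling the distance between the \emph{output} of the (possibly improper) learner and the hypothesis used to relabel.

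\textbf{Composition vs.\ disjointness.} You propose advanced composition with per-round $\epsilon/\Theta(\sqrt{k\log(1/\delta)})$ plus subsampling. The paper instead partitions $S$ into $k$ \emph{disjoint} blocks $S_1,\dots,S_k$, so the $k$ rounds operate on disjoint data and privacy does not degrade with $k$. Both can be made to work, but the disjoint approach is what keeps the final sample complexity additive in $\sum_i m_i$ without a $\sqrt{k}$ blow-up in the privacy budget; your plan would need more care to recover the stated bound, and as written the interaction between subsampling and the data-dependent candidate set $H$ is not analyzed.
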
	

The exact quantitative satement of the results appears in \Cref{thm:privacy}.
We remark that closure properties for {\em pure}
differentially-private learning algorithms (i.e., when $\delta=0$) are implied by the characterization of~\citep{Beimel19Pure}. Similarly, closure properties for {\em non-private} PAC learning are implied by
the characterization of their sample complexity in terms of the VC dimension and by the Sauer-Shelah-Perles Lemma~\citep{Sauer72lemma}. However, since there is no tight characterization of the sample complexity of approximate differentially-private learning algorithms (i.e., when $\delta>0$), we prove \cref{thm:private} algorithmically by constructing a (non-efficient) learning algorithm for $G(\cH_1,\ldots,\cH_k)$
from private learning algorithms for $\cH_1,\ldots,\cH_k$.

\medskip
%\cite{BeimelNS15} proved that {\em proper} private learning  algorithm for a class $\cH$ in the realizable case (i.e., when the examples are labeled by some $h \in \cH$) implies  {\em proper} private learning algorithm for the class $\cH$ in the agnostic case (i.e., when the examples are labeled arbitrarily and the goal is to find a hypothesis whose error is close to the smallest error of a hypothesis in $\cH$). 

\cite{BeimelNS15} proved that any {\em proper} private learning 
algorithm in the realizable case\footnote{That is,
when the examples are labeled by some $h \in \cH$.} can be transformed into an agnostic\footnote{That is, when the examples are labeled arbitrarily and the goal is to find a hypothesis whose error is close to the smallest error of a hypothesis in $\cH$.} private learning algorithm, with only a mild increase in the sample complexity. 
We show that the same result holds even for {\em improper} private learning (i.e., when the private learning algorithm can return an arbitrary hypothesis). 
\begin{theorem}[Private Learning Implies Agnostic Private Learning]
\label{thm:agnostic}
For every $0<\alpha,\beta,\delta<1$, every $m\in\N$, and every concept class $ \cH$, if there exists a 
$(1,\delta)$-differentially private $(\alpha,\beta)$-accurate PAC learner for the hypothesis class $ \cH$
with sample complexity $m$, then there exists an $(O(1),O(\delta))$-differentially private $(O(\alpha),O(\beta+\delta n))$-accurate {\em agnostic} learner for  $ \cH$ with sample complexity
$$n=O\Bigg(m + \frac{1}{\alpha^2}\left(\VC( \cH)+\log\frac{1}{\beta}\right)\Bigg).$$
Furthermore, if the original learner is proper, then the agnostic learner is proper.
\end{theorem}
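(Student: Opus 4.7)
The plan is to extend the relabel-and-learn reduction of~\cite{BeimelNS15} from the proper to the improper setting. The starting observation is that this reduction never uses the fact that the realizable learner's output lies in $\cH$: for any hypothesis $h^\ast$ whose empirical error on the agnostic sample $S$ is close to $\min_{h\in\cH}\mathrm{err}_S(h)$, the sample $S'$ obtained by relabeling $S$ with $h^\ast$ is realizable by $h^\ast$, and any realizable learner succeeding on $S'$ returns $\hat h$ with $\mathrm{err}_S(\hat h)\le\mathrm{err}_S(h^\ast)+\alpha$. Hence the improperness of $\Alg$ is compatible with the reduction, and the task reduces to (i)~privately selecting a good $h^\ast$ and (ii)~privately invoking $\Alg$ on the relabeling.

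Concretely, I partition the agnostic sample $S$ into two disjoint parts $S_1$, of size $\Theta\bigl(\tfrac{1}{\alpha^2}(\VC(\cH)+\log(1/\beta))\bigr)$, and $S_2$, of size $m$. On $S_1$ I run the exponential mechanism to select $h^\ast\in\cH$ with approximately minimal empirical error on $S_1$; by the Sauer-Shelah-Perles lemma the number of dichotomies induced on $S_1$ by $\cH$ is at most $O(|S_1|^{\VC(\cH)})$, so this step incurs additive error $O\bigl((\VC(\cH)\log|S_1|+\log(1/\beta))/|S_1|\bigr)$, and combined with uniform convergence of $\cH$ on $S_1$ this yields $\mathrm{err}_D(h^\ast)\leq\mathrm{OPT}+O(\alpha)$ with probability $1-O(\beta)$. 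I then relabel $S_2$ using $h^\ast$ to form a sample $S_2'$, which is realizable by $h^\ast$ and drawn iid from the distribution $(D_X,h^\ast)$, and output $\hat h := \Alg(S_2')$. $\Alg$'s realizable accuracy guarantee on $S_2'$ gives $\mathrm{err}_{(D_X,h^\ast)}(\hat h)\le\alpha$, hence $\mathrm{err}_D(\hat h)\le\mathrm{err}_D(h^\ast)+\alpha\le\mathrm{OPT}+O(\alpha)$. If $\Alg$ is proper then so is $\hat h$, yielding the ``furthermore'' part.

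Privacy follows by parallel composition since $S_1$ and $S_2$ are disjoint: the exponential mechanism is pure $1$-DP with respect to $S_1$, and conditional on $h^\ast$ the relabeling map $S_2\mapsto S_2'$ acts per-example, so $\Alg$ remains $(1,\delta)$-DP with respect to $S_2$. The whole algorithm is therefore $(O(1),O(\delta))$-DP in $S$. The main technical obstacle will be the careful bookkeeping of failure probabilities and transferring $\Alg$'s $(1,\delta)$-DP accuracy guarantee uniformly over the possible outcomes of the exponential-mechanism selection; the $O(\delta n)$ slack in the failure probability absorbs the $\delta$-mass that arises when $\Alg$'s accuracy guarantee is carried across neighboring realizable inputs of size up to~$n$ via a standard group-privacy argument, which is the usual price for using an approximate-DP subroutine inside an accuracy analysis.
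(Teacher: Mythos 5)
Your proposal takes a genuinely different route from the paper's, and the high-level idea is attractive. You split the agnostic sample into two disjoint halves $S_1,S_2$, select $h^\ast$ using only $S_1$, relabel the fresh half $S_2$ with $h^\ast$, and invoke $\Alg$ on $S_2'$. Because $h^\ast$ depends only on $S_1$ while the $x$-values in $S_2$ are an independent i.i.d.\ draw from $D_X$, the relabeled $S_2'$ is a bona fide i.i.d.\ sample from $(D_X,h^\ast)$, so $\Alg$'s realizable guarantee applies directly and yields $\error_{D_X}(\hat h,h^\ast)\le\alpha$. This neatly sidesteps the central difficulty the paper confronts: there, the same database $D$ is scored by the exponential mechanism and fed (relabeled) to $\Alg$, so the relabeled sample is no longer i.i.d.\ from $D_X$; the paper repairs this with subsampling, which only gives an empirical-error bound, and then uses the generalization property of differential privacy (plus an auxiliary database $T$ and a consistent hypothesis $\overline{h}$) to translate empirical error into population error for the improper output $f$. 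Your disjoint-split approach, if carried out carefully, would avoid that machinery entirely and would even give failure probability $O(\beta)$ rather than $O(\beta+\delta n)$, since the $\delta n$ term in the paper comes specifically from the DP-generalization theorem (Theorem~\ref{thm:GeneralizationMultiplicative}), not from any group-privacy bookkeeping — your explanation of where $\delta n$ comes from is a misattribution, and in your scheme it should not arise at all.

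The genuine gap is in the privacy argument. You assert that ``the exponential mechanism is pure $1$-DP with respect to $S_1$,'' but this is exactly the point the paper warns is \emph{not} automatic: for infinite $\cH$ you cannot run the exponential mechanism over $\cH$ itself (the partition function may diverge), so you must run it over the dichotomies $\Pi_\cH(\cdot)$, and then the solution set is \emph{data-dependent}, so the standard privacy guarantee of the exponential mechanism does not apply out of the box. This is precisely what Lemma~\ref{lemma:RelabelNewPrivacy} handles, and it is why the final algorithm is only $O(1)$-DP even though the mechanism is run with parameter $1$. A second, related gap is that you never say how a dichotomy on $S_1$ is extended to a relabeling of $S_2$: if you run the mechanism on $\Pi_\cH(S_1)$ you must fix a (data-independent) extension rule from each pattern to a hypothesis in $\cH$, and the neighboring-database analysis must then track how these extensions behave on $S_2$ when one point of $S_1$ is swapped; alternatively, if you run the mechanism on $\Pi_\cH(S_1\cup S_2)$ so that the pattern already determines the labels on $S_2$ (as $\AlgRelabel$ does), then the outcome set also depends on $S_2$ and the claim that $S_2'$ is i.i.d.\ from $(D_X,h^\ast)$ conditional on $h^\ast$ requires a separate justification. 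Finally, ``parallel composition'' is not the right tool here, since $\Alg$'s input $S_2'$ is a function of the mechanism's output and hence of $S_1$; what is needed is a post-processing/adaptive composition argument over the two disjoint pieces, made rigorous in the presence of the data-dependent solution set. None of these obstacles seems fatal — a Lemma~\ref{lemma:RelabelNewPrivacy}-style analysis should go through — but your proposal does not supply that argument, and the one-line privacy justification you give is incorrect as stated.
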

We obtain this result by showing that   
a variant of the transformation of \citep{BeimelNS15} also works for the improper case; we do not know if the original 
 transformation of \citep{BeimelNS15} also works for the improper case. Our analysis of the transformation for the improper case is more involved than  the analysis for the proper case.

\subsection{Technical Overview}\label{sec:technical}

\subsubsection{Closure for Littlestone Dimension} 

Our proof of \Cref{thm:littlestone} exploits tools from online learning. 
It may be instructive to compare \Cref{thm:littlestone} with an analogous result for VC classes:
    a classical result by \cite{dudley1978} upper bounds the VC dimension of $G(\cH_1,\ldots,\cH_k)$ 
	by $\tilde O(d_1+\cdots+ d_k)$, where $d_i$ is the VC dimension of $\cH_i$.
	The argument uses the Sauer-Shelah-Perles Lemma~\citep{Sauer72lemma} to bound the growth-rate 
	(a.k.a.\ shatter function) of $G(\cH_1,\ldots, \cH_k)$ by some $n^{d_1+\cdots + d_k}$:
	indeed, if we let 
	$n=\VC(G(\cH_1,\ldots, \cH_k)),$
	then by the definition of the shatter function,
	$2^n \leq n^{d_1+\cdots + d_k}$,
	which implies that $n=\tilde O(d_1+\cdots +d_k)$ as stated.
    	It is worth noting that a notion of growth-rate as well as 
	a corresponding variant of the Sauer-Shelah-Perles Lemma also exist for Littlestone classes~\citep{bhaskar2017thicket,chase2018banned}. %CHANGED
	However we are not aware of a way of using it to prove \Cref{thm:littlestone}.

We take a different approach. 
	We first focus on the case where $G$ is a majority-vote.
	That is, the class $\cH=G(\cH_1,\ldots,\cH_k)$ consists of all $k$-wise majority-votes of experts $h_i\in\cH_i$.
	We bound the Littlestone dimension of $\cH$ by exhibiting an online learning algorithm $A$ 
	that learns $\cH$ in the mistake-bound model with at most $\tilde O(k^2\cdot d)$ mistakes.
	The derivation of $A$ exploits fundamental tools from online learning such as
	the {\it Weighted Majority Algorithm} by \cite{LittlestoneWarmuth94} and {\it Online Boosting}~\citep{Chen2012online,beygelzimmer15optimal,Brukhim20online}.

Then, the bound for a general $G:\{0,1\}^k\to\{0,1\}$ is obtained
	by expressing $G$ as a formula which only uses majority-votes and negations gates.
	The exponential dependence in $k$ in the final bound is a consequence 
	of the formula-size which can be exponential in $k$.
	We do not know whether this exponential dependence is necessary.

\subsubsection{Closure for Threshold Dimension}

Our proof of \Cref{thm:thresholdscomp} is combinatorial. 
	First, note that an inferior bound follows from \Cref{thm:littlestone},
	using the fact that the Littlestone and threshold dimensions are exponentially related (see \Cref{thm:shelah}).
	However this approach yields a super-exponential bound on $T(G(\cH_1,\ldots, \cH_k))$.

The bound in \Cref{thm:thresholdscomp} follows by arguing contra-positively that if $T(G(\cH_1,\ldots, \cH_k))$ is large then
	$T(\cH_i)$ is also ``largish'' for some $i\leq k$.
	Specifically, if $T(G(\cH_1,\ldots, \cH_k))\geq \exp(t\exp(k))$ 
	then $T(\cH_i)\geq t$ for some~$i\leq k$.
	This is shown using a Ramsey argument that asserts that any large enough sequence $x_1,\ldots, x_n$
	that is threshold-shattered by $G(\cH_1\ldots \cH_k)$ must contain a relatively large subsequence
	that is threshold-shattered by one of the $\cH_i$'s. Quantitatively, if $n\geq \exp(t\exp(k))$
	then there must be a subsequence $x_{j_1},\ldots, x_{j_t}$ that is threshold-shattered by one of the $\cH_i$'s.

This upper bounds $T(G(\cH_1,\ldots, \cH_k))$ by some $\exp(t\exp(k))$, where $t=\max_i T(\cH_i)$.
	It is worth noting that, in contrast with \Cref{thm:littlestone}, an exponential dependence here is inevitable:
	we prove in \Cref{thm:thresholdscomp} that for any $t$ there exists a class $\cH$ with $T(\cH)\leq t$
	such that $T(\{h_1\lor h_2 : h_1,h_2\in \cH\})\geq \exp(t)$.
	This lower bound is achieved by a randomized construction.

\subsubsection{Private learning Implies Agnostic Private Learning}

%\paragraph{Private learning Implies Agnostic Private Learning.}

We start by describing the transformation of \citep{BeimelNS15} from a proper private learning 
algorithm of a class $\cH$ to  an agnostic proper private learning 
algorithm for $\cH$. 
Assume that there is a private learning algorithm $\AAA$ for $\cH$ with sample complexity $m$.
The transformation takes a  sample $S$ of size $O(m)$ and constructs all possible behaviors $H$ of functions 
in $\cH$ on the points of the sample (ignoring the labels).
By the Sauer-Shelah-Perles Lemma, the number of such behaviors is at most $\left(\frac{e|S|}{\VC(\cH)}\right)^{\VC(\cH)}$. 
Then, it finds using the exponential mechanism a behavior
$h' \in H$ that minimizes the empirical error on the sample. (The exponential mechanism is guaranteed to identify a behavior with small empirical error because the number of possible behaviors is relatively small.)
Finally, the transformation relabeles the sample $S$ using $h'$  and applies $\AAA$ on the relabeled sample. If $\AAA$ is a proper learning algorithm then,
by standard VC arguments, the resulting algorithm is an agnostic algorithm for $\cH$. The privacy guarantees of the resulting algorithm are more delicate, and it is only $O(1)$-differentially private, even if $\AAA$ is $\epsilon$-differentially private for a small $\epsilon$. (The difficulty in the privacy analysis is the set of behaviors
$H$ is {\em data-dependent}. Therefore, 
%even though the behavior $h' \in H$ is chosen using the exponential mechanism, 
the privacy guarantees of the resulting algorithms {\em are not} directly implied by those of the exponential mechanism, which assume that the set of possible outcomes is fixed and data-independent.)
%\citep{beimel2015learning} use sub-sampleing to reduce $\epsilon$.

When $\AAA$ is improper, we cannot use VC arguments to argue that the resulting algorithm is an agnostic learner.
We rather use the generalization properties of differential privacy (proved in~\citep{DworkFHPRR15,BassilyNSSSU16,RogersRST16,FeldmanS17,NSunpublished,JungLN0SS20}):
%\anote{Add other refs.}
if a differentially private algorithm has a small empirical error on a sample chosen i.i.d.\ from some distribution, then it also has a small generalization error on the underlying distribution (even if the labeling hypothesis is chosen after seeing the sample). There are technical issues in applying  these results in our case that require some modifications in the transformation.   

\subsubsection{Closure for Differentially Private Learning}

%\paragraph{Closure for Differentially Private Learning.}
We prove \Cref{thm:private} by constructing a private algorithm $\AlgClosureLearn$ %(described in \Cref{alg:ClosureLearn}) 
for the class $G(\cH_1,\ldots,\cH_k)$ using private learning algorithms for the classes $\cH_1,\ldots,\cH_k$. Algorithm $\AlgClosureLearn$ uses the relabeling procedure (the one that we use to transform a private PAC learner into a private agnostic learner) in a new setting.

The input to  $\AlgClosureLearn$ is a sample labeled  by some function in $G(\cH_1,\ldots,\cH_k)$. 
The algorithm  finds hypotheses 
$h_1,\ldots,h_k$ in steps, where in  the $i$'th step, the algorithm finds a hypothesis $h_i$ such that 
$h_1,\dots,h_i$ have a  completion $c_{i+1},\dots,c_k$ to a hypothesis
$G(h_1,\dots,h_i,c_{i+1},\dots,c_k)$ with small error (assuming that $h_1,\dots,h_{i-1}$ have a good completion).

Each step of $\AlgClosureLearn$  is similar to the algorithm for agnostic learning described above. That is, in the $i$'th step, $\AlgClosureLearn$ first relabels the input sample $S$ using some $h \in \cH_i$ in a way that guarantees completion to a hypothesis with small empirical error. 
The relabeling $h$ is chosen using the exponential mechanism with an appropriate score function.
The relabeled sample is then fed  to the private algorithm for the class $\cH_i$ to produce a hypothesis $h_i$
and then the algorithm proceeds to the next step $i+1$.
As in the algorithm for agnostic learning, the proof that the hypothesis $G(h_1,\ldots,h_k)$ returned by the algorithm is easier when the private algorithms for $\cH_1,\ldots,\cH_k$ are proper and it is more involved if they are improper.

\section{Preliminaries}
\label{sec:Preliminaries}

This section is organized as follows:
\Cref{sec:ldprel} contains basic definitions and tools related to  the Littlestone dimension 
and \Cref{sec:PrelimPrivacy} contains basic definitions and tools  related to private learning.

%\paragraph{Preliminaries on the Littlestone Dimension.}
\subsection{Preliminaries on the Littlestone Dimension}
\label{sec:ldprel}
The Littlestone dimension is a combinatorial 
parameter that characterizes regret bounds in online learning \citep{Littlestone87online,Bendavid09agnostic}. 
%It can be thought of as an adaptive analogue of the VC-dimension, and we next set out to define it.
The definition of this parameter uses the notion of {\it mistake-trees}:
these are binary decision trees whose internal nodes are labeled by elements of $X$.
%A node that is labeled with $x\in X$ corresponds to the query ``$h(x)=?$''. 
%The right-child of this node corresponds to the answer $h(x)=+1$ and the left child to the answer $h(x)=-1$.
Any root-to-leaf path in a mistake tree can be described as a sequence of examples 
$(x_1,y_1),\ldots,(x_d,y_d)$, where $x_i$ is the label of the $i$'th 
internal node in the path, and $y_i=$ if the $(i+1)$'th node  
in the path is the right child of the $i$'th node, and otherwise $y_i = 0$.
We say that a tree $T$ is {\it shattered }by $\cH$ if for any root-to-leaf path
$(x_1,y_1),\ldots,(x_d,y_d)$ in $T$ there is $h\in \cH$ such that $h(x_i)=y_i$, for all $i\leq d$.
The Littlestone dimension of $\cH$, denoted by $\ldim{\cH}$, is the depth of the largest
complete tree that is shattered by~$\cH$.

\begin{definition}[Subtree]\label{def:subtree}
Let $T$ be labeled binary tree.
We will use the following notion of a {\it subtree} $T'$ of depth $h$ of $T$ by induction on $h$:
\begin{enumerate}
\item	 Any leaf of $T$ is a subtree of height $0$. 
\item	 For $h \geq 1$ a subtree of height $h$ is obtained from an internal vertex of $T$ 
	 together with a subtree of height $h-1$ of the tree rooted at its left child 
	 and a subtree of height $h-1$ of the tree rooted at its right child. 
\end{enumerate}
\end{definition}
	 Note that if $T$ is a labeled tree and  it is shattered by the class $\cH$, 
	 then any subtree $T'$ of it  with the same labeling of its internal vertices is shattered by the class $\cH$.

\paragraph{Threshold Dimension.}
A classical theorem of Shelah in model-theory connects bounds on 2-rank (Littlestone dimension) to the concept of {\it thresholds}:
	let $\cH\subseteq \{0,1\}^X$ be a hypothesis class. 
	We say that a sequence $x_1,\ldots, x_k\in X$ is {\it threshold-shattered} by $\cH$
	if there are $h_1,\ldots,h_k\in \cH$ such that $h_i(x_j) = 1$ if and only if $i\leq j$ for all~$i,j\leq k$.
	Define the {\it threshold dimension}, $T(\cH)$, as the maximum size of a sequence that is threshold-shattered by $\cH$.
   
\begin{theorem}[Littlestone Dimension versus Threshold Dimension \citep{Shelah78classification, Hodges97book}]\label{thm:shelah}
Let $\cH$ be a hypothesis class, then:
\[T(\cH) \geq \lfloor \log \ldim{\cH}\rfloor ~~\text{  and  } ~~\ldim{\cH} \geq \lfloor \log T(\cH)\rfloor.\]
\end{theorem}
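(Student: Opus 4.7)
The plan is to prove the two inequalities of Theorem~\ref{thm:shelah} separately. The forward bound $\ldim{\cH} \geq \lfloor \log T(\cH) \rfloor$ admits a direct construction, while the reverse is the classical Shelah bound and is the substantive half.

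\textbf{Easy direction.} Suppose $T(\cH) \geq k$ is witnessed by points $x_1,\ldots,x_k$ and hypotheses $h_1,\ldots,h_k\in\cH$ with $h_i(x_j)=1 \iff i\leq j$. First I would build a complete shattered mistake-tree of depth $\lfloor \log k \rfloor$ by binary search on the index set $[k]$. Label the root by $x_m$ for the median $m=\lceil k/2 \rceil$; the response $0$ restricts the ``live'' indices to $\{m+1,\ldots,k\}$ while the response $1$ restricts them to $\{1,\ldots,m\}$. Recursing on each child with the corresponding sub-interval, after $\lfloor \log k \rfloor$ levels every interval is a singleton $\{i^{*}\}$, and $h_{i^{*}}$ is consistent with the entire root-to-leaf path because its restriction to $\{x_1,\ldots,x_k\}$ is exactly the indicator of the threshold rule $j\geq i^{*}$.

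\textbf{Hard direction.} To show $T(\cH) \geq \lfloor \log \ldim{\cH} \rfloor$ I would follow Shelah's classical argument and prove by induction on $t$ that $\ldim{\cH} \geq 2^{t}$ implies $T(\cH) \geq t$. The base case $t=0$ is vacuous. For the inductive step, fix a complete shattered tree $\tau$ of depth $2^{t}$; for every leaf $\ell$ fix a witnessing $h_{\ell}\in\cH$. The plan is a descent: at the current stage there is a root label $x$ and two shattered subtrees of depth $2^{t}-1\geq 2^{t-1}$ attached to it. Apply the inductive hypothesis to one of these subtrees to extract a threshold sequence of length $t-1$, and then use the hypothesis at the very top or bottom of that inner chain (which has a prescribed value on $x$ dictated by which subtree was chosen) to extend the chain by the coordinate $x$, yielding a threshold pattern of length $t$.

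\textbf{Main obstacle.} The subtlety lies entirely in the extension step: the inductively obtained inner threshold sequence is witnessed by hypotheses living on branches of the inner subtree, and to concatenate the new coordinate $x$ one must exhibit a single witness that simultaneously respects the inner ordering and the required value on $x$. Shelah's device is to strengthen the induction so that a threshold sequence of length $t-1$ is extracted \emph{together with witnesses taking prescribed values at a small auxiliary set}, which in turn requires starting from a subtree of slightly larger depth; the logarithmic loss in the final bound reflects precisely this extra depth. The remaining bookkeeping -- splitting $\tau$, selecting the correct subtree according to the desired prefix/suffix extension, and verifying the threshold relations $h_i(x_j)=\mathbbm{1}[i\leq j]$ -- is routine, and the whole argument essentially reproduces the classical Shelah/Hodges proof cited in the theorem statement.
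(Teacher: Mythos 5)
The paper does not prove \Cref{thm:shelah}; it cites it directly from Shelah and Hodges, so there is no in-paper proof to compare against, and your proposal stands or falls on its own.

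Your easy direction ($\ldim{\cH}\ge\lfloor\log T(\cH)\rfloor$) is correct: splitting the live index interval at its median $m$ and labelling the node by the threshold point $x_m$ gives exactly the $0$/$1$ split into $\{m+1,\dots,k\}$ and $\{1,\dots,m\}$, each half retains at least $\lfloor|I|/2\rfloor$ indices so the recursion runs for $\lfloor\log k\rfloor$ full levels with both children nonempty at every internal node, and any surviving index $i^*$ supplies $h_{i^*}$ as a consistent witness for the entire root-to-leaf path.

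The hard direction is not a proof as written. The naive descent you describe fails: after extracting $t-1$ threshold points $x_{j_1},\dots,x_{j_{t-1}}$ whose witnesses all live inside (say) the $0$-subtree of the root, prepending the root label $x$ requires a new top witness $h_0$ with $h_0(x)=1$ \emph{and} $h_0(x_{j_i})=1$ for every $i$. The first constraint forces $h_0$ to come from the $1$-subtree, but then nothing at all controls its values on the $x_{j_i}$, which are queried only inside the $0$-subtree. You correctly diagnose this obstacle and gesture at ``Shelah's device'' of a strengthened induction that extracts a threshold together with witnesses taking prescribed values on an auxiliary set, but you never state the strengthened inductive hypothesis, never verify that it actually closes the extension step, and never account for the extra tree depth it consumes (which is precisely what produces the exponential gap in the bound). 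Declaring the remaining steps ``routine bookkeeping'' conceals the entire content of the hard inequality; as it stands, $T(\cH)\ge\lfloor\log\ldim{\cH}\rfloor$ is asserted rather than proved, and the reader is effectively sent back to the cited references.
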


\subsection{Preliminaries on Private Learning}
\label{sec:PrelimPrivacy}

\paragraph{Differential Privacy.} 
Consider a database where each record contains information of an individual. An algorithm is said to preserve differential privacy if a change of a single record of the database (i.e., information of an individual) does not significantly change the output distribution of the algorithm. Intuitively, this means that the information inferred about an individual from the output of a differentially-private algorithm is similar to the information that would be inferred had the individual's record been arbitrarily modified or removed. Formally:

\begin{definition}[Differential privacy~\citep{DworkMNS06,DworkKMMN06}] \label{def:dp} 
A randomized algorithm $\Alg$ is $(\epsilon,\delta)$-differentially private if for all neighboring databases $\db_1,\db_2\in X^m$ (i.e., databases differing in one entry), and for all sets $\mathcal{F}$ of outputs,
\begin{equation}
\label{eqn:diffPrivDef}
   \Pr[\Alg(\db_1) \in \mathcal{F}] \leq \exp(\epsilon) \cdot \Pr[\Alg(\db_2) \in \mathcal{F}] + \delta,  
\end{equation}
where the probability is taken over the random coins of $\Alg$. 
When $\delta=0$ we omit it and say that $\Alg$ preserves {\em pure} $\epsilon$-differential privacy.
When $\delta>0$, we use the term  {\em approximate} differential privacy , in which case $\delta$ is typically a negligible function of the database size $m$.
\end{definition}

\paragraph{PAC Learning.}
We next define the probably approximately correct (PAC) model of~\cite{Valiant84}.
A  hypothesis $c:X\rightarrow \{0,1\}$ is a predicate that labels {\em examples} taken from the domain $X$ by either 0 or 1. 
We sometime refer to a hypothesis as a concept.
A \emph{hypothesis class} $\cH$ over $X$ is a set of hypotheses (predicates) mapping $X$ to $\{0,1\}$. 
A learning algorithm is given examples sampled according to an unknown probability distribution $\DDD$ over $X$, 
and labeled according to an unknown {\em target} concept $c\in \cH$. 
The learning algorithm is successful when it outputs a hypothesis $h$ that approximates the target concept over samples from $\DDD$. 
More formally:

\begin{definition}
The {\em generalization error} of a hypothesis $h:X\rightarrow\{0,1\}$ with respect to a concept $c$ and a distribution $\DDD$ over $X$ is defined as 
$\error_{\DDD}(c,h)=\Pr_{x \sim \DDD}[h(x)\neq c(x)].$
If $\error_{\DDD}(c,h)\leq\alpha$ we say that $h$ is {\em $\alpha$-good} for $c$ and $\DDD$.
\end{definition}

\begin{definition}[PAC Learning~\citep{Valiant84}]\label{def:PAC}
An algorithm $\Alg$ is an {\em $(\alpha,\beta)$-accurate PAC learner} for a hypothesis
class $\cH$ over $X$ if for all concepts $c \in \cH$, all distributions $\DDD$ on $X$,
given an input of $m$ samples $\db =(z_1,\ldots,z_m)$, where $z_i=(x_i,c(x_i))$ and each $x_i$
is drawn i.i.d.\ from $\DDD$, algorithm $\Alg$ outputs a
hypothesis $h$ satisfying
$$\Pr[\error_{\DDD}(c,h)  \leq \alpha] \geq 1-\beta,$$
where the probability is taken over the random choice of
the examples in $\db$ according to $\DDD$ and the random coins  of the learner $\Alg$.
If the output hypothesis $h$ always satisfies $h\in \cH$ then $\Alg$ is called a {\em proper} PAC learner; otherwise, it is called an {\em improper} PAC learner.
\end{definition}

\begin{definition}
For an unlabeled sample $\db=(x_i)_{i=1}^m$, the {\em empirical error} of two concepts $c,h$ is
$\error_S(c,h) = \frac{1}{m} |\{i : c(x_i) \neq h(x_i)\}|.$
For a labeled sample $\db=(x_i,y_i)_{i=1}^m$, the {\em empirical error} of $h$ is
$\error_S(h) = \frac{1}{m} |\{i : h(x_i) \neq y_i\}|.$
\end{definition}

The previous definition of PAC learning captures the realizable case, that is,  the examples are drawn from some distribution and labeled according to some concept $c \in \cH$. We next define agnostic learning, i.e., where there is a distribution over labeled examples and the goal is to find a hypothesis whose error is close to the error of the best hypothesis in $\cH$ with respect to the distribution. Formally, 
for a distribution $\mu$ on $X\times \ourset{0,1}$ and a function $f:X\rightarrow\ourset{0,1}$ we define $\error_{\mu}(f)=\Pr_{(x,a) \sim \mu}[f(x)\neq a]$.
\begin{definition}[Agnostic PAC Learning]\label{def:agnosticPAC}
Algorithm $\Alg$ is an $(\alpha,\beta)$-accurate {\em agnostic PAC learner} for a hypothesis
class $\cH$ with sample complexity $m$ if for all distributions $\mu$ on $X\times\ourset{0,1}$,
given an input of $m$ labeled samples $\db =(z_1,\ldots,z_m)$, where each labeled example $z_i=(x_i,a_i)$ 
is drawn i.i.d.\ from $\mu$, algorithm $\Alg$ outputs a
hypothesis $h\in \cH$ satisfying
$$\Pr\left[\left\lvert \error_{\mu}(h)-\min_{c \in \cH} \ourset{\error_{\mu}(c)}\right\rvert \leq \alpha\right] \geq 1-\beta,$$
where the probability is taken over the random choice of
the examples in $\db$ according to $\mu$ and the random coins  of the learner $\Alg$.
If the output hypothesis $h$ always satisfies $h\in \cH$ then $\Alg$ is called a {\em proper} agnostic PAC learner; otherwise, it is called an {\em improper} agnostic PAC learner.
\end{definition}

The following bound is due to~\citep{Vapnik71uniform,BlumerEHW89}.
\begin{theorem}[VC-Dimension Generalization Bound]\label{thm:VCconsistant}
Let $\cH$ and $\DDD$ be a concept class and a distribution over a domain $X$.
Let $\alpha,\beta>0$, and $$m\geq\frac{80}{\alpha}\left(\VC(\cH)\ln\left(\frac{16}{\alpha}\right)+\ln\left(\frac{2}{\beta}\right)\right).$$
%Fix a concept $c\in \cH$, and 
Suppose that we draw an unlabeled  sample $S=(x_i)_{i=1}^m$, where $x_i$ are drawn i.i.d.\ from $\DDD$. Then,
$$
\Pr[\exists c,h\in \cH \text{ s.t. } \error_{\DDD}(h,c)>\alpha \; \wedge \; \error_S(h) < \alpha/2]\leq\beta.
$$
\end{theorem}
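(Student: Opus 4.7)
My plan is to reduce the statement to a standard uniform convergence bound, applied not to $\cH$ itself but to the associated ``disagreement class''
\[ \cG = \bigl\{ g_{h,c} : h,c\in\cH \bigr\}, \qquad g_{h,c}(x) = \mathbbm{1}[h(x)\neq c(x)]. \]
Under this identification, $\error_\DDD(h,c) = \Pr_{x\sim\DDD}[g_{h,c}(x)=1]$ and $\error_S(h) = \frac{1}{m}\sum_{i=1}^m g_{h,c}(x_i)$ when $S$ is labeled by $c$. Thus the event in the theorem is exactly the event that some $g \in \cG$ has true probability greater than $\alpha$ but empirical frequency less than $\alpha/2$, so it suffices to prove a one-sided uniform convergence bound for $\cG$.

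The first step is to control the growth function of $\cG$. Since every $g_{h,c}$ is determined on any finite sample by the restrictions $h|_S$ and $c|_S$, the shatter function satisfies $\Pi_\cG(n) \leq \Pi_\cH(n)^2$. Combined with the Sauer–Shelah–Perles Lemma, this gives $\Pi_\cG(n)\leq \bigl(en/\VC(\cH)\bigr)^{2\VC(\cH)}$ for $n\geq \VC(\cH)$. (One could instead bound $\VC(\cG)$ directly, but going through the growth function keeps all constants explicit.)

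Next I would run the classical Vapnik–Chervonenkis double-sampling (symmetrization) argument. Draw a ghost sample $S'$ of size $m$ independently from $\DDD$; by a standard calculation, on the event that some $g\in\cG$ satisfies $\Pr_\DDD[g=1]>\alpha$ and $\frac{1}{m}\sum_i g(x_i)<\alpha/2$, with constant probability that same $g$ also has $\frac{1}{m}\sum_{x\in S'} g(x) > 3\alpha/4$ (using the Chernoff tail that $S'$ concentrates around its mean when $m=\Omega(1/\alpha)$). Conditioning on the multiset $S\cup S'$ and applying a random permutation, the probability that a fixed $g$ exhibits a gap of $\alpha/4$ between its two halves is bounded by $2\exp(-\Theta(\alpha m))$ (this is a Chernoff-type estimate for the hypergeometric distribution). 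A union bound over the at-most $\Pi_\cG(2m)$ distinct behaviors on $S\cup S'$ then yields
\[ \Pr[\text{bad event}] \;\leq\; 2\Pi_\cG(2m)\cdot\exp(-\Theta(\alpha m)) \;\leq\; 2\Bigl(\tfrac{2em}{\VC(\cH)}\Bigr)^{2\VC(\cH)}\exp(-\Theta(\alpha m)). \]

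Finally I would solve for the smallest $m$ that makes the right-hand side at most $\beta$. Taking logarithms and isolating $m$ produces the bound $m = \Omega\bigl(\frac{1}{\alpha}(\VC(\cH)\log(1/\alpha) + \log(1/\beta))\bigr)$, and the explicit constants $80$ and $16$ in the statement come from tracking the constants through Chernoff and the Sauer–Shelah factor. The main technical obstacle is keeping the arithmetic tight enough to hit these specific constants; the conceptual structure—symmetrization plus Sauer–Shelah on the disagreement class—is entirely standard and the only reason the statement differs from the usual ``$\error_S(h)=0 \Rightarrow \error_\DDD(h)\leq\alpha$'' form is that we measure disagreement against a second hypothesis $c$ rather than against the true labeling, which is precisely what squaring the growth function accounts for.
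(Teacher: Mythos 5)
The paper does not prove \Cref{thm:VCconsistant}; it states it as a cited background result, attributing it to \citep{Vapnik71uniform,BlumerEHW89}, so there is no internal proof to compare against. Your argument is precisely the classical route those references take: pass to the disagreement class $\mathcal{G}=\{x\mapsto\mathbbm{1}[h(x)\neq c(x)]:h,c\in\cH\}$, observe that its shatter coefficient is at most the square of that of $\cH$ and apply Sauer--Shelah--Perles, then run the Vapnik--Chervonenkis ghost-sample/permutation argument to get the one-sided relative-deviation bound and solve for $m$. Structurally this is sound. The one genuine gap is the last step: the constants $80$, $16$, $2$ are asserted to ``come from tracking'' but are never actually tracked, so as written the proposal establishes the statement with unspecified constants rather than the literal bound. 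Closing it requires fixing concrete multiplicative-Chernoff and hypergeometric tail forms for the symmetrization gap $\alpha/4$ (including the step where the ghost sample exceeds $3\alpha/4$ with failure probability at most $1/2$, which already imposes $m=\Omega(1/\alpha)$) and verifying that the resulting threshold on $m$ is dominated by $\frac{80}{\alpha}\bigl(\VC(\cH)\ln\frac{16}{\alpha}+\ln\frac{2}{\beta}\bigr)$.
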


%So, for any concept class $\cH$, any algorithm that takes a sample of $m=\Omega_{\alpha,\beta}(\VC(C))$ labeled examples and produces as output a concept $h\in \cH$ the small empirical error on the sample is a PAC learner for $\cH$.
%Such an algorithm is a PAC learner for $\cH$ using $\cH$ (that is, both the target concept and the returned hypotheses are taken from the same concept class $\cH$), and, therefore, there always exist a hypothesis $h\in \cH$ with $\error_S(h)=0$ (e.g., the target concept itself).

The next theorem, due to
\citep{Vapnik71uniform,Anthony2009,Anthony93}, handles (in particular) the agnostic case. %, in which a learning algorithm  is using a hypotheses class $\cH$, and given a sample $S$ labeled by some $f$ (possibly no in $\cH$, i.e., a hypothesis $h$ with $\error_S(h)=0$ might not exist in $\cH$).
\begin{theorem}[VC-Dimension Agnostic Generalization Bound]\label{thm:VCagnostic}
There exists a constant $\gamma$ such that for every  domain $X$, every concept class $\cH$ over the domain $X$, and every distribution  $\mu$ over the domain $X \times\ourset{0,1}$: For a sample $S=(x_i,y_i)_{i=1}^m$ where $$m\geq\gamma \frac{ \VC(\cH)+\ln(\frac{1}{\beta})}{\alpha^2}$$
and $\{(x_i,y_i)\}$ are drawn i.i.d.\  from $\mu$, it holds that
$$\Pr\Big[\exists \; h\in \cH  \text{ s.t. }  \big|\error_\mu(h)-\error_S(h)\big|\geq\alpha\Big]\leq\beta.$$  
\end{theorem}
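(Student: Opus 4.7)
The plan is to prove the agnostic uniform convergence bound via the classical Vapnik--Chervonenkis symmetrization-plus-growth-function argument, refined at the end (if one wants to remove the stray logarithmic factor) using chaining. The key observation is that the ``bad event'' $\sup_{h\in\cH}|\error_\mu(h)-\error_S(h)|\geq \alpha$ is a statement about a whole class of $[0,1]$-valued functions $\ell_h(x,y)=\1[h(x)\neq y]$; since there are infinitely many $h$'s, we cannot just union-bound over $\cH$. Instead, we pull back to a finite situation using the VC dimension.

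First I would perform \emph{symmetrization} via a ghost sample. Draw an independent second sample $S'=(x_i',y_i')_{i=1}^m \sim \mu^m$. A standard calculation (using that $m\geq \gamma(\VC(\cH)+\ln(1/\beta))/\alpha^2$ is large enough for the empirical error on $S'$ to be within $\alpha/2$ of $\error_\mu(h)$ with good probability, by Chebyshev applied to a fixed $h$ realizing the supremum) yields
\[
\Pr\!\left[\sup_{h\in\cH}|\error_\mu(h)-\error_S(h)|\geq\alpha\right] \;\leq\; 2\Pr\!\left[\sup_{h\in\cH}|\error_S(h)-\error_{S'}(h)|\geq \alpha/2\right].
\]
Next, condition on the multiset $S\cup S'$ of $2m$ labeled points and consider the random uniform permutation $\pi$ that, for each index $i$, independently swaps the $i$-th coordinate of $S$ with that of $S'$ or leaves them alone. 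By exchangeability, the conditional distribution of $(\error_S(h),\error_{S'}(h))$ over $\pi$ agrees with the original distribution. For any \emph{fixed} $h$, Hoeffding's inequality on the $\pm 1$ swap variables gives $\Pr_\pi[|\error_S(h)-\error_{S'}(h)|\geq \alpha/2] \leq 2\exp(-m\alpha^2/8)$.

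Now comes the key use of $\VC(\cH)$. The event $|\error_S(h)-\error_{S'}(h)|\geq \alpha/2$ depends on $h$ only through its restriction to the $2m$ unlabeled points of $S\cup S'$. By the Sauer--Shelah--Perles lemma, the number of distinct such restrictions is at most $(2em/\VC(\cH))^{\VC(\cH)}$. A union bound therefore yields
\[
\Pr\!\left[\sup_{h\in\cH}|\error_S(h)-\error_{S'}(h)|\geq \alpha/2\right] \;\leq\; 2\!\left(\tfrac{2em}{\VC(\cH)}\right)^{\!\VC(\cH)}\exp(-m\alpha^2/8).
\]
Choosing $m$ of order $(\VC(\cH)\log(1/\alpha)+\ln(1/\beta))/\alpha^2$ forces this to be $\leq \beta$, which already proves a slightly weaker bound than stated.

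The hard part is removing the superfluous $\log(1/\alpha)$ factor to attain exactly the sample size $\gamma(\VC(\cH)+\ln(1/\beta))/\alpha^2$. The standard way to do this is to replace the single-scale union bound in the last step by a \emph{chaining} argument: build a sequence of successively finer $L_2(\mu_m)$-covers of the loss class and sum the Gaussian/Rademacher complexities across scales (Dudley's entropy integral), using the fact that for a VC class the $\epsilon$-covering number in $L_2$ is at most $(C/\epsilon)^{O(\VC(\cH))}$, so the integral converges to $O(\sqrt{\VC(\cH)/m})$ without a logarithmic factor. Combining this Rademacher bound with McDiarmid's inequality to convert expectation into a high-probability statement, and tuning constants, yields the claimed bound. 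I expect the chaining/covering-number step to be the main technical obstacle; the symmetrization and growth-function steps are essentially mechanical once the ghost sample is introduced.
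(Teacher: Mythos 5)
The paper does not prove this theorem; it is stated as a cited background result (attributed to Vapnik--Chervonenkis, Anthony~\&~Bartlett, and Anthony--Shawe-Taylor), so there is no in-paper argument to compare against. Evaluating your proposal on its own terms: the plan is the standard route, and the first two stages are carried out correctly. Your ghost-sample symmetrization is sound (Chebyshev on a fixed near-maximizer is the usual device), the swap/Hoeffding constant $2\exp(-m\alpha^2/8)$ checks out, and the Sauer--Shelah--Perles union bound $2(2em/\VC(\cH))^{\VC(\cH)}\exp(-m\alpha^2/8)$ is the right expression. You correctly observe that this yields only $m = O\bigl((\VC(\cH)\log(1/\alpha) + \ln(1/\beta))/\alpha^2\bigr)$, a log factor shy of the stated bound.

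The genuine gap is the chaining step: it is the step that actually delivers the bound $m = O\bigl((\VC(\cH) + \ln(1/\beta))/\alpha^2\bigr)$ as stated, and you only name the ingredients (Haussler's $(C/\epsilon)^{O(\VC(\cH))}$ covering-number bound, Dudley's entropy integral, McDiarmid for concentration) without executing them. Those are indeed the correct ingredients and they do combine as you say, but since this is precisely what distinguishes the theorem from the weaker bound you \emph{did} prove, the argument is incomplete as written. For transparency: your weaker bound would suffice for the $\tilde O$-form results in this paper (e.g., \cref{thm:private}), but not for statements like \cref{thm:agnostic} whose sample complexity is given with a plain $O(\cdot)$, so for the theorem exactly as phrased the chaining step must be supplied.
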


Notice that in \Cref{thm:VCagnostic} the sample complexity is proportional to $\frac{1}{\alpha^2}$, as opposed to $\frac{1}{\alpha}$ in
\Cref{thm:VCconsistant}.

\medskip

%\begin{theorem}[$\alpha$-Approximation]\label{thm:alpha-approx}
%Let $\DDD$ and $H$ be a distribution and a concept class over a domain $X$.
%There exists a set $H_0 \subset H$
%of size $2^{O(\frac{1}{\alpha}\left(\VC(H)\ln\left(\frac{1}{\alpha}\right)+\ln\left(\frac{1}{\beta}\right)\right)}$   such that 
%for an unlabeled  sample $S=(x_i)_{i=1}^m$ where $m\geq\frac{50 \VC(H)}{\alpha^2}\ln(\frac{1}{\alpha\beta})$
%and $\{x_i\}$ are drawn i.i.d.\ from $\DDD$, it holds that
%$$\Pr\Big[\forall \; h\in H \exists \; h_0 \in H_0 \text{ \rm s.t.}\ \;\; \error_S(h,h_0)\leq\alpha\Big]\geq1-\beta.$$ 
%\end{theorem}

%\begin{theorem}[$\alpha$-Approximation~\cite{Haussler95packing}]\label{thm:alpha-approx}
%Let $\DDD$ and $\cH$ be a distribution and a concept class over a domain $X$  and let $0 < \alpha < 1$ be a real number.
%There exists a set $\cH_0 \subset H$
%of size $2^{O\left(\VC(H)\log\left(\frac{1}{\alpha}\right)\right)}$   such that 
%for all $h\in \cH$ there exits $c_0 \in \cH_0$ such that $\error_S(h,c_0)\leq\alpha.$ 
%\end{theorem}

\paragraph{Private Learning.}
Consider an algorithm $\AAA$ in the probably approximately correct (PAC) model of~\cite{Valiant84}. We say that $\AAA$ is a {\em private} learner if it also satisfies differential privacy w.r.t.\ its training data. %Formally,
\begin{definition}[Private PAC Learning~\citep{KasiviswanathanLNRS11}]
%\label{def:private-general}
Let $\Alg$ be an algorithm that gets an input $\db =(z_1,\ldots,z_m)${, where each $z_i$ is a labeled example}. Algorithm $\Alg$ is an {\em $(\epsilon,\delta)$-differentially private $(\alpha,\beta)$-accurate PAC learner %{(resp.\ $(\alpha,\beta)$-agnostic PAC learner)} 
with sample complexity $m$} for a 
class $\cH$ over $X$ %using hypothesis class $\cH$ 
if
\begin{description}
\item{\sc Privacy.} 
Algorithm $\Alg$ is $(\epsilon,\delta)$-differentially private (as in  Definition~\ref{def:dp});
\item{\sc Utility.}
and Algorithm $\Alg$ is an {\em $(\alpha,\beta)$-accurate PAC learner} %{(resp.\ $(\alpha,\beta)$-agnostic PAC learner)}
for $\cH$ with sample complexity $m$ {(as in  Definition~\ref{def:PAC})}.
\end{description}
When $\delta=0$ (pure privacy) we omit it from the list of parameters.
\end{definition}

Note that the utility requirement in the above definition is an average-case requirement, as the learner is only required to do well on typical samples. In contrast, the privacy requirement is a worst-case requirement that must hold for every pair of neighboring databases (no matter how they were generated).

The following definition and lemma are taken from \cite{BunNSV15}.
\begin{definition}[Empirical Learner] Algorithm $\Alg$ is an 
$(\alpha, \beta)$-accurate empirical learner for a
class $\cH$ over $X$ with sample complexity $m$ if for every $c \in \cH$
and for every sample 
$S$ of size $m$ that is labeled by $c$, the algorithm $\Alg$ outputs a
hypothesis $h \in H$ satisfying 
\[\Pr[\error_S(c, h) \leq \alpha] \geq 1 - \beta.\]
\end{definition}

\begin{lemma}[\cite{BunNSV15}]
\label{lem:empirical}
 Suppose $\Alg$ is an $(\epsilon, \delta)$-differentially private $(\alpha, \beta)$-accurate PAC learner for a concept
class $\cH$ with sample complexity $m$. Let $\Alg'$ be an algorithm, whose input sample $S$ contains $9m$ randomly labeled examples.
Further assume that $\Alg'$ samples with repetitions $m$ labeled examples from $S$ and returns the output of $\Alg$ on these examples.
Then, $\Alg'$ is an $(\epsilon, \delta)$-differentially private $(\alpha, \beta)$-accurate
empirical learner for $\cH$ with sample complexity $9m$. Clearly, if $\Alg$ is proper, then so is $\Alg'$.
\end{lemma}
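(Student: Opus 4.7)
The proof splits into three claims: the empirical-learning utility guarantee, the $(\epsilon,\delta)$-differential privacy guarantee, and the preservation of properness. Utility is the main content, privacy follows from a standard amplification-by-subsampling argument, and properness is immediate.

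I would first handle utility by identifying the distribution of the subsample. Sampling $m$ elements from $S$ with repetitions is identical to drawing $m$ i.i.d.\ samples from the uniform distribution $U(S)$ on the points of $S$. Since $S$ is labeled by some concept $c \in \cH$, so is the subsample, and so the input fed to $\Alg$ is a size-$m$ i.i.d.\ sample from the labeled distribution $(U(S), c)$. The PAC guarantee of $\Alg$ then ensures that, with probability at least $1-\beta$, the returned hypothesis $h$ satisfies $\error_{U(S)}(c, h) \le \alpha$; but this generalization error with respect to the uniform distribution on $S$ equals the empirical error $\error_S(c, h)$, which gives the desired empirical-learning guarantee. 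Preservation of properness is immediate because $\Alg'$ simply returns the output of $\Alg$ verbatim, and hence returns a hypothesis in $\cH$ whenever $\Alg$ does.

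For privacy, compare two neighboring inputs $S, S'$ of size $9m$ that differ in a single position $i^*$. Couple the subsampling by drawing the same indices $I=(i_1,\ldots,i_m)$ on both inputs. When $i^* \notin I$ the two subsamples coincide and the output distributions of $\Alg$ are identical; when $i^*$ appears in $I$ with multiplicity $k\ge 1$ the subsamples differ in exactly $k$ positions and $\Alg$'s outputs are related by the $k$-fold group-privacy bound. The main obstacle is that $k$ can (with small but nonzero probability) exceed one, so a naive bound via group privacy alone yields parameters worse than $(\epsilon,\delta)$. The clean resolution is the classical privacy-amplification-by-subsampling argument, which analyzes the convex mixture over $I$ carefully and shows that $\Alg'$ is $(\epsilon',\delta')$-differentially private with $\epsilon'\le \epsilon$ and $\delta'\le \delta$; in particular it satisfies the required $(\epsilon,\delta)$-differential privacy. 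Combining the three pieces yields the lemma.
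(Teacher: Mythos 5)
The paper does not actually prove this lemma --- it cites it to \citet{BunNSV15} --- so there is no in-paper proof to compare against; I evaluate your sketch on its own terms.

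Your utility argument is correct and is essentially the only reasonable one: sampling $m$ points with repetition from $S$ is drawing an i.i.d.\ sample of size $m$ from the uniform distribution $U(S)$ over the points of $S$, labeled by the concept $c$ that labels $S$, and $\error_{U(S)}(c,h)$ is precisely $\error_S(c,h)$, so the PAC guarantee of $\Alg$ immediately gives the $(\alpha,\beta)$ empirical-learning guarantee. The properness observation is trivially right.

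The privacy part is the genuine content, and your sketch does not close it. You correctly identify the obstacle --- multiplicity $K$ of the differing index can exceed $1$, and naive group privacy gives a $K$-dependent loss --- but you then hand this to ``the classical privacy-amplification-by-subsampling argument'' and assert that it returns $\epsilon' \le \epsilon$ and $\delta' \le \delta$. Two problems. First, the ``classical'' amplification lemmas are stated for Poisson subsampling or fixed-size subsampling \emph{without} replacement, where the differing record is included at most once; the multiplicity issue is precisely what distinguishes the with-replacement setting, and handling it is the actual content of the cited \citet{BunNSV15} lemma, so invoking the standard argument as a black box is circular. Second, the parameters you assert are too strong: the \citet{BunNSV15} with-replacement subsampling lemma (for $\epsilon\le1$ and $n\ge 6m$) gives roughly $\bigl(6m\epsilon/n,\,11\delta\bigr)$, i.e.\ $\delta$ degrades by a constant factor rather than improving (and the lemma quietly requires $\epsilon$ bounded --- for large $\epsilon$ the $K\ge2$ terms, which are of order $e^{2K\epsilon}$, swamp the $K=0$ term and the bound fails). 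The paper's statement itself glosses over the constant on $\delta$ because it only uses this with $\epsilon=1$ and absorbs constants into $O(\delta)$ downstream, but a proof must face this. To actually carry out the privacy argument you would condition on the multiplicity $K\sim\mathrm{Binomial}(m,1/(9m))$, apply $K$-fold group privacy conditionally on $K=k$, and then sum the convex mixture over $k$, using the fact that $\Pr[K\ge2]$ is small enough (relative to $\Pr[K=0]$) to absorb the $e^{k\epsilon}$ blow-up for $k\ge 2$ into the $\delta$ term; this is exactly where the factor $9$ and the constant degradation of $\delta$ come from.
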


\remove{
\subsubsection{Composition}
We will later present algorithms that access their input database using (several) differentially private algorithms. We will use the following composition theorems. 

\begin{theorem}[Basic composition]\label{thm:composition1}
If $\Alg_1$ and $\Alg_2$ satisfy $(\epsilon_1,\delta_1)$ and $(\epsilon_2,\delta_2)$ differential privacy, respectively, then their concatenation $\Alg(S)=\langle \Alg_1(S),\Alg_2(S) \rangle$ satisfies $(\epsilon_1+\epsilon_2,\delta_1+\delta_2)$-differential privacy.
\end{theorem}

Moreover, a similar theorem holds for the adaptive case, where an algorithm  uses  $k$ {\em adaptively chosen} differentially private algorithms (that is, when the choice of the next differentially private algorithm that is used depends on the outputs of the previous differentially private algorithms).

\begin{theorem}[\citep{DworkKMMN06, DworkL09}]\label{thm:composition3}
An algorithm that adaptively uses $k$  algorithms that preserves $(\epsilon/k,\delta/k)$-differential privacy (and does not access the database otherwise) ensures $(\epsilon,\delta)$-differential privacy.
\end{theorem}

Note that the privacy guaranties of the above bound deteriorates linearly with the number of interactions. By bounding the {\em expected} privacy loss in each interaction (as opposed to worst-case), \cite{DworkRV10} showed the following stronger composition theorem, where privacy deteriorates (roughly) as $\sqrt{k}\epsilon+k\epsilon^2$ (rather than $k\epsilon$).

\begin{theorem}[Advanced composition~\cite{DworkRV10}, restated]\label{thm:composition2}
Let $0<\epsilon_0,\delta'\leq1$, and let $\delta_0\in[0,1]$. An algorithm that adaptively uses $k$ algorithms that preserves $(\epsilon_0,\delta_0)$-differential privacy (and does not access the database otherwise) ensures $(\epsilon,\delta)$-differential privacy, where $\epsilon=\sqrt{2k\log(1/\delta')}\cdot\epsilon_0+2k\epsilon_0^2$ and $\delta = k\delta_0+\delta'$.
\end{theorem}
}

\paragraph{The Exponential Mechanism.}
We next describe the exponential mechanism of
%McSherry and Talwar
\cite{McSherryT07}.
Let $X$ be a domain and $H$ a set of solutions.
Given a score function $q:X^*\times H \rightarrow\N$, and a database $S\in X^*$, the goal is to chooses a solution $h\in H$ approximately minimizing $q(S,h)$. The mechanism chooses a solution probabilistically, where the probability mass that is assigned to each solution $h$ decreases exponentially with its score $q(S,h)$:

%\begin{framed}[.9\textwidth]
%\begin{center}
%{\bf Input:} parameter $\epsilon$, finite solution set $H$, database $S\in X^m$, and a sensitivity 1 score function~$q$.
%\begin{enumerate}
%	\item Randomly choose $h \in H$ with probability
%	$\frac{\exp\left(\epsilon \cdot q(S,h) /2 \right)}{\sum_{f\in H}\exp\left(\epsilon \cdot q(S,f) /2 \right)}.$
%	\item Output $h$.
%\end{enumerate}
%\end{center}
%\end{framed}

\begin{algorithm}
\caption{$\Alg_{\rm ExponentialMechanism}$}
\label{alg:ExponentialMechanism}
{\bf Input:} parameter $\epsilon$, finite solution set $H$, database $S\in X^m$, and a sensitivity 1 score function~$q$ (i.e., $|q(D)-q(D')| \leq 1$ for
every neighboring $D,D' \in X^m$).
\begin{enumerate}
	\item Randomly choose $h \in H$ with probability
	$\frac{\exp\left(-\epsilon \cdot q(S,h) /2 \right)}{\sum_{f\in H}\exp\left(-\epsilon \cdot q(S,f) /2 \right)}.$
	\item Output $h$.
\end{enumerate}
\end{algorithm}

\begin{prop}[Properties of the Exponential Mechanism]\label{prop:ExpMech}
(i) The exponential mechanism is $\epsilon$-differentially private. (ii)
Let $\hat{e}\triangleq\min_{f\in H}\{q(S,f)\}$ and $\Delta>0$. The exponential mechanism outputs a solution $h$ such that $q(S,h)\geq(\hat{e} +\Delta m)$ with probability at most $|H| \cdot \exp(-\epsilon \Delta m /2)$.
\end{prop}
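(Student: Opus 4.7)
The plan is to verify the two items separately, using only the definition of the mechanism and the sensitivity-$1$ assumption on $q$.

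For part (i), I would fix two neighboring databases $S, S' \in X^m$ and an arbitrary output $h \in H$, and bound the ratio of probabilities directly. Writing $Z_S = \sum_{f \in H} \exp(-\epsilon q(S,f)/2)$ for the normalizing constant, the ratio becomes $\bigl(\exp(-\epsilon q(S,h)/2)/Z_S\bigr) \big/ \bigl(\exp(-\epsilon q(S',h)/2)/Z_{S'}\bigr)$. The sensitivity assumption $|q(S,f)-q(S',f)| \le 1$ gives $\exp(-\epsilon q(S,h)/2) \le \exp(\epsilon/2)\exp(-\epsilon q(S',h)/2)$ in the numerator, and termwise the same bound yields $Z_{S'} \le \exp(\epsilon/2) Z_S$ in the denominator. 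Multiplying the two factors gives a bound of $\exp(\epsilon)$, which establishes $\epsilon$-differential privacy after noting that for arbitrary output sets $\mathcal{F}$ the same pointwise inequality can be integrated.

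For part (ii), the natural move is to compare the total probability mass of the ``bad'' set $B = \{f \in H : q(S,f) \ge \hat{e} + \Delta m\}$ against a single good anchor. The numerator $\sum_{f \in B} \exp(-\epsilon q(S,f)/2)$ is bounded above by $|H| \cdot \exp(-\epsilon(\hat{e}+\Delta m)/2)$, since $|B| \le |H|$ and every term in $B$ has score at least $\hat{e}+\Delta m$. The denominator $Z_S$ is bounded below by the single term corresponding to a minimizer $f^* \in \arg\min_f q(S,f)$, giving $Z_S \ge \exp(-\epsilon \hat{e}/2)$. Dividing yields the claimed bound $|H|\cdot \exp(-\epsilon \Delta m /2)$.

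Neither step is an obstacle in the usual sense — this is a textbook verification — but the one subtle point to get right is the direction of the sensitivity inequalities in part (i), making sure both the numerator estimate and the normalizer estimate contribute an $\exp(\epsilon/2)$ factor rather than an $\exp(\epsilon)$ factor, so that the product is exactly $\exp(\epsilon)$. I would write these inequalities out explicitly to avoid an off-by-two in the exponent.
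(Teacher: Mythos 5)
Your proof is correct. The paper states this proposition without proof, citing the original exponential-mechanism paper of McSherry and Talwar; your argument is the standard one from that source, with the two $\exp(\epsilon/2)$ factors (from the numerator ratio and the normalizer ratio) handled in the right directions so that the product is exactly $\exp(\epsilon)$, and the anchor-term lower bound on $Z_S$ in part (ii) giving precisely the claimed $|H|\cdot\exp(-\epsilon\Delta m/2)$.
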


%Kasiviswanathan et al.~
\cite{KasiviswanathanLNRS11} showed  that the exponential mechanism can be used as a generic private learner -- when used with the score function $q(S,h)=|\{i:h(x_i)\neq y_i\}|=m \cdot \error_S(h)$, the probability that the exponential mechanism outputs a hypothesis $h$ such that $\error_S(h)>\min_{f\in H}\{\error_S(f)\} + \Delta$ is at most $|H| \cdot \exp(-\epsilon \Delta m /2)$. This results in a generic private proper-learner for every finite concept class $\cH$, with sample complexity $O_{\alpha,\beta,\epsilon}(\log|\cH|)$.

\remove{
\paragraph{Sub-sampling.}
 We next recall the sub-sampling technique from~\cite{KasiviswanathanLNRS11, BeimelBKN14}.

\begin{claim}[\cite{KasiviswanathanLNRS11, BeimelBKN14}]\label{clm:boostPrivacy}
Let $\AAA$ be an $(\epsilon^*,\delta)$-differentially private algorithm operating on databases of size $n$.
Fix $\epsilon\leq1$, and denote $t=\frac{n}{\epsilon}(3+\exp(\epsilon^*))$.
Construct an algorithm $\BBB$ that on input a database $D=(z_i)_{i=1}^t$ 
uniformly at random selects a subset $J\subseteq\{1,2,...,t\}$ of size $n$, and runs $\AAA$ on the multiset 
$D_J=(z_i)_{i\in J}$.
Then, $\BBB$ is $\left(\epsilon,\frac{4\epsilon}{3+\exp(\epsilon^*)}\delta\right)$-differentially private.
\end{claim}

\begin{remark}
In Claim~\ref{clm:boostPrivacy} we assume that $\AAA$ treats its input as a multiset. If this is not the case, then algorithm $\BBB$ should be modified to randomly shuffle the elements in $D_J$ before applying $\AAA$ on $D_J$.
\end{remark}
}

\remove{\begin{claim}[i.i.d.\  Sum Sampling Lemma~\cite{BunNSV15}]
\label{clm:IIDboostPrivacy}
 Let $\Alg$ be an $(\epsilon,\delta)$-differentially private algorithm operating on
databases of size $m$. Construct an algorithm $\Alg'$ that on input a database $D$ of size $n$ sub-samples (with replacement) $m$ rows from $D$ and runs $\Alg$ on the result. 

If $\epsilon \leq 1$ and $ n \geq 6m$, then $\Alg'$ is $(\epsilon',\delta')$-differentially
private for
$\epsilon' = 6m\epsilon/n$ and $\delta' = 11 \cdot \delta$.
If $\epsilon >1$ and $n \geq 6e^\epsilon m$, then $\Alg'$ is $(\epsilon',\delta')$-differentially
private for
$\epsilon' = 6me^\epsilon/n$ and $\delta' = 7 \cdot \delta$.
\end{claim} 
}

\paragraph{Generalization Properties of Differentially Private Algorithms.}

%\begin{theorem}[[DP Generalization -- Additive version \cite{BassilyNSSSU16}, restated] 
%\label{thm:stability}
%Let $\epsilon \in (0,1/3)$, $\delta \in (0,\epsilon/4)$, and $n \geq \frac{1}{\epsilon^2}\log(
%\frac{4\epsilon}{\delta})$. 
%Let $\Alg$ be an $(\epsilon, \delta)$-differentially private 
%algorithm that operates on database $S \in X^n$ and output  a  predicate $\test : X^n \rightarrow [0,1]$.
%Let $\DDD$ be a distribution over $X$ and $S$ be a database containing $n$ i.i.d.\  examples from $\DDD$.
%Then,
%$$
%\Pr_{S \in_R X^n,\test\gets_R \Alg(S)}
%\left[\left|\E_{S' \in_\DDD X^n }[\test(S')] - \test(S)\right| \geq 18\epsilon \right] \leq 
%\frac{\delta}{\epsilon}
%.$$
%\end{theorem}
In this paper we use the fact that differential privacy implies generalization \citep{DworkFHPRR15,BassilyNSSSU16,RogersRST16,FeldmanS17,NSunpublished,JungLN0SS20}: 
differentially private learning algorithms satisfy that their empirical loss is typically close to their population loss.
We use the following variant of this result, which is a multiplicative version that applies also to the case that $\epsilon >1$ (as needed in this paper).

\begin{theorem}[DP Generalization -- Multiplicative version  \citep{DworkFHPRR15,BassilyNSSSU16,FeldmanS17,NSunpublished}]  
\label{thm:GeneralizationMultiplicative}
Let $\Alg$ be an $(\epsilon,\delta)$-differentially private algorithm
that operates on a database of $S \in X^n$  and outputs a predicate $\testExample : X \rightarrow \ourset{0,1}$. Let $\DDD$ be a distribution over
$X$ and  $S$ be a database containing $n$ i.i.d.\  elements from $\DDD$. Then, 
%for any $T \geq 1$ we have
%that
%$$\Pr_{S\inr X^n,\test\gets_R \Alg(S)}
%\left[
%\E_{x \in_\DDD X} [\testExample(x)] >
%e^{2\epsilon} \left( 
%\frac{\sum_{x \in S} \test(x)}{n} +
%\frac{4}{\epsilon n}
%\log(T +1)+2T\delta\right)
%\right]
%<
%\frac{1}{T}
%.
%$$
%For example, setting 
%$T = O
%\left(
%\frac{1}{\epsilon \delta n} 
%\log(
%\frac{1}{\epsilon \delta n} )
%\right)$
%we get that
\begin{align*}
\Pr_{\substack{S\inr X^n,\\\testExample\gets_R \Alg(S)}} &
\left[
 \E_{x \in_\DDD X} [\testExample(x)] > 
 e^{2\epsilon}\left( \frac{\sum_{x \in S} \testExample(x)}{n} +
\frac{10}
{\epsilon n}
\log\left(
\frac{1}
{\epsilon \delta n} \right)\right)
\right] %\nonumber \\
%&
< O\left(
\frac{\epsilon \delta n}
{\log(
\frac{1}{\epsilon \delta n})}
\right).
\end{align*}
\end{theorem}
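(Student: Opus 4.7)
The plan is to adapt the standard ``one-point resampling'' analysis of differentially-private generalization, upgrading the usual in-expectation additive statement to the multiplicative high-probability form stated. First I would establish the in-expectation bound: let $S=(x_1,\ldots,x_n)\sim\DDD^n$, let $\testExample\gets\Alg(S)$, and for each index $i$ let $S^{(i)}$ denote $S$ with $x_i$ replaced by an independent fresh draw $x'_i\sim\DDD$. Since $S$ and $S^{(i)}$ are neighboring, the $(\eps,\delta)$-privacy guarantee relates the joint distribution of $(\Alg(S),x_i)$ to that of $(\Alg(S^{(i)}),x_i)$; but under the latter, $x_i$ is independent of the output, so its conditional expected test value equals $\E_\DDD[\testExample]$. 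Averaging over $i$ yields an inequality of the shape $\E\bigl[\E_\DDD[\testExample]\bigr] \le e^{\eps}\,\E\bigl[\tfrac{1}{n}\sum_i \testExample(x_i)\bigr]+\delta$.

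To promote this in-expectation bound to a high-probability multiplicative statement I would invoke a ``monitor''/stability reduction: consider the meta-procedure that runs $\Alg$ in parallel on several independent samples and then reports the predicate whose generalization gap is maximal. The monitor inherits $(\eps,\delta)$-privacy with a controlled blow-up, and applying the in-expectation inequality to the monitor's output converts any failure event of probability larger than the claimed bound into a violation of the inequality, hence a contradiction. The additive slack $\tfrac{10}{\eps n}\log\bigl(\tfrac{1}{\eps\delta n}\bigr)$ then emerges from optimizing the Chernoff-type concentration of the bounded sum $\sum_i \testExample(x_i)$ against the approximate-privacy budget; the failure probability $O(\eps\delta n/\log(1/(\eps\delta n)))$ is exactly the mass of the ``bad'' event on which the coupling between $S$ and $S^{(i)}$ furnished by pure $\eps$-DP breaks down.

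The main obstacle is obtaining the factor $e^{2\eps}$ rather than the naive $e^{\eps}$: this requires the DP guarantee to be applied twice, once to move from $S$ to $S^{(i)}$ and a second time inside the moment-generating-function computation that yields the Chernoff-style control on the deviation between empirical and population means. A careful bookkeeping of these two applications, combined with the standard trick of splitting the sample space into a ``typical'' part on which pure $\eps$-DP reasoning is tight and an exceptional part of mass $O(\delta n)$ that is bounded crudely, produces both the multiplicative factor $e^{2\eps}$ and the precise failure probability in the statement. The remaining work is routine linearity-of-expectation and Markov/Chernoff bookkeeping.
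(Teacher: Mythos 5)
This theorem is not proved in the paper at all: it is stated in the Preliminaries section (\Cref{sec:PrelimPrivacy}) and cited as a known result from prior work (\citet{DworkFHPRR15,BassilyNSSSU16,FeldmanS17,NSunpublished}), with the paper explicitly noting that they use ``the following variant of this result, which is a multiplicative version that applies also to the case that $\epsilon>1$.'' So there is no in-paper proof against which to measure your attempt.

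That said, the high-level strategy you sketch --- a resampling/swap argument to get an in-expectation bound of the shape $\E\bigl[\E_\DDD[\testExample]\bigr]\le e^{\eps}\,\E\bigl[\text{empirical mean}\bigr]+\delta$, followed by a monitor reduction to upgrade it to a high-probability statement --- is indeed the standard route that the cited works take (the monitor technique is due to Bassily et al.\ and Nissim--Stemmer). Your outline captures that. Where it drifts is in the last paragraph: you attribute the second factor of $e^{\eps}$ to a second application of privacy ``inside the moment-generating-function computation that yields the Chernoff-style control.'' The monitor technique is specifically designed to \emph{avoid} any direct MGF/Chernoff argument on the generalization gap of the bare algorithm; the $e^{2\eps}$ arises instead because the monitor (which runs $\Alg$ on $\Theta(1/p)$ independent copies and privately selects the worst one) has privacy parameter roughly $2\eps$, so applying the in-expectation resampling bound \emph{to the monitor} contributes both exponential factors. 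Likewise, the slack term $\tfrac{10}{\eps n}\log\tfrac{1}{\eps\delta n}$ and the failure probability $O\bigl(\tfrac{\eps\delta n}{\log(1/(\eps\delta n))}\bigr)$ both come out of tuning the number of monitor copies against the $\delta$ leakage accumulated across them, not from a separate Chernoff computation on $\sum_i\testExample(x_i)$. If you keep your outline but replace the ``MGF/Chernoff'' step with the actual monitor accounting (number of copies, the monitor's privacy parameters, and the optimization that balances them against the $\delta$ budget), you would be reconstructing the argument of the cited sources. As written, however, that last step is a genuine gap: it gestures at a mechanism that is not how the multiplicative high-probability form is actually obtained, and it gives no way to derive the specific constants or the unusual $\log$-in-denominator failure probability.
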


\section{Closure of Littlestone Classes}\label{sec:littlestoneproof}

In this section we study closure properties for Littlestone classes.
    We begin in \Cref{sec:lsunion} with a rather simple (and tight) analysis of the behavior of the Littlestone 
    and Threshold dimension under unions. 
    Then, in \Cref{sec:lscomp} we prove our main results in this part (\Cref{thm:littlestone,thm:thresholdscomp}) 
	which bound the variability of the Littlestone and Thresholds dimension under arbitrary compositions.

\subsection{Closure Under Unions}\label{sec:lsunion}
We begin with two basic bounds on the variability of the Littlestone/Threshold dimension under union.
Note that here $\cH_1\cup \cH_2$ denotes the usual union: $\cH_1\cup \cH_2 = \{h : h\in \cH_1 \text{ or } h\in \cH_2\}$.
These bounds are useful as they allows us to reduce a bound on the dimension of $G(\cH_1,\cH_2)$ for arbitrary $\cH_1,\cH_2$
to the case where $\cH_1=\cH_2$ (because $G(\cH_1,\cH_2)\subseteq G(\cH,\cH)$ for $\cH=\cH_1\cup \cH_2$).
\begin{obs}\label{obs:uniont}[Threshold Dimension Under Union]
Let $\cH_1,\cH_2\subseteq\{0,1\}^X$ be hypothesis classes with $T(\cH_i)=t_i$.
Then, 
\[T(\cH_1 \cup \cH_2) \leq t_1 + t_2.\]
Moreover, this bound is tight: for every $t_1,t_2$, there are classes $\cH_1,\cH_2$
with Threshold dimension $t_1,t_2$ respectively such that $T(\cH_1 \cup \cH_2) = t_1 + t_2$.
\end{obs}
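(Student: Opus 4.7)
The plan is to prove the upper bound by a direct ``split the witnesses'' argument, and to prove tightness with the obvious construction using initial-segment threshold functions on a totally ordered domain.

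For the upper bound, suppose $x_1,\ldots,x_n$ is a sequence that is threshold-shattered by $\cH_1\cup \cH_2$, witnessed by hypotheses $h_1,\ldots,h_n\in\cH_1\cup\cH_2$ with $h_i(x_j)=1$ iff $i\leq j$. First I would partition the indices according to which class the witness came from: let $I_\ell=\{i: h_i\in\cH_\ell\}$ for $\ell=1,2$, breaking ties arbitrarily when a witness lies in both. Then I would argue that for each $\ell$, the subsequence $(x_i)_{i\in I_\ell}$ (with indices in their natural order) is threshold-shattered by $\cH_\ell$ itself, because restricting the witnesses to $(h_i)_{i\in I_\ell}$ preserves the defining relation $h_i(x_j)=1 \iff i\leq j$ on the index subset. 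This forces $\lvert I_\ell\rvert \leq T(\cH_\ell)=t_\ell$, and since $I_1$ and $I_2$ partition $\{1,\ldots,n\}$, we conclude $n=|I_1|+|I_2|\leq t_1+t_2$.

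For tightness, I would take $X=\{1,2,\ldots,t_1+t_2\}$ with the usual order and define the threshold functions $f_i(j)=1$ iff $i\leq j$ for $i\in X$. Set $\cH_1=\{f_i: 1\leq i\leq t_1\}$ and $\cH_2=\{f_i: t_1+1\leq i\leq t_1+t_2\}$. Each $\cH_\ell$ threshold-shatters its own domain slice and achieves $T(\cH_\ell)=t_\ell$ (the upper bound $\leq t_\ell$ follows from $|\cH_\ell|=t_\ell$, since a threshold-shattered sequence of length $k$ requires $k$ distinct witnesses). Their union $\cH_1\cup\cH_2$ contains all $f_1,\ldots,f_{t_1+t_2}$, which threshold-shatters $1,2,\ldots,t_1+t_2$, demonstrating $T(\cH_1\cup\cH_2)\geq t_1+t_2$.

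The only subtlety I anticipate is the ``breaking ties arbitrarily'' step in the upper bound: a hypothesis might belong to both $\cH_1$ and $\cH_2$, but this does not affect the argument since each index $i$ only needs one class assignment to bound the corresponding $|I_\ell|$. No other step appears to be a real obstacle; the whole statement is essentially bookkeeping once the right partition of witnesses is identified.
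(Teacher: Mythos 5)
Your proof is correct and follows essentially the same approach as the paper: the upper bound by partitioning the witnesses according to which class they come from, and the lower bound by the standard initial-segment threshold construction on $\{1,\ldots,t_1+t_2\}$. Your write-up simply spells out the bookkeeping that the paper leaves implicit.
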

\begin{proof}
For the upper bound, observe that if $h_1\ldots h_m\in \cH_1\cup\cH_2$  threshold-shatters the sequence $x_1\ldots x_m$
then $\{h_i : h_i\in \cH_j\}$ threshold-shatters $\{x_i : h_i\in \cH_j\}$ for $j\in\{1,2\}$.
For the lower bound, set $X=[t_1+ t_2]$, $\cH_1 = \{ h_i : i\leq t_1\}$, and $\cH_2=\{h_i : t_1< i \leq t_1+ t_2\}$,
where $h_i(j)=1$ if and only if $i\leq j$.
\end{proof}

\begin{prop}[Littlestone Dimension Under Union]\label{prop:union}
Let $\cH_1,\cH_2\subseteq\{0,1\}^X$ be hypothesis classes with $\ldim{\cH_i}=d_i$.
Then, 
\[\ldim{\cH_1 \cup \cH_2} \leq d_1 + d_2 + 1.\]
Moreover, this bound is tight: for every $d_1,d_2$, there are classes $\cH_1,\cH_2$
with Littlestone dimension $d_1,d_2$ respectively such that $\ldim{\cH_1 \cup \cH_2} = d_1 + d_2 + 1$.
\end{prop}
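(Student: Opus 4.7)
The plan is to prove (i) the upper bound $\ldim{\cH_1\cup\cH_2}\le d_1+d_2+1$, and then (ii) exhibit a matching example that establishes tightness.

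For (i), I will use the standard equivalence (recalled in \cref{sec:ldprel} and \cref{eq:ldimonline}) between $\ldim{\cH}$ and the optimal deterministic mistake bound in the realizable online setting, so it suffices to design an online learner for $\cH_1\cup\cH_2$ that makes at most $d_1+d_2+1$ mistakes. Let $A_i$ be a deterministic online learner for $\cH_i$ with mistake bound $d_i$. I will combine them into a learner $A$ that parrots the predictions of $A_1$ until $A_1$ has accumulated $d_1+1$ mistakes; at that moment the (unknown) target $h^\star$ cannot be in $\cH_1$, so $h^\star\in\cH_2$, and from the next round onward $A$ discards $A_1$ and instead feeds every subsequent example to a fresh copy of $A_2$, outputting $A_2$'s predictions. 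The suffix is a sequence realizable by some $h^\star\in\cH_2$, so $A_2$ incurs at most $d_2$ further mistakes, and the total mistake count of $A$ is at most $(d_1+1)+d_2=d_1+d_2+1$.

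For (ii), I would take $X=\{1,\dots,d_1+d_2+1\}$ together with the two sparsity classes
\[
\cH_1 = \{h\in\{0,1\}^X : |h^{-1}(1)|\le d_1\},\qquad
\cH_2 = \{h\in\{0,1\}^X : |h^{-1}(0)|\le d_2\}.
\]
Every $h\in\{0,1\}^X$ has either at most $d_1$ ones or at least $d_1+1$ ones (equivalently, at most $d_2$ zeros), so $\cH_1\cup\cH_2=\{0,1\}^X$, whose Littlestone dimension is exactly $|X|=d_1+d_2+1$. To verify $\ldim{\cH_i}=d_i$: the lower bound is witnessed by a depth-$d_i$ complete tree whose root-to-leaf paths use $d_i$ distinct points (each path needs at most $d_i$ ones, so is realizable inside $\cH_1$; symmetrically for $\cH_2$). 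For the upper bound, I would invoke the standard fact that in any shattered tree the labels along a single root-to-leaf path must be pairwise distinct — otherwise some path would demand contradicting values of the same point — and then note that the all-$1$s path in a $\cH_1$-shattered tree of depth $d$ forces a realizing hypothesis with $d$ ones, giving $d\le d_1$, and symmetrically for $\cH_2$.

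The main thing to get right will be the legitimacy of the switch in the online argument: restarting $A_2$ in the middle of the stream still yields the $d_2$-mistake bound, which is valid precisely because $A_2$'s guarantee holds for \emph{every} sequence realizable by $\cH_2$, irrespective of origin or length, and the suffix after the switch is such a sequence. The remaining ingredients — identifying $\ldim{\cH}$ with the deterministic mistake bound, and the one-line observation that root-to-leaf paths of a shattered tree carry distinct labels — are routine, so the only nontrivial content is the single clean online reduction together with the sparse-functions construction above.
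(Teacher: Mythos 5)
Your proposal is correct, and on the upper bound it takes a genuinely different (though closely related) route from the paper's. The paper explicitly notes that one ``could use the realizable online mistake-bound setting'' to prove $\ldim{\cH_1\cup\cH_2}\le d_1+d_2+1$, but then presents an inductive combinatorial argument instead: in a tree $T$ of depth $d$ shattered by $\cH_1\cup\cH_2$ it tracks $x$ and $y$, the maximum depths of complete subtrees shattered by $\cH_1$ and $\cH_2$ respectively, shows the recursion $x+y\ge\frac{x_L+y_L}{2}+\frac{x_R+y_R}{2}+1$, and concludes by induction that $x+y\ge d-1$. Your version is the online reduction the paper chose not to spell out: run the mistake-bound-$d_1$ learner $A_1$ until it has erred $d_1+1$ times (which certifies $h^\star\notin\cH_1$), then hand the suffix to a fresh $A_2$, which is valid precisely for the reason you flag --- $A_2$'s mistake bound is a worst-case guarantee over every $\cH_2$-realizable sequence. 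What the two buy: your reduction is short and operationally transparent, while the paper's subtree recursion is a self-contained combinatorial proof that does not pass through the mistake-bound characterization (the authors remark it ``may be of independent interest''). On the lower bound, your construction --- domain of size $d_1+d_2+1$, $\cH_1$ the functions with at most $d_1$ ones, $\cH_2$ those with at most $d_2$ zeros --- is essentially the same example the paper gives (the paper writes it with $\{\pm1\}$-valued functions and constraints on $\sum_i h_i$, but it is the same sparsity construction), and your verification via the distinct-labels-on-a-path observation and the all-$1$s (resp.\ all-$0$s) path is sound.
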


\begin{proof}[Proof of \Cref{prop:union}]
There are several ways to prove this statement. One possibility is to use the realizable online mistake-bound setting~\citep{Littlestone87online}
	and argue that $\cH_1\cup\cH_2$ can be learned with at most~$d_1 + d_2 + 1$ mistakes in this setting.
	We present here an alternative inductive argument, which may be of independent interest.
	Towards this end, it is convenient to define the depth of the empty tree as $-1$, 
	and that of a tree consisting of one vertex (leaf) as $0$.

Consider a shattered tree $T$ of depth $d=\ldim{\cH_1 \cup \cH_2}$ with leaves labelled $\cH_1$ and~$\cH_2$ in the obvious way. 
	Recall the notion of a subtree in \Cref{def:subtree}, 
	and let $x\leq \ldim{\cH_1}$ be the maximum depth of a complete binary subtree all whose leaves are $\cH_1$ leaves, 
	and~$y\leq \ldim{\cH_2}$ the maximum depth of a subtree all whose leaves are $\cH_2$-leaves. 
	Similarly, let~$x_L,y_L$ denote the maximum depth of a $\cH_1$-subtree 
	and a $\cH_2$-subtree in the tree rooted at the left child of the root of $T$, 
	and let $x_R,y_R$ be the same for the tree rooted at the right child.

It suffices to show that $x+y \geq d-1$:
	clearly $x \geq \max (x_L,x_R)$ and also $x \geq \min(x_L,x_R)+1$ thus $x \geq (x_L+x_R)/2+1/2$.
	Similarly $y \geq (y_L+y_R)/2+1/2$, hence 
	\[x+y \geq \frac{x_L+y_L}{2}+\frac{x_R+y_R}{2}+1\] 
	and this gives by induction on $d$ (starting with $d=0$ or $1$) 	that $x+y \geq d-1$ as required.
	
To see that this bound is tight, pick $n\geq d_1+d_2+1$ and set 
\[\cH_1 = \Big\{ h:[n]\to\{\pm1\}\ : \sum_i h_i \leq d_1 \Bigr\}  ~~\text{ and }~~ \cH_1 = \Big\{ h:[n]\to\{\pm1\}\ : \sum_i h_i \geq n-d_2 \Bigr\}.\]
One can verify that $\ldim{\cH_i}=d_i$, for $i=1,2$ and that $\ldim{\cH_1 \cup \cH_2} = d_1 + d_2 + 1$, as required
(in fact, even the VC dimension of $\cH_1\cup\cH_2$ is $d_1+d_2 + 1$).
\end{proof}

\Cref{prop:union} implies that $\ldim{\cup_{i=1}^k \cH_i} =O (k\cdot d)$ provided that $\ldim{\cH_i}\leq d$ for al~$i$,
and that this inequality can be tight when $k=2$. 
The following proposition shows that for a larger $k$ this bound can be significantly improved:
\begin{prop}[Littlestone Dimension Under Multiple Unions]\label{prop:largeunions}
Let $\cH_1,\ldots,\cH_k\subseteq\{0,1\}^X$ be \new{hypothesis} classes with $\ldim{\cH_i}\leq d$.
Then, for every $0< \eps<1/2$,
\begin{align*}
\ldim{ \bigcup_{i=1}^k \cH_i} %&\leq 2(1+\eps)d + \frac{2\log k}{\eps} \\
					    & \leq 3d + 3\log k.
\end{align*}
Moreover, this bound is tight up to a constant factor: 
for every $k$, there are classes $\cH_1,\ldots, \cH_k$
with $\ldim{\cH_i}\leq d$ such that $\ldim{\cup_i \cH_i} \geq  d+ \lfloor \log k \rfloor$.
\end{prop}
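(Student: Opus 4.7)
For the upper bound, the plan is to produce a deterministic online learner for $\bigcup_{i=1}^k \cH_i$ in the realizable mistake-bound model that makes at most $3d+3\log k$ mistakes; since the Littlestone dimension equals the optimal deterministic mistake bound in this model, this suffices. Take as experts the Standard Optimal Algorithms $A_1,\ldots,A_k$ for $\cH_1,\ldots,\cH_k$; by the mistake-bound characterization of the Littlestone dimension, $A_i$ errs at most $d$ times whenever the realized target lies in $\cH_i$. Aggregate these experts via the deterministic Weighted Majority Algorithm of~\citep{LittlestoneWarmuth94} with initial weights $1$ and discount factor $\beta=1/2$. Given any target $h\in\bigcup_i \cH_i$ and some $j$ with $h\in\cH_j$, the terminal weight of $A_j$ is $w_j\geq(1/2)^d$, while every mistake of the aggregator witnesses that at least half of the total weight sits on erring experts, so the total weight shrinks by a factor of at most $(1+\beta)/2=3/4$ per mistake. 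Thus after $M$ aggregator mistakes the total weight is at most $k\cdot(3/4)^M$, and combining $(1/2)^d\leq k\cdot(3/4)^M$ while taking $\log_2$ gives $M\leq(d+\log_2 k)/\log_2(4/3)\leq 2.41\cdot(d+\log_2 k)\leq 3d+3\log k$.

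For the lower bound, the plan is a direct construction. Fix a base class $\cC\subseteq\{0,1\}^Y$ with $\ldim{\cC}=d$ (such as the one used in \cref{prop:union}), set $\ell=\lfloor\log k\rfloor$, and let $X=\{x_1,\ldots,x_\ell\}\sqcup Y$. For each $i\in\{1,\ldots,2^\ell\}$ with binary representation $(b^i_1,\ldots,b^i_\ell)$, define
\[\cH_i=\bigl\{h\in\{0,1\}^X : h(x_j)=b^i_j \text{ for all } j\leq\ell,\; h|_Y\in\cC\bigr\}.\]
Each $\cH_i$ is in bijection with $\cC$, so $\ldim{\cH_i}=d$. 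A shattered tree of depth $d+\ell$ for $\bigcup_{i=1}^{2^\ell}\cH_i\subseteq\bigcup_{i=1}^k\cH_i$ is then assembled by taking the complete binary tree of depth $\ell$ whose level-$j$ internal vertices are all labelled $x_j$, and appending to each of its $2^\ell$ leaves a copy of a depth-$d$ tree shattered by $\cC$. Every root-to-leaf path of the combined tree first fixes an index $i$ via the values at $x_1,\ldots,x_\ell$ and then traces a shattering of $\cC$, which is realized by some $h\in\cH_i\subseteq\bigcup_i\cH_i$, so the combined tree is shattered.

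The main obstacle, if any, is calibrating the discount factor $\beta$ in Weighted Majority so that the final constant in the upper bound is at most $3$: this reduces to the arithmetic inequality $(4/3)^3\geq 2$, which $\beta=1/2$ satisfies with room to spare. A minor bookkeeping point in the lower bound is handling values of $k$ that are not powers of $2$, which is absorbed by the floor in $\ell=\lfloor\log k\rfloor$.
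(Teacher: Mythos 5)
Your proof is correct and takes essentially the same route as the paper: the upper bound runs Weighted Majority with discount $\beta=1/2$ over the $k$ optimal mistake-bound learners (you simply rederive the potential argument rather than citing the Littlestone--Warmuth corollary), and the lower bound is the paper's construction of gluing a depth-$\lfloor\log k\rfloor$ complete tree on fresh points onto copies of a depth-$d$ shattered tree for a base class, with each of the $2^{\lfloor\log k\rfloor}$ bit-patterns on the new points indexing one class $\cH_i$.
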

\Cref{prop:largeunions} demonstrates a difference with the threshold dimension. 
    Indeed, while the bound above scales logarithmically with $k$, 
    in the case of the threshold dimension a linear dependence in $k$ is necessary:
	indeed, set $X=[k\cdot t]$, $\cH_i = \{ h_j : (i-1)\cdot t < j i\leq i\cdot t\}$, where $h_i(j)=1$ if and only if $i\leq j$.
	Thus, $\ldim{\cH_i} = t$ for all $i$ and $\ldim{\cup_{i=1}^k\cH_i} = k\cdot t >> t + \log k$.
\begin{proof}[Proof of \Cref{prop:largeunions}]

We begin with the lower bound: pick any class $\cH\subseteq\{0,1\}^X$ with Littlestone dimension $d$,
	and let $T$ be a tree of depth $d$ which is shattered by $\cH$.
	Pick $\lfloor \log k\rfloor$ new points $z_1,\ldots, z_{\lfloor \log k\rfloor}\notin X$,
	and extend the domain $X$ to $X'=X\cup\{z_1\ldots, z_{\lfloor \log k\rfloor}\}$.
	Define $\cH'\subseteq \{0,1\}^{X'}$  by extending each $h\in \cH$ to the $z_i$'s in each of the $k'= 2^{\lfloor \log k\rfloor}$ possible ways.
	(So, each $h\in \cH$ has $k'$ copies in $\cH'$, one for each possible pattern on the $z_i$'s.)
	Thus, $\cH'$ is a union of $k'$ copies of $\cH$, one copy for each boolean pattern on the~$z_i$'s.
	In particular, $\cH'$ is the union of $k'$ classes with Littlestone dimension $d$.
	Also note that $\ldim{\cH'}\geq  \lfloor \log k \rfloor + d$, as witnessed by the tree which is illustrated in \Cref{fig:shatteredtree}.
	
\begin{figure}
\centering
\includegraphics[scale=0.3]{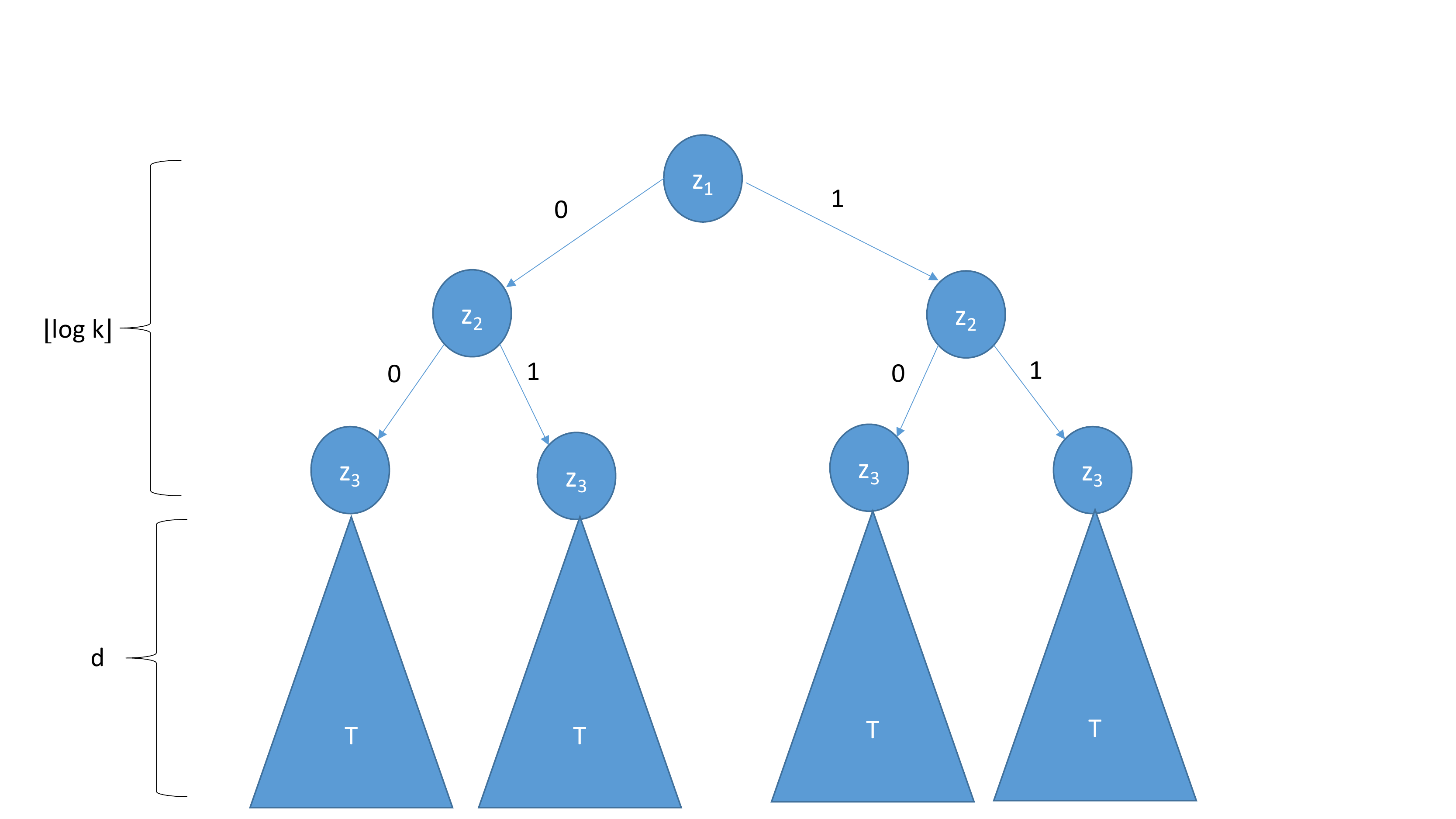}
\caption{\small{An illustration of the tree shattered by $\cH'$ in the construction in \Cref{prop:largeunions}. 
In this illustration~$\lfloor \log k\rfloor$ equals 3.}}\label{fig:shatteredtree}
\end{figure}

The upper bound is based on a multiplicative-weights argument.
	Recall that the Littlestone dimension equals the optimal number of mistakes performed
	by a deterministic online learner in the mistake-bound model 
	(i.e.\ online learning when the sequence of input examples is labelled by some $h\in\cH$). 
	Thus, it suffices to demonstrate an online learner for $\cup_{i=1}^k \cH_i$	which makes at most $3d+3\log k$ mistakes.
	Pick for every $\cH_i$ an online learner $A_i$ which makes at most $d$ mistakes on input sequences consistent with $\cH_i$.
	We set the online learning algorithm $A$ for $\cH=\cup_{i=1}^k \cH_i$ to be {\it The Weighted Majority Algorithm} by~\cite{LittlestoneWarmuth94}
	with the $k$ experts being the algorithms $A_1,\ldots, A_k$.
	Now, consider an input sequence $S=(x_1,y_1),\ldots (x_T,y_T)$ consistent with $\cH$.
	Thus, $S$ is consistent with $\cH_i$ for some $i\leq k$ and therefore $A_i$ makes at most $d$ mistakes on it.
	Thus, by the multiplicative weights analysis (see e.g.\ Corollary 2.1 in \cite{LittlestoneWarmuth94}),
	the number of mistakes $A$ makes on $S$ is at most 
	\[\frac{\log k  + d\log\frac{1}{\beta}}{\log\frac{2}{1+\beta}},\]
	where $0\leq \beta < 1$ is multiplicative factor which discounts the weights
	of wrong experts. The upper bound follows by setting $\beta=1/2$.

\end{proof}

\subsection{Closure Under Composition}\label{sec:lscomp}
%
%\new{This section is dedicated to proving \Cref{thm:littlestone} and
%	We start by proving a composition theorem for the threshold dimension in \Cref{sec:thresholdscomp},
%	which is later used in \Cref{sec:littlestonecomp} to prove \Cref{thm:littlestone}.

\subsubsection{Threshold Dimension}\label{sec:thresholdscomp}
\paragraph{Proof of \Cref{thm:littlestone}.}
We begin with the upper bound.
Let $T(G(\cH_1,\ldots,\cH_k))=n$. It suffices to show that if $n\geq 2^{4 k4^{k}\cdot t}$
	then there is~$i\leq k$ such that $T(\cH_i)\geq t$.
	By assumption, there are $x_1,x_2\ldots x_n\in X$ and functions $h_{ij}\in\cH_j$,  for $1 \leq i \leq n, 1 \leq j \leq k$
	such that  
	\[(\forall i,j\leq n): G(h_{i1}, h_{i2}, \ldots ,h_{ik})(x_j) =1 \iff  i \leq j.\]

Construct a coloring of the edges of the complete graph on $[n]$ by $4^k$ colors as follows:
	for each $1 \leq p<q \leq n$, the color of the edge $\{p,q\}$ is
	given by the following ordered sequence of~$2k$ bits:
\begin{align*}
\bigl(&h_{p,1}(x_q), h_{p,2}(x_q), \ldots ,h_{p,k}(x_q),\\
 &h_{q,1}(x_p), h_{q,2}(x_p), \ldots ,h_{q,k}(x_p)\bigr).
\end{align*}
By Ramsey Theorem \citep{Ramsey30}, if $n \geq (4^k)^{2t\cdot 4^k} = 2^{4k4^{k}\cdot t}$ then there is a monochromatic set $A\subseteq [n]$ of size $\lvert A\rvert = 2t$.\footnote{We use here the following basic bound:
if $n\geq c^{r\cdot c}$, then for every coloring of the edges of the complete graph $K_n$ in $c$ colors there exists a monochromatic set of size $r$.
This follows, e.g.\ from Corollary 3 in~\cite{Greenwood55comb}.}
 	Denote the elements of $A$ by 
	\[A=\{i_1<j_1<i_2<j_2< \ldots <i_{t}<j_{t}\},\] 
	and let $u=(u_1\ldots u_k)$, $v=(v_1\ldots v_k)$ such that the color of every pair in $A$ is
\begin{align*}
(&v_1, v_2, \ldots ,v_k,\\
 &u_1, u_2, \ldots ,u_k).
\end{align*}
Thus, for every pair $p,q\leq d$ and every $r\leq k$:
	\[h_{i_p,r}(x_{j_q}) = 
	\begin{cases}
	v_r	& p\leq q\\
	u_r	& p > q.
	\end{cases}
	\]
We claim that $u\neq v$: 
	indeed,  $x_{j_1}, x_{j_2}, x_{j_3}, \ldots ,x_{j_t}$ is threshold-shattered
	by the functions
	\[G(h_{i_1,1},h_{i_1,2}, \ldots ,h_{i_1,k}),
	G(h_{i_2,1},h_{i_2,2}, \ldots ,h_{i_2,k}),
	  \cdots
	G(h_{i_{d},1},h_{i_{d},2}, \ldots ,h_{i_{t},k}).
	\]
	Thus,
	\begin{align*}
	p\leq q &\implies G(v) = G(h_{i_p,1}\ldots h_{i_p,k})(x_q) = 1,\\
	p > q	   &\implies G(u) = G(h_{i_p,1}\ldots h_{i_p,k})(x_q) = 0.
	\end{align*}
	Therefore, $v\in G^{-1}(0)$ and $u\in G^{-1}(1)$ and in particular $u\neq v$. 
	Pick an index $r$ so that~$u_r \neq v_r$.  
	Therefore, for every $p,q\leq t$:
	\[h_{i_p,r}(x_{j_q}) = 
	\begin{cases}
	v_r	& p\leq q\\
	u_r	& p > q,
	\end{cases}
	\text{and}~~~ v_r\neq u_r.
	\]
	%Hence $h_{i_p,r}(x_{j_q})$ is some bit $\eps$ for $p \leq q$ and $1-\eps$ for $p>q$. 
	This shows that either $x_1\ldots x_t$ is threshold shattered by $\cH_r$ (if $v_r=1, u_r=0$),
		or $x_t\ldots x_1$ is thresholds shattered by $\cH_r$  (if $v_r=0, u_r=1$);
		in either way, the threshold dimension of $\cH_r$ is at least $t$. This completes the proof of the upper bound.
		
\paragraph{Lower Bound.}
We next prove the lower bound.
Let $m= 2^{\lfloor t/5\rfloor}$,
and construct $\cH\subseteq \{0,1\}^m$ randomly as follows:
$\cH$ consists of $2m$ random functions 
\[\cH= \{f_1\ldots f_m,g_1\ldots g_m\},\]
where for each $i$ set $f_i(j)=g_j(j)=0$ for $j> i$,
and for $j\leq i$, pick uniformly at random one of $f_i,g_i$,
set it to be $1$ in position $j$ and set the other to be $0$ in position $j$.
All of the above~${m-1 \choose 2}$ random choices are done independently.
By construction, $\{h_1\lor h_2 : h_1,h_2\in \cH\}$ threshold-shatters the sequence $1,2\ldots,m$ with probability $1$ and hence has threshold dimension at least~$m$.
It suffices to show that with a positive probability it holds that 
\begin{equation}\label{eq:suffices}
T(\cH) \leq 2k,
\end{equation} 
where $k=(2+\frac{1}{\log m})\log m = 2\lfloor t/5\rfloor + 1$. Indeed, $2k =  4\lfloor t/5\rfloor + 2 \leq t$ whenever $t\geq 6$.

\vspace{2mm}

We set out to prove \Cref{eq:suffices}.
Consider the following event:
\begin{center}
{\it $\mathcal{E}:=$ There exist no $x_1,\ldots,x_k\in [m]$, $h_1,\ldots h_k \in\cH$ such that~$h_i(x_j)=1$ for all $i,j\leq k$}.
\end{center}
Note that $\mathcal{E}$ implies that~$T(\cH)\leq 2k$
and therefore it suffices to show that $\Pr[E] > 0$.
Towards this end we use a union bound:
we define a family of ``bad'' events whose total sum of probabilities is less than one
with the property that if none of the bad events occurs then~$\mathcal{E}$ occurs.
The bad events are defined as follows:
for any pair of subsets $A,B\subseteq [m]$ of size $\lvert A\rvert =\lvert B\rvert = k$,
let $\mathcal{B}_{A,B}$ denote the event
\begin{center}
{\it $\mathcal{B}_{A,B}:=$  ``For every $i\in A$ there exists $r_i\in\{f_i,g_i\}$ such that $r_i(j) = 1$ for all $j\in B$.''}
\end{center}
Note that indeed $\lnot\mathcal{E}$ implies $\mathcal{B}_{A,B}$ for some $A,B$ and thus it suffices to show that 
with a positive probability none of the $\mathcal{B}_{A.B}$ occurs.
We claim that
\[\Pr[\mathcal{B}_{A,B}] \leq 2^{-k(k-1)}.\]
Indeed, for a fixed $i\in A$, the probability that one of $f_i,g_i$ equals to $1$ on all $j\in B$
is at most~$2^{-(k-1)}$. By independence, the probability that the latter simultaneously holds
for every $i\in A$ is at most $2^{-k(k-1)}$.
Thus, the probability that $\mathcal{B}_{A,B}$ occurs for at least one pair $A,B$ is at most
\[{m \choose k}^2 2^{-k(k-1)} < 2^{2k\log m - k(k-1)} \leq 1,\]
where the last inequality holds because $k=(2+\frac{1}{\log m})\log m$.
%\begin{align*}
%2(2+c)\log m &< (2+c)\log m ((2+c)\log m -1)  \\
%2\log m &< (2+c)\log m -1\\
%0 &< c\log m - 1\\
%c &> \frac{1}{\log m} = \frac{1}{t}
%\end{align*}
%

\qed

\subsubsection{Littlestone Dimension}\label{sec:littlestonecomp}
\paragraph{Proof of \Cref{thm:littlestone}.}

We will first show that for an odd $k$, the majority-vote $G=\maj_k$ satisfies
\begin{equation}\label{eq:boostmaj}
\ldim{\maj_k(\cH_1\ldots \cH_k)} \leq \tilde O(k^2\cdot d).
\end{equation}
(Recall that $d=\max_i \ldim{\cH_i}$.)
Then, we use this to argue that for any $G$,
\begin{equation}\label{eq:formula}
\ldim{G(\cH_1\ldots \cH_k)} \leq \tilde O(2^{2k}k^2d).
\end{equation}

We start with proving \Cref{eq:boostmaj}. Let $\cH = \cup_{i=1}^k \cH_i$ and $\cH_k = \maj_k(\cH,\ldots,\cH)$.
Since $\maj_k(\cH_1,\ldots,\cH_k)\subseteq \cH_k$, it suffices to show that $\ldim{\cH_k} \leq \tilde O(k^2d)$. 
We use online boosting towards this end.

Online boosting (in the realizable setting) is an algorithmic framework which allows to transform a weak online learner for $\cH$
with a non-trivial mistake-bound of $({1}/{2} - \gamma)T + R(T)$, where $R(T)=o(T)$ is a sublinear regret function,
to a strong online learner with a vanishing mistake-bound of $O(R(T)/\gamma)$.
Online boosting has been studied by several works (e.g.~\cite{Chen2012online,beygelzimmer15optimal,Brukhim20online}). 
We use here the variant given by~\cite{Brukhim20online} (see Theorem 2 there)\footnote{The bound in Theorem 2 in \citep{Brukhim20online} contains an additional term which depends on $N$, the number of copies of the weak learner which are used by the boosting algorithm. Since here we are only concerned with the number of mistakes, we can eliminate this term by letting $N\to\infty$.}.

Which weak learner to use? Recall that by~\cite{Bendavid09agnostic} (see Equation \Cref{eq:ldimonline}) 
there exists an \underline{agnostic} online learning algorithm $W$ for $\cH$ whose (expected) regret bound is 
\[R(T)=O(\sqrt{\ldim{\cH}T\log T}).\]
We claim that $W$ is a weak learner for $\cH_k$ with mistake-bound 
\begin{equation}\label{eq:weaklearn}
(1/2-1/k)\cdot T + R(T).
\end{equation}
To prove this, it suffices to show that for every sequence of examples $(x_1,y_1)\ldots (x_T,y_T)$ which is consistent with $\cH_k$
there exists $h\in \cH$ which makes at most $(1/2-1/k)\cdot T$ mistakes on it.
Indeed, let $h_1\ldots h_k$ such that $y_t = \maj_k(h_1(x_t)\ldots h_k(x_t))$ for $t\leq T$.
Thus, on every example $(x_t,y_t)$ at most $1/2-1/k$ fraction of the $h_i$'s make a mistake on it.
By averaging, this implies that one of the $h_i$ makes at most $(1/2-1/k)T$ mistakes in total,
and \Cref{eq:weaklearn} follows.

Now, by applying online boosting with $W$ as a weak learner, we obtain an algorithm with a mistake-bound of at most 
\[O\Bigl(\frac{R(T)}{1/k}\Bigr) = O\Bigl(k\sqrt{\ldim{\cH}T\log T}\Bigr).\]
Thus, since the Littlestone dimension characterizes the optimal mistake-bound,
letting $D=\ldim{\cH_k}$, we get that
\[(\forall T\geq D) : D\leq O\Bigl(k\sqrt{\ldim{\cH}T\log T}\Bigr),\]
and in particular $D\leq O\Bigl(k\sqrt{\ldim{\cH}D\log D}\Bigr)$,
which implies that 
\begin{align*}
D		      &\leq \tilde O(k^2\ldim{\cH})\\
		      &\leq \tilde Ok^2d + k^2\log k) \tag{\Cref{prop:largeunions}}\\
		      &=\tilde O (k^2 d),
\end{align*}
and finishes the proof of \Cref{eq:boostmaj}.

We next set out to prove \Cref{eq:formula}. The idea is to represent an arbitrary $G$
using a formula which only uses majority-votes and negations.
Let $G:\{0,1\}^k\to\{0,1\}$ be an arbitrary boolean function.
It is a basic fact that $G$ can be represented by a Disjunctive Normal Form (DNF) as follows:
\[G=\bigvee_{i=1}^m \Bigl( \bigwedge_{j=1}^k z_{i,j}\Bigr),\]
where each $z_{i,j} \in \{x_j, \lnot x_j\}$, and $m\leq 2^k$.
Now, note that
\[ \bigwedge_{j=1}^k z_{i,j} = \maj_{2k-1}(z_{i,1},\ldots, z_{i,k}, \underbrace{{\bf 0}, \ldots, {\bf 0}}_{k-1}),\]
and similarly
\[ \bigvee_{i=1}^m \Bigl(\bigwedge_{j=1}^k z_{i,j} \Bigr) = \maj_{2m-1}\Bigl(\bigwedge_{j=1}^k z_{1,j},\ldots, \bigwedge_{j=1}^k z_{m,j}, \underbrace{{\bf 1}, \ldots, {\bf 1}}_{m-1}\Bigr).\]
Thus, $G(\cH_1,\ldots, \cH_k)$ can be written as $\maj_{2m-1}(\cH_1',\ldots \cH_{2m-1}')$,
where for $i>m$, $\cH_i' = \{h_0\}$ is the class which contains the all-zero function $h_0$,
and for $i\leq m$, 
\[\cH_i'=\maj_{2k-1}(\cH''_{i,1}, \ldots, \cH''_{i,2k-1})\]
such that each class $\cH''_{i,j}$ is either $\cH_t$ or its negation $\neg \cH_t$ for some $t\leq k$,
or $\cH''_{i,j}$ is the class~$\{h_1\}$ which contains the all-one function.
We now apply \Cref{eq:boostmaj} to conclude that $\ldim{\cH'_{i}} = \tilde O(k^2d)$ for all $i\leq m$,
and that
\[\ldim{G(\cH_1\ldots \cH_k)} = \tilde O\bigl(m^2(k^2d)\bigr) = \tilde O(2^{2k} k^2d)\]
as required.
\qed

\section{Private Agnostic Learning and Closure of Private Learning }
\label{sec:relabel}

In this section we describe our private learning algorithm. We start by discussing a relabeling procedure (discussed in \ref{sec:main}), explaining the difficulties in designing the procedure and how we overcome them. We then provide a formal description of the relabeling procedure in $\AlgRelabel$ and prove that it can be used to construct a private algorithm that produces hypothesis that has good generalization properties; this is done by presenting an algorithm $\AlgRelabelAndLearn$.

Let $\cH$ be a hypothesis class, and suppose that we have a differentially private learning algorithm $\AAA$ for $\cH$ for the realizable setting. That is, $\AAA$ is guaranteed to succeed in its learning task whenever it is given a labeled database that is consistent with some hypothesis in $\cH$. Now suppose that we are given a database $S$ sampled from some distribution $\DDD$ on $X$ and labeled by some concept $c^*$ (not necessarily in $\cH$). So, $S$ might {\em not} be consistent with any hypothesis in $\cH$, and we cannot directly apply $\AAA$ on $S$. Heuristically, one might first {\em relabel} the database $S$ using some function from $\cH$, and then apply $\AAA$ on the relabeled database. Can we argue that such a paradigm would satisfy differential privacy, or is it the case that the relabeling process ``vaporises'' the privacy guarantees of algorithm $\AAA$?

Building on a result of \cite{BeimelNS15}, we show that it {\em is} possible to relabel the database before applying algorithm $\AAA$ while maintaining differential privacy. As we mentioned in the introduction, the relabeling procedure of 
\cite{BeimelNS15} instantiates the exponential mechanism in order to choose a hypothesis $h$ that is (almost) as close as possible to the original labels in $S$, uses this hypothesis to relabel the database, and applies the given differentially private algorithm $\AAA$ on the relabeled database to obtain an outcome $f$.

Now we want to argue that $f$ has low generalization error. 
We known (by the guarantees of the exponential mechanism) that the hypothesis $h$ with which we relabeled $S$ has a relatively small empirical error on $S$ (close to the lowest possible error). Via standard VC arguments, we also know that $h$ has a relatively small generalization error. Therefore, in order to show that the returned hypothesis $f$ has low generalization error, it suffices to show that $\error_{\DDD}(f,h)$ is small.
This might seem trivial at first sight: Since as $\AAA$ is a PAC learner, and since it is applied on a database $S$ labeled by the hypothesis $h\in\cH$,  it must (w.h.p.)\ return a hypothesis $f$ with small error w.r.t.\ $h$. Is that really the case?

The difficulty with formalizing this argument is that $\AAA$ is only guaranteed to succeed in identifying a good hypothesis when it is applied on an {\em i.i.d.}\ sample from some underlying distribution. This is {\em not} true in our case. Specifically, we first sampled the database $S$ from the underlying distribution, then {\em based on $S$}, we identified the hypothesis $h$ and relabeled $S$ using $h$. For all we know, $\AAA$ might completely fail when executed on such a database (not sampled in an i.i.d.\ manner).\footnote{To illustrate this issue, suppose that the learner $\AAA$ first checks to see if {\em exactly} half of the elements in its input sample are labeled by 1 and {\em exactly} half of them are labeled by 0. If that it the case, then $\AAA$ fails. Otherwise, $\AAA$ identifies a hypothesis $f$ with small empirical error. On an a correctly sampled database (sampled i.i.d.\ from some underlying distribution) the probability that exactly half of the elements will be labeled as 0 is low enough such that $\AAA$ remains a valid learning algorithm. However, if we  first sample the elements, and then choose a hypothesis that evaluates to 1 on exactely half of them, then this breaks the utility guarantees of $\AAA$ completely.}
%
%There is an easy solution to this ``might completely fail'' issue:
Therefore, before applying $\AAA$ on the relabeled database, we subsample i.i.d.\ elements from it, and apply $\AAA$ on this newly sampled database. Now we know that $\AAA$ is applied on an i.i.d.\ sampled database, and so, by the utility guarantees of $\AAA$, the hypotheses $f$ and $h$ are close w.r.t.\ the underlying distribution. 
However, this subsampling step changes the distribution from which the inputs of $\AAA$ are coming from. This distribution is no longer $\DDD$ (the original distribution from which $S$ was sampled), rather it is the uniform distribution on the empirical sample $S$. This means that what we get from the utility guarantees of $\AAA$ is that $\error_S(f,h)$ is small. We need to show that $\error_{\DDD}(f,h)$ is small. 

If $\AAA$ is a {\em proper} learner, then $f$ is itself in $\cH$, and hence, using standard VC arguments, the fact that $\error_{S}(f,h)$ is small implies that $\error_{\DDD}(f,h)$ is small. However, if $\AAA$ is an improper learner, then this argument breaks because $f$ might come from a different hypothesis class with a much larger VC dimension.

To overcome this difficulty, we will instead relate $\error_S(f,h)$ and $\error_{\DDD}(f,h)$ using the generalization properties of differential privacy. These generalization properties state that if a predicate $t$ was identified using a differentially private algorithm, then (w.h.p.)\ the empirical average of this predicate and its expectation over the underlying distribution are close. More formally, we would like to consider the predicate  $(h{\oplus}f)(x)=h(x)\oplus f(x)$, which would complete our mission because the empirical average of that predicate on $S$ is $\error_S(f,h)$, and its expectation over $\DDD$ is $\error_{\DDD}(f,h)$.
However, while $f$ is indeed the outcome of a differentially private computation, $h$ {\em is not}, and we cannot directly apply the generalization properties of differential privacy to this predicate. Specifically, our relabeling procedure does not reveal the chosen hypothesis $h$. 

We overcome this issue by introducing the following conceptual modification to the relabeling procedure. Let us think about the input database $S$ as {\em two} databases $S=D{\circ}T$. In the relabeling procedure we still relabel all of $S$ using $h$. We show that (a small variant of) this relabeling procedure still satisfies differential privacy w.r.t.\ $D$ {\em even if the algorithm publicly releases the relabeled database $T$}. 
This works in our favour because given the relabeled database $T$ we can identify a hypothesis $h'\in\cH$ that agrees with it, and by standard VC arguments we know that $\error_{\DDD}(h,h')$ is small (since both $h,h'$ come from $\cH$). In addition, $h'$ is computed by post-processing the relabeled database $T$ which we can view as the result of a private computation w.r.t.\ $D$. Therefore, we can now use the generalization properties of differential privacy to argue that
$\error_D(f,h)\approx\error_{\DDD}(f,h)$, which would allow us to complete the proof.
%
%\medskip
We remark that the conceptual modification of treating $S$ as two databases $S=D{\circ}T$ is crucial for our analysis. We do not know if the original relabeling procedure of \cite{BeimelNS15} can be applied when $\AAA$ is an improper learner. 

In \cref{alg:LabelBoostProcedure} we formally describe $\AlgRelabel$. We next provide an informal description of the algorithm.  Let $\cH$ be a hypothesis class, and let $q$ be a score function. Algorithm $\AlgRelabel$ takes two input databases $D,T\in(X\times\{0,1\})^*$, where the labels in $D$ and $T$ are arbitrary. The algorithm {\em relabels} $D$ and $T$ using a hypothesis $h\in\cH$ with near optimal score $q(D,h)$. The output of this algorithm is the two relabeled databases $\tilde{D}$ and $\tilde{T}$. Observe that algorithm $\AlgRelabel$ is clearly {\em not} differentially private, since it outputs its input database (with different labels). Before formally presenting algorithm $\AlgRelabel$, we introduce the following definition.

\begin{definition}
\label{def:matched-sensitivity}
Let $X$ be a domain and let $\cH$ be a class of functions over $X$. A function $q:(X \times \ourset{0,1})^*\times \cH\rightarrow\R$ has {\em matched-sensitivity} $k$ if for every $S\in (X \times \ourset{0,1})^*$, every $(x,y),(x'y')\in X \times \ourset{0,1}$, and every $h,h'\in \cH$ that agree on every element of $S$ we have that
$$
\left| q\left(S\cup\{(x,y)\},h\right) - q\left(S\cup\{(x',y')\},h'\right) \right|\leq k.
$$
\end{definition}

In words, a score function $q$ has low {\em matched-sensitivity} if 
given ``similar'' %(i.e., neighboring) 
databases it assigns ``similar'' scores %(up to $\pm1$) 
to ``similar'' solutions. %($h$ and $h'$).
Note that if a function $q$ has matched-sensitivity 1, then in particular, it has (standard) sensitivity (at most) 1.

\begin{example}
Let $\cH$ be a concept class over $X$. %\footnote{Formally, in the definition of matched-sensitivity we assume that $\cH$ contains functions over $Z=X\times\{0,1\}$. This gap can be bridged by defining $\tilde{\cH}=\left\{\left.\tilde{h}:Z\rightarrow\{0,1\} \right| h\in \cH\right\}$, where $\tilde{h}(x,y)=h(x)$.} 
Then, the score function $q(S,h)$ that takes a labeled database $S\in(X\times\{0,1\})^*$ and a concept $h\in H$ and returns the number of errors $h$ makes on $S$ has matched-sensitivity at most 1.
\end{example}

\begin{algorithm}
\caption{$\AlgRelabel$}\label{alg:LabelBoostProcedure} 
\vspace{2pt}
{\bf Global parameters:} 
\begin{itemize}
\item 
$\cH$ is a concept class over a domain $X$, 
\item
$q:(X\times\{0,1\})^*\times \cH\rightarrow\R$ is a score function with matched-sensitivity at most 1 (see Definition~\ref{def:matched-sensitivity}), which given a labeled database assigns scores to concepts from $\cH$,
%\item
%$\Alg$ is a an $(\epsilon,\delta)$-differentially private algorithm.
\end{itemize}
{\bf Inputs:} Labeled databases $D,T\in(X\times\{0,1\})^*$. We denote $S=D{\circ}T$.
\begin{enumerate}[rightmargin=10pt,itemsep=1pt,topsep=3pt]

\item Let $P=\{p_1,\ldots,p_\ell\}$ be the set of all unlabeled points appearing at least once in $S$.

\item 
 Let $H=\Pi_{\cH}(P)=\ourset{h|_{P} :h\in \cH}$, where $h\vert_{P}$ denotes the restriction of $h$ to $P$
		(i.e., $H$ contains all patterns of $\cH$ when restricted to $P$).
%For every $(z_1,\ldots,z_\ell)\in \Pi_C(P)=\{\left( c(p_1),\ldots,c(p_\ell) \right) :c\in C\}$, add to $H$ an arbitrary concept $c\in C$ s.t.\ $c(p_i)=z_i$ for every $1\leq i\leq\ell$.

\item \label{step:Oneexpmech} Choose $h\in H$ using the exponential mechanism with privacy parameter $\eps{=}1$, score function $q$, solution set $H$, and the database $D$.

\item \label{step:Onerelabel} Relabel $S$ using $h$, and denote the relabeled databases as $S^h=D^h{\circ}T^h$. That is, if $D=(x_i,y_i)_{i=1}^d$ then   $D^h=(x_i,h(x_i))_{i=1}^d$,  and similarly with $T^h$.

\item \label{step:outputRelabeled} 

Output $D^h,T^h$.

\end{enumerate}
\end{algorithm}

We next present an algorithm $\AlgRelabelAndLearn$ and analyze its properties. 
This algorithm is an abstraction of parts of $\AlgPrivateAgnostic$ and
$\AlgClosureLearn$ and is used for unifying the proofs of privacy and correctness of these algorithms. 
We start with an informal description of  algorithm $\AlgRelabelAndLearn$.
The algorithm first applies the relabeling algorithm $\AlgRelabel$ and then applies a private algorithm to the relabeled database. For the analysis of our algorithms in the sequence,
$\AlgRelabelAndLearn$ also publishes part of the relabeled database. We prove that $\AlgRelabelAndLearn$ guarantees differential privacy w.r.t.\ to the part of the database that it did not publish.

\begin{algorithm}
\caption{$\AlgRelabelAndLearn$}\label{alg:LabelAndLearn} 
\vspace{2pt}
{\bf Global parameters:} 
\begin{itemize}
\item 
$\cH$ is a concept class over a domain $X$, 
\item
$q:(X\times\{0,1\})^*\times \cH\rightarrow\R$ is a score function with matched-sensitivity at most 1, which given a labeled database assigns scores to concepts from $\cH$,
\item
$\Alg$ is a an $(\epsilon,\delta)$-differentially private algorithm.
\end{itemize}
{\bf Inputs:} Labeled databases $D,V,W\in(X\times\{0,1\})^*$. We denote $S=D{\circ}V{\circ}W$.
\begin{enumerate}[rightmargin=10pt,itemsep=1pt,topsep=3pt]

\item Execute $\AlgRelabel(D,V{\circ}W )$ with score function $q$ and hypothesis class $\cH$
to obtain relabeled databases $\tilde{D},\tilde{V},\tilde{W}$.

\item
Let $\overline{h}$ be a hypothesis in $\cH$ that is consistent with $\tilde{V}$.

\item \label{step:output} 

Output $\Alg(S),\tilde{V},\overline{h}$.

\end{enumerate}
\end{algorithm}

In Lemma~\ref{lemma:RelabelNewPrivacy}, we  analyze the privacy properties of algorithm $\AlgRelabelAndLearn$.

\begin{lemma}\label{lemma:RelabelNewPrivacy}
Let $\Alg$ be an $(\epsilon,\delta)$-differentially private algorithm
and $q$ be a score function with matched-sensitivity 1. 
Then, for every $V$, algorithm $$\AlgRelabelAndLearn^V(D,W)=\AlgRelabelAndLearn(D,V,W)
$$
is $(\epsilon+3,4e\delta)$-differentially private w.r.t.\ $D{\circ}W$.
In particular, $\Alg(\AlgRelabel(D,T))$ is  $(\epsilon+3,4e\delta)$-differentially private.
\end{lemma}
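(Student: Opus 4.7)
The plan is to decompose the analysis into the two random steps of $\AlgRelabelAndLearn$: the internal call to $\AlgRelabel$ (whose randomness sits in the exponential mechanism that selects the hypothesis $h$ used to relabel $S$), and the subsequent invocation of $\Alg$ on the relabeled sample. Since $\bar h$ is obtained by post-processing of $\tilde V$ (any hypothesis in $\cH$ consistent with $\tilde V$) and $\tilde V$ is determined by $V$ and $h|_V$, by post-processing it suffices to bound the joint distribution of $h$ and $\Alg(\tilde S)$, where $\tilde S$ denotes the relabeled sample that $\Alg$ acts on.

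First I would set up a coupling between executions on neighboring inputs $(D_0,V,W_0)$ and $(D_1,V,W_1)$. Let $P_i$ be the unlabeled points in $D_i \circ V \circ W_i$ and $H_i=\Pi_\cH(P_i)$. Neighbors differ in one record, so $|P_0\triangle P_1|\le 2$. Define an extension map $\psi : H_0 \to H_1$ by $\psi(h^0)|_{P_0 \cap P_1} = h^0|_{P_0 \cap P_1}$ (such an extension exists since $h^0$ arises as the restriction of some $h \in \cH$, which can be restricted to $P_1$ as well). Because at most one point lies in $P_1\setminus P_0$, each $h^1\in H_1$ has at most $2$ preimages under $\psi$, and likewise when the roles of $H_0,H_1$ are swapped.

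Next I would analyze the coupled exponential-mechanism masses. Matched-sensitivity of $q$, applied to $h^0$ and $\psi(h^0)$ on the common databases $D_0,D_1$, yields $|q(D_0,h^0)-q(D_1,\psi(h^0))|\le 1$, so the numerator of the exponential-mechanism mass on $h^0$ under $D_0$ is at most $e^{1/2}$ times that on $\psi(h^0)$ under $D_1$. A parallel matched-sensitivity argument via the extension map in the reverse direction, combined with the at-most-$2$ preimage multiplicity, bounds the ratio $Z_1/Z_0$ of the normalizing constants by $2e^{1/2}$. Combining, $\Pr_0[h^0]\le 2e \cdot \Pr_1[\psi(h^0)]$. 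Conditioned on the coupled choice $(h^0,\psi(h^0))$, the relabeled samples $\tilde S_0^{h^0}$ and $\tilde S_1^{\psi(h^0)}$ agree on $P_0 \cap P_1$ (both labeled by the common restriction) and differ in exactly the single record at which the original neighbors differ. Hence $\Alg$'s $(\epsilon,\delta)$-DP yields
\[
\Pr[\Alg(\tilde S_0^{h^0})\in F]\le e^\epsilon \Pr[\Alg(\tilde S_1^{\psi(h^0)})\in F]+\delta,
\]
and $\tilde V_0^{h^0}=\tilde V_1^{\psi(h^0)}$ because the coupling agrees on $V$'s points.

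Finally, summing over $h^0\in H_0$ and substituting $h^1=\psi(h^0)$, absorbing the at-most-$2$ multiplicity of $\psi$ and the $2e$ ratio derived above, yields multiplicative loss at most $e^{\epsilon}\cdot (2e)\cdot 2 \le e^{\epsilon+3}$ and additive failure probability at most $(2e)\cdot 2\cdot\delta = 4e\delta$, proving the claimed $(\epsilon+3,4e\delta)$-DP of $\AlgRelabelAndLearn^V$. The ``in particular'' statement is the special case $V=\emptyset$. The main technical obstacle is the careful bookkeeping of constants in the coupling, specifically showing that the ratio of normalizing constants together with the multiplicity of $\psi$ combines to exactly the stated $3$ and $4e$ factors; the conceptual ingredients (matched-sensitivity of $q$, the bounded change $|P_0\triangle P_1|\le 2$ of the solution set, and the $(\epsilon,\delta)$-DP of $\Alg$ on one-apart relabeled inputs) are straightforward.
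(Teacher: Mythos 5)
Your proof is correct and follows essentially the same approach as the paper's. Both rely on the same three ingredients: (i) the solution sets $H_0,H_1$ change by at most one point $p$ of the domain, so that after restricting to the common points $K=P_0\cap P_1$ the patterns line up with at-most-$2$ multiplicity; (ii) matched-sensitivity gives $|q(D_0,h^0)-q(D_1,\psi(h^0))|\le 1$ on paired patterns; (iii) once the coupled relabeled databases are neighbors, $\Alg$'s $(\epsilon,\delta)$-DP is applied, and the fact that $V^{h^0}=V^{\psi(h^0)}$ lets the section of the output event indexed by $\tilde V$ match across the two executions. The only cosmetic difference is that the paper organizes the sum by grouping patterns according to their restriction to $K$ (writing $w_{t,h}$ for the combined mass of the one or two extensions $h_t^{(0)},h_t^{(1)}$), whereas you phrase the same bookkeeping through an explicit map $\psi\colon H_0\to H_1$ and count its preimages; this is a change of variables, not a different argument, and both bookkeepings give the identical factor $2e\cdot 2=4e$ and the bound $4e\cdot e^\epsilon\le e^{\epsilon+3}$. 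One small thing worth tightening: $\psi$ is not uniquely determined by the condition $\psi(h^0)|_K=h^0|_K$ when $P_1\setminus P_0\neq\emptyset$ (there may be two extensions to $P_1$), so one should say ``fix an arbitrary such extension for each $K$-pattern'' so that $\psi$ is a bona fide function and the preimage count is well-defined; this is exactly the role of the $h^{(0)}_t,h^{(1)}_t$ notation in the paper's version.
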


%\begin{remark}
%Observe, for example, by setting $S=D{\circ}V{\circ}W$ for $|D|=|W|=|S|/2$, and $|V|=0$, we have that $\AlgRelabel$ is differentially private w.r.t.\ all of $S$ (in this case the outcome consists only of the value $z$ computed by algorithm $\Alg$). By setting $S=D{\circ}V{\circ}W$ for $|D|=|V|=|S|/2$, and $|W|=0$, we get that $\AlgRelabel,V^h$ is differentially private w.r.t.\ $D$, and in this case the outcome consists of both $z$ and the relabeled database $V^h$.
%\end{remark}

\begin{proof}
Fix a database $V$, and let $D_1{\circ}W_1$ and $D_2{\circ}W_2$ be two neighboring databases. We assume %without loss of generality 
that $D_1{\circ}W_1$ and $D_2{\circ}W_2$ differ on their $D$ portion, so that $W_1=W_2=W$ and $D_1=D\cup\{(\ppp_1,y_1)\}$ and $D_2=D\cup\{(\ppp_2,y_2)\}$. The analysis for the other case is essentially identical. 
Consider the executions of $\AlgRelabel$ on 
$S_1=D_1{\circ}V{\circ}W$ and on $S_2=D_2{\circ}V{\circ}W$, and
denote by $H_1,P_1$ and by $H_2,P_2$ the elements $H,P$ as they are in the executions of algorithm $\AlgRelabel$ on $S_1$ and on $S_2$.
%The main difficulty in proving differential privacy is that $H_1$ and $H_2$ can significantly differ. We show, however, that the distribution on relabeled databases $(S{\circ}T)^h$ generated in \stepref{step:Onerelabel} of the two executions are similar in the sense that for each relabeled database in one of the distributions there exist one or two databases in the other s.t.\ (1) all these databases have, roughly, the same probability, and (2) they differ on at most one entry. Thus, executing the differentially private algorithm $\AAA$ on $(S{\circ}T)^h {\circ} D$ preserves differential privacy. We now make this argument formal.

Since $S_1$ and $S_2$ are neighbors, it follows that  $|P_1\setminus P_2|\leq 1$ and  $|P_2\setminus P_1|\leq 1$. 
Let $K=P_1\cap P_2$. 
Since every pattern in $\Pi_{\cH}(K)$ has at most two extensions in $\Pi_{\cH}(H_t)$, we get that for every $t \in \ourset{1,2}$.
$$|\Pi_\cH(K)|\leq|\Pi_\cH(P_t)|\leq2|\Pi_\cH(K)|.$$
Thus, $|H_1|\leq2|H_2|$ and similarly $|H_2|\leq2|H_1|$.

More specifically, for every $t\in \ourset{1,2}$ and every pattern $h\in\Pi_C(K)$ there are either one or two (but not more) patterns in $H_t$ that agree with $h$ on $K$.
We denote these one or two patterns by $h_t^{(0)}$ and $h_t^{(1)}$, which may be identical if only one unique pattern exists.
By the fact that $q$ has matched-sensitivity at most 1, for every $t_1,t_2 \in \ourset{1,2}$ and every $b_1,b_2\in\ourset{0,1}$ we have that 
\begin{align*}
|q(D_1,h^{(b_1)}_{t_1})-q(D_2,h^{(b_1)}_{t_2})| = |q(D\cup\{(\ppp_1,y_1)\},h_1)-q(D\cup\{(\ppp_2,y_2)\},h_2)|
\leq 1,
\end{align*}
where the last inequality is because $h^{(b_1)}_{t_1}$ and $h^{(b_2)}_{t_2}$ agree on every point in $D$ and because $q$ has matched-sensitivity at most 1.

For every $h \in \Pi_\cH(K)$ and $t \in \ourset{1,2}$, let $w_{t,h}$ be the probability that the exponential mechanism chooses either 
$h^{(0)}_{t}$ or $h^{(1)}_{t}$ in \stepref{step:Oneexpmech} of the execution of $\AlgRelabel$ on $S_i$. We get that for every $h\in\Pi_C(K)$,
\begin{eqnarray*}
w_{1,h}
%\Pr\left[  \begin{array}{c}
%	\text{The exponential}\\
%	\text{mechanism chooses}\\
%	\text{$h_{1,\vec{z}}$ or $h'_{1,\vec{z}}$}\\
%	\text{on Step~\ref{step:expmech} of the}\\
%	\text{execution on $(N,D_1)$}
%\end{array} \right]
&\leq& \frac{\exp(\frac{1}{2}\cdot q(D_1,h^{(0)}_{1}))+\exp(\frac{1}{2}\cdot q(D_1,h^{(1)}_{1}))}{\sum_{f\in \Pi_{\cH}(P_1)}{\exp(\frac{1}{2}\cdot q(D_1,f))}}\\
&\leq& \frac{\exp(\frac{1}{2}\cdot q(D_1,h^{(0)}_{1}))+\exp(\frac{1}{2}\cdot q(D_1,h^{(1)}_{1}))}{\sum_{f\in\Pi_\cH(K)}{\exp(\frac{1}{2}\cdot q(D_1,f^{(0)}_{1}))}}\\
&\leq& \frac{\exp(\frac{1}{2}\cdot [q(D_2,h^{(0)}_{2})+1])+\exp(\frac{1}{2}\cdot [q(D_2,h^{(1)}_{2})+1])}{\frac{1}{2}\sum\limits_{f\in\Pi_\cH(K)}\left(\exp(\frac{1}{2}[q(D_2,h^{(0)}_{2})-1])+
\exp(\frac{1}{2}[q(D_2,h^{(1)}_{2})-1])\right)}\\
&\leq& 2 e\cdot \frac{\exp(\frac{1}{2}\cdot q(D_2,h^{(0)}_{2}))+\exp(\frac{1}{2}\cdot q(D_2,h^{(1)}_{2}))}{\sum_{f\in \Pi_{\cH}(P_2)}{\exp(\frac{1}{2}\cdot q(D_2,f))}}\\
&\leq& 4e \cdot
%\\
%&\leq& 4e \cdot 
w_{2,h}.
%\Pr\left[  \begin{array}{c}
%	\text{The exponential}\\
%	\text{mechanism chooses}\\
%	\text{$h_{2,\vec{z}}$ or $h'_{2,\vec{z}}$}\\
%	\text{on Step~4 of the}\\
%	\text{execution on $(N,D_2)$}
%\end{array} \right]
\end{eqnarray*}

%Next note the three possible cases for $S_1$ and $S_2$. They can be identical databases, or they can be neighboring databases (of the same size), or one can be created from the other by adding/removing one element.

We are now ready to conclude the proof. 
For every $h\in\Pi_\cH(K)$, let 
$I_t$ be the event that the exponential mechanism chooses in \stepref{step:Oneexpmech} of the execution on $S_t$ either
$h^{(0)}_t$ or $h^{(1)}_t$ and
$h_{t}$ be the random variable denoting the pattern that the exponential mechanism chooses in \stepref{step:Oneexpmech} of the execution on $S_t$ conditioned on the event $I_t$. 
Observe that $S^{h_0}$ and $S^{h_1}$ are distributions on neighboring databases;
thus, applying the differentially private $\Alg$ on them satisfies differential privacy,
i.e., for every possible sets of outputs $F$ of $\Alg$:
\begin{equation*}
\Pr\left[\Alg\left( S_1^{h_1} \right) \in F\right]\\
\leq e^{\epsilon}\Pr\left[\Alg\left(  S_2^{h_2} \right) \in F \right]+\delta.
\end{equation*}

%So, by our previous notation, $\Pr[\hhh_{i,\vec{z}}]=w_{i,\vec{z}}$.
%By the privacy properties of algorithm $\AAA$ we have that for any set $F$ of possible outputs of algorithm $\BBB$ 

Recall that  algorithm $\AlgRelabelAndLearn$ returns {\em three} outcomes: the relabeled database $V^h$,  hypothesis $\overline{h}$ that is consistent with $V^h$, and the output of algorithm $\Alg$. 
As $\overline{h}$ is computed from $V^h$, we can consider it as post-processing and ignore it, and assume for the the privacy analysis that $\AlgRelabelAndLearn$ only has two outputs: $V^h$ and the output of algorithm $\Alg$. 
Also recall that the database $V$ is fixed, and observe that once the hypothesis $h$ is fixed (in \stepref{step:Oneexpmech} of algorithm $\AlgRelabel$), the relabeled database $V^h$ is also fixed.
Furthermore, for every $h\in\Pi_\cH(K)$ we have that $V^{h^{(0)}_{t}}=V^{h^{(1)}_{t}}$, since $h^{(0)}_{t}$ and $h^{(1)}_{t}$ agree on all of $V$. 

Let $F\subseteq(X\times\{0,1\})^*\times R$ be a set of possible outcomes for algorithm $\AlgRelabelAndLearn$, where $R$ is the range of algorithm $\Alg$. %We denote
%$$F_{\rm database}=\left\{W^h \in(X\times\{0,1\})^* : \exists r\in R \text{ s.t. } (W^h,r)\in F \right\},$$
%and for
For every $h$ we denote 
$$F_{h}=\left\{ r\in R : (V^h,r)\in F \right\}.$$
Observe that for every $h\in\Pi_C(K)$ we have that
$$
F_{h^{(0)}_{1}}=F_{h^{(1)}_{1}}=F_{h^{(1)}_{2}}=F_{h^{(2)}_{2}}=F_h,
$$
because $h^{(0)}_{1},h^{(1)}_{1},h^{(0)}_{2},h^{(1)}_{2}$ agree on all points in $V$. We calculate,
\begin{align*}
\Pr[\AlgRelabelAndLearn\left( S_1 \right)\in F]&= \sum_{h\in\Pi_\cH(K)} w_{1,h} 
\cdot\Pr\left[\AlgRelabelAndLearn\left( S_1 \right)\in F \Big| 
I_t
\right]\\
&=\sum_{h\in\Pi_\cH(K)} w_{1,h} 
\cdot
\Pr\left[\Alg\left( S_1^{h_1} \right) \in F_{h_{1}}\right]\\
&\leq \sum_{h\in\Pi_\cH(K)}4e \; w_{2,h}
\cdot
\left(e^{\epsilon}\Pr\left[\Alg\left(  S_2^{h_2} \right) \in F_{h_{2}} \right]+\delta\right)
%\label{eq:AlgDP}
\\
&\leq  e^{\epsilon+3}\cdot\Pr[\BBB\left( S_2 \right)\in F]+4e\delta.
\end{align*}
\end{proof}

The next claim proves that $\AlgRelabel$ returns a hypothesis whose score  is close to the hypothesis with smallest score in the class $\cH$.  

\begin{claim}\label{claim:RelabelUtility}
Fix $\alpha$ and $\beta$, and let $S=D{\circ}T\in(X\times\{0,1\})^*$ be a labeled database such that $$|D|\geq\frac{2}{\alpha}\ln\left(\frac{1}{\beta}\right)+\frac{2\VC(\cH)}{\alpha}\ln\left(\frac{e|S|}{\VC(\cH)}\right).$$
Consider the execution of $\AlgRelabel$ on $S$, and let $h$ denote the hypothesis chosen on \stepref{step:Oneexpmech}. With probability at least $(1-\beta)$ we have that $q(D,h)\leq\min_{c\in \cH}\{q(D,c)\}+\alpha|D|$.
In particular, assuming that $|D|\geq|S|/2$, it suffices that
$$
|D|\geq\frac{4}{\alpha}\ln\left(\frac{1}{\beta}\right)+\frac{10\VC(\cH)}{\alpha}\ln\left(\frac{20e}{\alpha}\right).
$$
\end{claim}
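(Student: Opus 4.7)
The proof will follow directly from the utility guarantee of the exponential mechanism (\cref{prop:ExpMech}) combined with the Sauer--Shelah--Perles bound on the size of the projected hypothesis class. The key observation is that \stepref{step:Oneexpmech} of $\AlgRelabel$ is exactly an application of the exponential mechanism on the database $D$ with solution set $H=\Pi_\cH(P)$, privacy parameter $\epsilon=1$, and the score function $q$, which has matched-sensitivity at most 1 and therefore also ordinary sensitivity at most 1. Hence by \cref{prop:ExpMech}(ii), the probability that the chosen pattern $h\in H$ satisfies $q(D,h)>\min_{f\in H}\{q(D,f)\}+\alpha|D|$ is at most $|H|\cdot \exp(-\alpha|D|/2)$. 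Note that minimizing over $H$ is the same as minimizing over $\cH$: every $c\in\cH$ projects to some $c|_P\in H$, and since all the points of $D$ lie in $P$, we have $q(D,c)=q(D,c|_P)$.

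Next, I would invoke the Sauer--Shelah--Perles Lemma to bound
\[|H|=|\Pi_\cH(P)|\leq \left(\frac{e|P|}{\VC(\cH)}\right)^{\VC(\cH)}\leq \left(\frac{e|S|}{\VC(\cH)}\right)^{\VC(\cH)}.\]
Substituting this and requiring that the failure probability be at most $\beta$ yields the sufficient condition
\[\frac{\alpha|D|}{2}\geq \VC(\cH)\ln\!\left(\frac{e|S|}{\VC(\cH)}\right)+\ln\!\left(\frac{1}{\beta}\right),\]
which, after rearrangement, is exactly the first stated lower bound on $|D|$.

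Finally, for the ``in particular'' part, the hypothesis $|D|\geq|S|/2$ lets us replace $|S|$ by $2|D|$ inside the logarithm, producing the implicit inequality
\[|D|\geq\frac{2}{\alpha}\ln(1/\beta)+\frac{2\VC(\cH)}{\alpha}\ln\!\left(\frac{2e|D|}{\VC(\cH)}\right).\]
I would then verify that the explicit choice $|D|=\frac{4}{\alpha}\ln(1/\beta)+\frac{10\VC(\cH)}{\alpha}\ln(20e/\alpha)$ suffices, by showing $\ln(2e|D|/\VC(\cH))\leq 5\ln(20e/\alpha)$ for this $|D|$. Using $\ln(ab)\leq \ln a+\ln b$ together with the elementary estimate $\ln x\leq x^{1/2}$ reduces this to a routine algebraic check; this final verification is the only place where the argument is mildly technical, but it poses no conceptual obstacle.
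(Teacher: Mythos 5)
Your proposal follows the paper's own proof very closely for the main inequality: both invoke the utility guarantee of the exponential mechanism (\Cref{prop:ExpMech}) together with the Sauer--Shelah--Perles bound $|H|\leq(e|S|/\VC(\cH))^{\VC(\cH)}$, and both make the (correct) observation that because $H=\Pi_\cH(P)$ contains every pattern of $\cH$ on $P\supseteq$ support of $D$, minimizing $q(D,\cdot)$ over $H$ equals minimizing over $\cH$. The paper leaves the ``in particular'' numeric claim unproven, so there your proposal goes a little further, and there are two small caveats worth flagging. First, the claim requires the inequality to hold for \emph{every} $|D|$ above the stated threshold, not just at the threshold value; this follows because $D\mapsto D - \frac{2}{\alpha}\ln\frac{1}{\beta} - \frac{2\VC(\cH)}{\alpha}\ln\frac{2eD}{\VC(\cH)}$ is nondecreasing for $D\geq 2\VC(\cH)/\alpha$, but you should say so. Second, and more substantively, your intermediate claim that
\[
\ln\!\left(\frac{2e|D|}{\VC(\cH)}\right)\;\leq\;5\ln\!\left(\frac{20e}{\alpha}\right)
\]
at the threshold is \emph{false} in general: for, say, $\alpha=1/2$, $\VC(\cH)=1$, and $\beta$ so small that $\ln(1/\beta)\approx 10^{10}$, the left side exceeds $26$ while the right side is about $23.5$. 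You must keep the $\frac{1}{\VC(\cH)}\ln\frac{1}{\beta}$ slack term on the left of the comparison (i.e., show $\ln\frac{2e|D|}{\VC(\cH)}\leq \frac{1}{\VC(\cH)}\ln\frac{1}{\beta}+5\ln\frac{20e}{\alpha}$), which is what the factor-of-two gap between $\frac{2}{\alpha}\ln\frac{1}{\beta}$ and $\frac{4}{\alpha}\ln\frac{1}{\beta}$ in the two bounds is there for. With that correction (and, e.g., a case split on whether the $\ln(1/\beta)$ term or the $\VC$ term dominates $|D|$, using $\ln x\leq x$), the verification goes through; the issue is localized to that one sentence and does not affect the overall strategy.
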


\begin{proof}
Note that by Sauer-Shelah-Perles lemma, 
\begin{eqnarray*}
|H|&=&|\Pi_\cH(P)| \leq \left(\frac{e|P|}{\VC(\cH)}\right)^{\VC(\cH)}
\leq\left(\frac{e|S|}{\VC(\cH)}\right)^{\VC(\cH)}.
%\leq\left(\frac{2e|D|}{\VC(\cH)}\right)^{\VC(\cH)}
\end{eqnarray*}

As $H$ contains all patterns of $\cH$ restricted to $S$, the set $H$ contains a pattern $f^*$ s.t.\ $q(D,f^*)=\min_{c\in \cH}\ourset{q(D,c)}$. 
Hence, Proposition~\ref{prop:ExpMech} (properties of the exponential mechanism) ensures that the probability of the exponential mechanism choosing an $h$ s.t.\ $q(D,h)>\min_{c\in \cH}\{q(D,c)\}+\alpha$ is at most
$$
|H|\cdot\exp(-\frac{\alpha |D|}{2})\leq
\left(\frac{e|S|}{\VC( \cH)}\right)^{\VC( \cH)}\cdot\exp(-\frac{\alpha |D|}{2}),
$$
which is at most $\beta$ whenever $|D|\geq\frac{2}{\alpha}\ln(\frac{1}{\beta})+\frac{2\VC( \cH)}{\alpha}\ln\left(\frac{e|S|}{\VC( \cH)}\right)$.
\end{proof}

Let $f$ denote the hypothesis returned by $\AAA$ and let $h$ be a hypothesis consistent with the pattern chosen on \stepref{step:Oneexpmech} of $\AlgRelabel$. 
The next lemma relates the generalization error $\error_{\DDD}(f,h)$ to the empirical error $\error_{D}(f,h)$.

\begin{lemma}\label{lemma:RelabelImportantUtility}
Fix $\alpha$ and $\beta$, and let $\mu$ be a distribution on $X\times\ourset{0,1}$ and $\DDD$ be  the marginal distribution on unlabeled examples from $X$. Furthermore,
let $S=D{\circ}V{\circ}W\in(X\times\{0,1\})^*$ be database sampled i.i.d.\ from $\mu$ such that $$|V|\geq O\left( \frac{\VC( \cH)\ln\left(\frac{1}{\alpha}\right)+\ln\left(\frac{1}{\beta}\right)}{\alpha} \right),$$
and
$$|D| \geq O\left( \frac{\VC( \cH)+\ln\left(\frac{1}{\beta}\right)}{\alpha^2} \right).$$
Consider the execution of $\AlgRelabelAndLearn$ on $S$,  let $h \in \cH$ be a hypothesis consistent with the pattern chosen on \stepref{step:Oneexpmech} of $\AlgRelabel$ and assume that $\Alg$ outputs some hypothesis $f$. With probability at least $1-O(\beta+\delta|D|)$ we have that 
$$
\error_\DDD(f,h) \leq O(\error_D(f,h) + \alpha)
.
$$
\end{lemma}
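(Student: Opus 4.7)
My plan is to insert the bridging hypothesis $\bar{h}$ between $f$ and $h$ and decompose via the triangle inequality $\error_\DDD(f,h)\leq \error_\DDD(f,\bar{h})+\error_\DDD(\bar{h},h)$, then bound the two summands using two different generalization arguments: DP generalization on $D$ for the first, and VC generalization on $V$ for the second. This is exactly the route sketched in the preamble of \Cref{sec:relabel} that the split $S=D\circ V\circ W$ was designed to enable.

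The first step invokes \Cref{lemma:RelabelNewPrivacy}: holding $V$ fixed, the full output tuple $(\Alg(S),\tilde{V},\bar{h})$ of $\AlgRelabelAndLearn$ is $(\epsilon+3,4e\delta)$-differentially private with respect to $D\circ W$, and in particular with respect to $D$. By post-processing, the binary predicate $\test_{\bar{h},f}(x):=\mathbbm{1}[\bar{h}(x)\neq f(x)]$ is also produced by an $(\epsilon+3,4e\delta)$-DP procedure applied to $D$. Conditioning on $V,W$ (so that $D$ is i.i.d.\ from $\mu$) and applying \Cref{thm:GeneralizationMultiplicative} with $n=|D|$, noting that $\Ex_{x\sim\DDD}[\test_{\bar{h},f}(x)]=\error_\DDD(\bar{h},f)$ and that the empirical average over the $x$-marginal of $D$ equals $\error_D(\bar{h},f)$, yields, with probability at least $1-O(\delta|D|)$,
\[
 \error_\DDD(\bar{h},f) \leq e^{2(\epsilon+3)}\left(\error_D(\bar{h},f)+\tfrac{10}{(\epsilon+3)|D|}\log\tfrac{1}{(\epsilon+3)\cdot 4e\delta\cdot|D|}\right) = O(\error_D(\bar{h},f)+\alpha),
\]
where the last bound absorbs the additive term using $|D|=\Omega((\VC(\cH)+\log(1/\beta))/\alpha^2)$.

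For the bridging term, both $\bar{h}$ and $h$ lie in $\cH$ and agree on every unlabeled point of $V$, because $\bar{h}$ is chosen consistent with $\tilde{V}$ which by construction is $V$ relabeled by $h$. Since $V$ is an i.i.d.\ sample of size $\Omega((\VC(\cH)\log(1/\alpha)+\log(1/\beta))/\alpha)$, the realizable VC bound (\Cref{thm:VCconsistant}) gives $\error_\DDD(\bar{h},h)\leq \alpha$ with probability at least $1-\beta$. To convert $\error_D(f,\bar{h})$ back to $\error_D(f,h)$ I also need $\error_D(\bar{h},h)=O(\alpha)$; this follows from the agnostic VC bound (\Cref{thm:VCagnostic}) applied to the XOR class $\cH\oplus\cH=\{h_1\oplus h_2: h_1,h_2\in\cH\}$ (whose VC dimension is $O(\VC(\cH))$) on the i.i.d.\ sample $D$, using again the hypothesis on $|D|$. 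Plugging these estimates into the chain $\error_\DDD(f,h)\leq \error_\DDD(f,\bar{h})+\error_\DDD(\bar{h},h) \leq O(\error_D(f,\bar{h})+\alpha) \leq O(\error_D(f,h)+\error_D(\bar{h},h)+\alpha)=O(\error_D(f,h)+\alpha)$ gives the stated bound, with a union bound contributing total failure probability $O(\beta+\delta|D|)$.

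The delicate step is verifying that the pair $(\bar{h},f)$, and hence the predicate $\test_{\bar{h},f}$, is jointly the output of a DP-in-$D$ computation, even though $\bar{h}$ was extracted in a data-dependent way from the relabeled database $\tilde{V}$. This is precisely what \Cref{lemma:RelabelNewPrivacy} was engineered to provide, and it is the reason we cannot shortcut to VC on $f$ directly: because $\Alg$ is improper, $f$ may live in a hypothesis class of unbounded VC dimension, and only DP generalization can relate $\error_D(f,\cdot)$ to $\error_\DDD(f,\cdot)$ without paying for the complexity of the class in which $f$ lives.
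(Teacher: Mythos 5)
Your proposal is correct and follows essentially the same route as the paper's own proof: the same bridging hypothesis $\overline{h}$, the same triangle-inequality decomposition, the realizable VC bound on $V$ for $\error_{\DDD}(\overline{h},h)$, the agnostic VC bound on $\cH\oplus\cH$ over $D$ for $\error_D(\overline{h},h)$, and DP generalization (\Cref{thm:GeneralizationMultiplicative}) via the post-processed predicate $\test_{\overline{h},f}$ to relate $\error_D(f,\overline{h})$ to $\error_{\DDD}(f,\overline{h})$. The only detail you gloss over is the paper's trick of padding the privacy parameter to $(O(1),O(\delta+\beta/|D|))$ so that the logarithmic additive term in \Cref{thm:GeneralizationMultiplicative} can be bounded by $O(\alpha)$ uniformly in $\delta$, but this does not change the structure of the argument.
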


\begin{proof}
Let $\overline{h}$ be the third output of $\AlgRelabelAndLearn$, i.e., a hypothesis from $\cH$ that is consistent with $V^h$.
Since $\overline{h}$ and $h$ agree on $V$
and $|V|$ is big enough, 
by \Cref{thm:VCconsistant}, with probability at least $1-\beta$ (over sampling $V$), 
\begin{align}
\label{eq:RelabelDDDoverlinehh}
\error_{\DDD}(\overline{h},h)\leq\alpha.
\end{align}
Since $|D|$ is big enough, by  \Cref{thm:VCagnostic} (applied to $\cH \oplus \cH$ and the distribution $\mu$ that samples $x$ according to $\DDD$ and labels it with $0$), with probability at least $1-\beta$,
\begin{align}
\label{eq:RelabelDoverlinehh}
\error_{D}(\overline{h},h)\leq \error_{\DDD}(\overline{h},h) +\alpha \leq 2 \alpha.
\end{align}

 We will now use the generalization properties of differential privacy to argue that $\error_\DDD(f,h)$ is small. 
By Lemma~\ref{lemma:RelabelNewPrivacy}, 
algorithm $\AlgRelabelAndLearn$ is $(O(1),O(\delta))$-differentially private w.r.t.\ the database $D$. In addition, by post-processing the outcomes of $\AlgRelabelAndLearn$ (the hypotheses $f$ and  $\overline{h}$) we can define the following predicate $\test:X\times\{0,1\}\rightarrow\{0,1\}$ where $\test(x,y)=1$ if $\overline{h}(x)\neq f(x)$, and $\test(x,y)=0$ otherwise.
Now observe that
\begin{align}
\label{eq:RelabelDDDerrortest}
    \error_{\DDD}(f,\overline{h})&=
    \Pr_{x\sim\DDD}[\overline{h}(x)\neq f(x)]
    %\nonumber \\
    %&
    =\E_{x\sim\DDD}[\1\{\overline{h}(x)\neq f(x)\}]
    %\nonumber\\
    %&
    =\E_{(x,y)\sim\mu}[\test(x,y)].
\end{align}
Similarly,
\begin{align}
\label{eq:RelabelDerrortest}
\error_D\left(f,\overline{h}\right) = \frac{1}{|D|}\sum_{(x,y)\in D}\1\left\{\overline{h}(x)\neq f(x)\right\} 
=
\frac{1}{|D|}\sum_{(x,y)\in D}\test(x,y).
\end{align}

%That is, the predicate $\test$ has low empirical average on $D$. 
Recall that $\test$ is the result of a private computation on the database $D$ (obtained as a post-processing of the outcomes of $\AlgRelabelAndLearn$).
Also observe that since $\AlgRelabelAndLearn$ is $(O(1),O(\delta))$-differentially private, it is in particular, $\left(O(1),O\left(\delta+\frac{\beta}{|D|}\right)\right)$-differentially private for every choice of $\beta$ and $|D|$.
Hence, assuming $|D|\geq O\left(\frac{1}{\alpha}\log\frac{1}{\beta}\right)$, \Cref{thm:GeneralizationMultiplicative} (the generalization properties of differential privacy) states that with probability at least $1-O(\delta|D|+\beta)$, 
\begin{align}
    \label{eq:RelabelDPgeneralization}
    \E_{(x,y)\sim\mu}[\test(x,y)] & \leq O\left(\frac{1}{|D|}\sum_{(x,y)\in D}\test(x,y)+
    \frac{1}{|D|} \log\left(\frac{1}{\delta|D|+\beta}\right)\right)
    \nonumber \\
    & \leq O\left(\frac{1}{|D|}\sum_{(x,y)\in D}\test(x,y)+
    \frac{1}{|D|} \log\left(\frac{1}{\beta}\right)\right)
    \nonumber \\
    & \leq O\left(\frac{1}{|D|}\sum_{(x,y)\in D}\test(x,y)+\alpha\right).
\end{align}
So, by \cref{eq:RelabelDDDerrortest}, \Cref{eq:RelabelDerrortest}, and \Cref{eq:RelabelDPgeneralization}, with probability at least $1-O(\beta+\delta|D|)$
\begin{align}
\label{eq:RelabelDDDfoverlineh}
\error_{\DDD}(f,\overline{h})& \leq O( \error_{D}(f,\overline{h}) +\alpha).
\end{align}
Thus, the next inequality, which concludes the proof, holds with probability $1-O(\beta+\delta|D|)$.
\begin{align*}
    \error_{\DDD}(f,h) & = \error_{\DDD}(f,\overline{h})
    + \error_{\DDD}(\overline{h},h) 
    \\
  & \leq O(\error_{D}(f,\overline{h}) +\alpha) 
  & \text{(by \Cref{eq:RelabelDDDoverlinehh,eq:RelabelDDDfoverlineh})}\\
  & \leq O(\error_{D}(f,h) + \error_{D}(h,\overline{h})+\alpha) \\
  & \leq O(\error_{D}(f,h) +\alpha) 
  & \text{(by \Cref{eq:RelabelDoverlinehh}).} 
  %  \label{eq:RelabelDDDfoverlineh}
\end{align*}
\end{proof}

\section{Private PAC Implies Private Agnostic PAC}
\label{sec:agnostic}
In this section we show that private learning implies private agnostic learning (with essentially the same sample complexity) even for improper learning algorithms. Algorithm $\AlgPrivateAgnostic$, the agnostic algorithm for a class $\cH$, first 
applies algorithm $\AlgRelabel$ on the data and relabels the sample using a hypothesis in $\cH$ that has close to minimal empirical error, and then uses the  private learning  algorithm (after sub-sampling) to learn the relabeled database.

\begin{algorithm}
\caption{$\AlgPrivateAgnostic$}
\vspace{2pt}
{\bf Inputs:} A labeled sample $S\in(X\times\{0,1\})^m$.\\[2pt]
{\bf Auxiliary algorithm:} A private learner $\AAA$ for the concept class $ \cH$.

\begin{enumerate}[rightmargin=10pt,itemsep=1pt,topsep=3pt]

\item Partition $S$ into $S=D{\circ}T$, where $|D|=|T|=|S|/2$.

\item 
\label{step:agnosticRelabel}
Execute $\AlgRelabel$ with input $D,T$ and score function $q(D,h)=|D|\cdot\error_D(h)$ to obtain relabeled databases $\tilde{D},\tilde{T}$.

\item Execute a private  empirical learner on $\tilde{D}$:
\label{step:Agnostc-empirical}
%\begin{enumerate}
%\item
Choose $|D|/9$ samples with replacements from $\tilde{D}$. Denote the resulting database by $Q$
and let $f\leftarrow\AAA(Q)$.
%\end{enumerate}
\item Return $f$.
\end{enumerate}
\end{algorithm}

\begin{theorem}[\Cref{thm:agnostic} Restated]
%There exists a constant $\lambda$ such that the following holds. 
Let $0<\alpha,\beta,\delta<1$,  $m\in\N$, and $\Alg$ 
be a  
$(1,\delta)$-differentially private $(\alpha,\beta)$-accurate PAC learner for $ \cH$
with sample complexity $m$. Then, $\AlgPrivateAgnostic$ is an $(O(1),O(\delta))$-differentially private $(O(\alpha),O(\beta+\delta n))$-accurate {\em agnostic} learner for $ \cH$ with sample complexity
$$n=O\left(m + \frac{1}{\alpha^2}\left(\VC( \cH)+\log\frac{1}{\beta}\right)\right).$$
\end{theorem}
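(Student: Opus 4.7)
The proof has two parts.

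\textbf{Privacy.} Observe that $\AlgPrivateAgnostic$ is the composition of $\AlgRelabel(D,T)$ with the post-processing step that subsamples $|D|/9$ points from $\tilde{D}$ and applies $\Alg$; by $\Alg$'s privacy and closure of DP under post-processing, this post-processing step is $(1,\delta)$-DP on $\tilde{D}$. Invoking Lemma~\ref{lemma:RelabelNewPrivacy} (viewed with $V=\emptyset$ and $W=T$, so that the ``$\Alg$'' in $\AlgRelabelAndLearn$ is precisely the subsample-then-$\Alg$ routine) immediately yields $(O(1),O(\delta))$-DP with respect to $S=D\circ T$.

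\textbf{Utility.} Fix an optimal $c^*\in\arg\min_{c\in\cH}\error_\mu(c)$ with $OPT=\error_\mu(c^*)$. I plan three steps. First, by Theorem~\ref{thm:VCagnostic} applied on $D$ with $|D|=\Omega((\VC(\cH)+\log(1/\beta))/\alpha^2)$, the empirical and true errors of every $c\in\cH$ differ by at most $\alpha$ with probability at least $1-\beta$. Since $H$ in $\AlgRelabel$ contains every pattern of $\cH$ on $P\supseteq D$, we have $\min_{h\in H}\error_D(h)=\min_{c\in\cH}\error_D(c)\leq OPT+\alpha$. Combining with Claim~\ref{claim:RelabelUtility} yields $\error_D(h)\leq OPT+2\alpha$ and hence $\error_\mu(h)\leq OPT+3\alpha$. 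Second, with $|D|/9\geq m$, Lemma~\ref{lem:empirical} ensures the subsample-then-$\Alg$ step is an $(\alpha,\beta)$-accurate empirical learner for $\cH$; since $\tilde{D}$ is labeled by $h\in\cH$, we get $\error_D(f,h)=\error_{\tilde{D}}(f)\leq\alpha$ with probability at least $1-\beta$.

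The main obstacle is the third step---showing $\error_\DDD(f,h)\leq O(\alpha)$---because $\AlgPrivateAgnostic$ has no explicit validation portion as $\AlgRelabelAndLearn$ does. To overcome this, I will mirror the proof of Lemma~\ref{lemma:RelabelImportantUtility} by \emph{conceptually} partitioning $T=V\circ W$ in the analysis (not in the algorithm), with $|V|=\Omega((\VC(\cH)\log(1/\alpha)+\log(1/\beta))/\alpha)$ sufficient for VC-consistency. Picking any $\overline{h}\in\cH$ consistent with $V^h$, standard VC arguments give $\error_\DDD(\overline{h},h)\leq\alpha$ (Theorem~\ref{thm:VCconsistant} on $V$) and $\error_D(\overline{h},h)\leq 2\alpha$ (Theorem~\ref{thm:VCagnostic} on $D$). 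The crucial and delicate observation is that $(\overline{h},f)$ is jointly $(O(1),O(\delta))$-DP with respect to $D$ when $V$ is held fixed: this follows from the proof of Lemma~\ref{lemma:RelabelNewPrivacy}, since the pattern of $h$ on $V$ (and hence $\overline{h}$, obtained by post-processing) is an $(O(1),0)$-DP function of $D$, while $f$ is an $(O(1),O(\delta))$-DP function of $D$. Applying Theorem~\ref{thm:GeneralizationMultiplicative} to the predicate $\test(x)=\1\{\overline{h}(x)\neq f(x)\}$ then yields $\error_\DDD(f,\overline{h})\leq O(\error_D(f,\overline{h})+\alpha)\leq O(\alpha)$ with failure probability $O(\delta|D|)$, and the triangle inequality gives $\error_\DDD(f,h)\leq O(\alpha)$. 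Combining all three steps, $\error_\mu(f)\leq\error_\mu(h)+\error_\DDD(f,h)\leq OPT+O(\alpha)$ with total failure probability $O(\beta+\delta n)$ and sample complexity $n=2|D|=O(m+(\VC(\cH)+\log(1/\beta))/\alpha^2)$, as claimed. The ``furthermore'' clause is immediate since $f=\Alg(Q)\in\cH$ whenever $\Alg$ is proper.
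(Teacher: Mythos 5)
Your proof is correct and follows essentially the same route as the paper's: privacy via Lemma~\ref{lem:empirical} together with Lemma~\ref{lemma:RelabelNewPrivacy}, and utility by combining Claim~\ref{claim:RelabelUtility}, the agnostic VC bound on $D$, Lemma~\ref{lem:empirical} for the empirical-learner step, and the generalization argument from Lemma~\ref{lemma:RelabelImportantUtility} to pass from $\error_D(f,h)$ to $\error_\DDD(f,h)$. One difference worth noting: the paper's proof invokes Lemma~\ref{lemma:RelabelImportantUtility} directly without explaining how $\AlgPrivateAgnostic$ (which only partitions $S=D\circ T$) fits the $D,V,W$ interface of $\AlgRelabelAndLearn$; you make this identification explicit by conceptually splitting $T=V\circ W$ in the analysis, which is exactly the right way to justify that citation, and you also spell out the $V=\emptyset,W=T$ instantiation for the privacy direction. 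One minor imprecision: the sentence asserting that $(\overline{h},f)$ is jointly $(O(1),O(\delta))$-DP because ``$\overline{h}$ is DP and $f$ is DP'' would, taken literally, require composition and lose a factor of two in $\epsilon$; the joint DP guarantee actually comes from the fact that both are post-processings of the \emph{single} output distribution analyzed in Lemma~\ref{lemma:RelabelNewPrivacy} (which you do cite as the true source), so this is a wording issue rather than a gap.
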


\begin{proof}
The privacy properties of the algorithm are straightforward. Specifically, by Lemma~\ref{lem:empirical}, \stepref{step:Agnostc-empirical} the algorithm (applying $\Alg$ on a subsample from $\tilde{D}$) satisfies $(O(1),O(\delta))$-differential privacy. Algorithm $\AlgPrivateAgnostic$ is, therefore, $(O(1),O(\delta))$-differentially private by Lemma~\ref{lemma:RelabelNewPrivacy}. In particular, if $\Alg$ is $(1,0)$-differentially private then $\AlgPrivateAgnostic$ is $(O(1),0)$-differentially private.

As for the utility analysis, fix a target distribution $\mu$ over $X\times\{0,1\}$, and denote $$\Delta=\min_{c\in  \cH}\{\error_{\mu}(c)\}.$$ Also let $\DDD$ denote the marginal distribution on unlabeled examples from $X$.
 Let $S$ be a sample containing $n$ i.i.d.\ samples from $\mu$, and denote $S=D{\circ}T$ where $|D|=|T|=|S|/2$. By \Cref{thm:VCagnostic}  (the agnostic VC generalization bound), assuming that $|S|\geq O\left( \frac{1}{\alpha^2}\left(\VC( \cH)+\ln\frac{1}{\beta}\right) \right)$, with probability at least $1-\beta$ (over sampling $S$), the following event occur.
 \begin{enumerate}[leftmargin=75pt]
   \item[Event $E_1:\;\;$] $\forall c\in \cH$ we have $\left| \error_{\mu}(c)-\error_{D}(c) \right|\leq\alpha$.
%\item[Event $E_2:\;\;$] $\forall c_1,c_2\in \cH$ we have $\left| \error_{\DDD}(c_1,c_2)-\error_{D}(c_1,c_2) \right|\leq\alpha$.
 %    \item[Event $E_3:\;\;$] $\forall c_1,c_2\in  \cH$ such that $\error_T(c_1,c_2)=0$ we have $\error_{\DDD}(c_1,c_2)\leq\alpha$.
 \end{enumerate}
%The proof that  Event $E_2$ occurs is implied by \Cref{thm:VCagnostic} applied to the class $\cH \oplus \cH$
%and the distribution $\mu$ that samples $x$ according to $\DDD$ and labels it with $0$.
We continue with the analysis assuming that this event occurs, and show that (w.h.p.)\ the hypothesis $f$ returned by the algorithm has low generalization error. %To that end, consider algorithm 
Consider the execution of $\AlgPrivateAgnostic$ on $S$.
In \stepref{step:agnosticRelabel} we apply algorithm $\AlgRelabel$ to obtain the relabeled databases $\tilde{D},\tilde{T}$.
Let $h\in \cH$ be a hypothesis extending the pattern used by algorithm $\AlgRelabel$ to relabel these databases. By Claim~\ref{claim:RelabelUtility}, assuming that $|D|$ is big enough, with probability at least $1-\beta$ it holds that
\begin{align}
\label{eq:agnosticDh}
\error_D(h)\leq\min_{c\in \cH}\{\error_D(c)\}+\alpha.
\end{align}
In this case, by Event $E_1$ we have that 
\begin{align}
\label{eq:agnosticmuh}
\error_{\mu}(h)\leq\error_D(h)+\alpha\leq
\min_{c\in \cH}\{\error_D(c)\}+2\alpha
\leq \min_{c\in \cH}\{\error_{\mu}(c)\}+3\alpha
=\Delta+3\alpha.
\end{align}
Recall that $\AAA$ is executed on the database $Q$ containing $|\tilde{D}|/9$ i.i.d.\ samples from $\tilde{D}$. By Lemma~\ref{lem:empirical}, with probability at least $1-\beta$, the hypothesis $f$ chosen in \stepref{step:Agnostc-empirical} satisfies 
\begin{align}
\label{eq:agnosticDfh}
\error_{D}(f,h)=\error_{\tilde{D}}(f)\leq\alpha.
\end{align}
%So, by \Cref{eq:agnosticDh},
%\begin{align*}
%\error_{D}(f)\leq\error_{D}(f,h)+\error_{D}(h)\leq\Delta+2\alpha.
%\end{align*}
%
%
%
By Lemma~\ref{lemma:RelabelImportantUtility} and \Cref{eq:agnosticDfh}
with probability at least $1-O(\beta+|D|\delta)$
\begin{align}
\label{eq:agnosticDDDfh}
\error_{\DDD}(f,h) \leq O(\error_{D}(f,h)+\alpha) \leq O(\alpha).
\end{align}
Finally, by \Cref{eq:agnosticmuh,eq:agnosticDDDfh}
\begin{align*}
\error_{\mu}(f)&\leq \error_{\DDD}(f,h)+\error_{\mu}(h)
\leq \Delta+O(\alpha).
\end{align*}
\end{proof}

\section{Closure of Private Learning}\label{sec:privacyproof}

%\begin{definition}
%Let $C_1,\ldots,C_t:\bit^n\rightarrow \bit$ be $t$ classes of functions 
%and $g:\ourset{0,1}^t\rightarrow \bit$ be a function.
%For functions $f_1\in C_1,\dots, f_t\in C_t$, define the 
%function $g(f_1,\ldots,f_t):\bit^n\rightarrow \bit$ as
%$g(f_1,\ldots,f_t)(x)=g(f_1(x),\ldots,f_t(x))$.
%Furthermore, define the class $$g(C_1,\ldots,C_t)=\ourset{g(f_1,\ldots,f_t):f_1\in C_1,\dots, f_t\in C_t}.$$
%\end{definition}

In this section we prove \Cref{thm:privacy} -- if $\cH_1,\dots,\cH_k$ are privately learnable, then $G(\cH_1,\dots,\cH_k)$ is privately learnable.

\remove{
\begin{theorem}[A Closure Theorem for Private Learning: Proper Setting]\label{thm:privacyProper}
Let $G:\{0,1\}^k\to\{0,1\}$ be a boolean function.
Let~$\cH_1,\ldots,\cH_k\subseteq\{0,1\}^X$ be classes 
that are $(\epsilon,\delta)$-differentially private and $(\alpha,\beta)$-accurate learnable by a {\em proper} learning algorithms with sample complexity $m_i(\alpha,\beta,\epsilon,\delta)$ respectively. 
Then, $G(\cH_1,\ldots.\cH_k)$ is $(\eps,\delta)$-private and $(\alpha,\beta)$-accurate learnable by a proper learning algorithm with sample complexity
\[ \tilde{O}\left(  \frac{k^3 (\sum_{i=1}^k\VC(\cH_i)+\log(\frac{k}{\beta}))}{\alpha^2 \epsilon}+\frac{\sum_{i=1}^k m_i\Bigl(\frac{\alpha}{4k},\frac{\beta}{4k},1,\delta\Bigr)}{\epsilon}\right).\]
\end{theorem}
}

\begin{theorem}[Closure Theorem for Private Learning]
\label{thm:privacy}
Let $G:\{0,1\}^k\to\{0,1\}$ be a boolean function and $\cH_1,\ldots,\cH_k\subseteq\{0,1\}^X$ be classes 
that are $(\epsilon,\delta)$-differentially private and $(\alpha,\beta)$-accurate learnable by a possibly {\em improper} learning algorithms with sample complexity $m_i(\alpha,\beta,\epsilon,\delta)$ respectively. 
Then, $G(\cH_1,\ldots,\cH_k)$ is $(O(1),O(\delta))$-private and $(O(\alpha),O(\beta+\delta m))$-accurate learnable with sample complexity
\[
m=O \left( \frac{k^3\VC(G(\cH_1,\ldots,\cH_ k))+k^2\ln\left(\frac{k}{\beta}\right)}{\alpha^2}
 +\sum_{i=1}^k m_i\Bigl(\frac{\alpha}{k},\frac{\beta}{k},1,\delta\Bigr)\right).
\]
\end{theorem}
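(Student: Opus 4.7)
The plan is to construct the algorithm $\AlgClosureLearn$ iteratively in $k$ steps, producing at step $i$ a hypothesis $h_i$ via the given private learner $\AAA_i$ for $\cH_i$. First I would partition the input sample $S$ of size $m$ into $k$ disjoint sub-samples $S_1,\ldots,S_k$ of equal size, assigning $S_i$ to step $i$. Because the $S_i$'s are disjoint, parallel composition of differential privacy gives that if every step is $(O(1),O(\delta))$-differentially private with respect to its own $S_i$, then the concatenated algorithm is $(O(1),O(\delta))$-differentially private with respect to $S$.

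At step $i$, given the hypotheses $h_1,\ldots,h_{i-1}$ already chosen, I would define the score function
\[
q_i(D,h) \;=\; \min_{c_{i+1}\in \cH_{i+1},\ldots,c_k\in \cH_k}\Bigl|\{(x,y)\in D : G(h_1(x),\ldots,h_{i-1}(x),h(x),c_{i+1}(x),\ldots,c_k(x))\neq y\}\Bigr|,
\]
i.e., the smallest empirical error achievable by completing $(h_1,\ldots,h_{i-1},h)$ into a full $G$-aggregation. A short calculation shows that $q_i$ has matched-sensitivity $1$ (Definition~\ref{def:matched-sensitivity}): if $h$ and $h'$ agree on $S$, then for any completion $c^*$ optimal for one side, using that same $c^*$ on the other side gives the same error on $S$ and differs by at most $1$ on the single new point. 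Step $i$ then invokes $\AlgRelabelAndLearn$ from \cref{sec:relabel} with class $\cH_i$, score function $q_i$, and the sub-sampled empirical learner derived from $\AAA_i$ via \Cref{lem:empirical}, obtaining a hypothesis $h_i$ (possibly improper) together with a witness $\overline{h}_i\in \cH_i$ consistent with the published part of the relabeled data.

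For utility I would induct on $i$, maintaining the invariant that there exist completions $c_i^*\in \cH_i,\ldots,c_k^*\in \cH_k$ such that the aggregation $G(h_1,\ldots,h_{i-1},c_i^*,\ldots,c_k^*)$ has small distributional error. In the realizable case the base $i=1$ is automatic from the target decomposition. Given the invariant at step $i$, \Cref{claim:RelabelUtility} says the pattern chosen by the exponential mechanism inside $\AlgRelabel$ corresponds to some $h\in \cH_i$ with $q_i(D,h)\le O(\alpha\lvert D\rvert)$, so a good completion in $\cH_{i+1},\ldots,\cH_k$ still exists after relabeling. The learner $\AAA_i$ then produces $h_i$ with small empirical error against this $h$, and \Cref{lemma:RelabelImportantUtility} (distributional-generalization via DP) upgrades this to $\error_{\DDD}(h_i,h)\le O(\alpha)$; substituting $h_i$ in place of $h$ inside the aggregator therefore costs at most $O(\alpha)$ additional population error, restoring the invariant for step $i+1$. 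Summing over $k$ steps and translating parameters gives the stated $m=O\bigl(k^3\VC(G(\cH_1,\ldots,\cH_k))/\alpha^2+k^2\ln(k/\beta)/\alpha^2+\sum_i m_i(\alpha/k,\beta/k,1,\delta)\bigr)$ bound.

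The main obstacle, as in the proof of \Cref{thm:agnostic}, is the improper case: when $h_i\notin \cH_i$ I cannot use a VC argument on $\cH_i$ to pass from empirical closeness $\error_{D}(h_i,h)\le O(\alpha)$ to population closeness $\error_{\DDD}(h_i,h)\le O(\alpha)$. This is precisely what $\AlgRelabelAndLearn$ is designed for: it releases the witness $\overline{h}_i\in \cH_i$, and the proof of \Cref{lemma:RelabelImportantUtility} routes through $\overline{h}_i$, using a VC bound for the $\cH_i$-to-$\cH_i$ comparison $\overline{h}_i$ versus $h$ and the DP generalization theorem (\Cref{thm:GeneralizationMultiplicative}) for the comparison $h_i$ versus $\overline{h}_i$. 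A second subtlety is that the induction must track both the empirical error on $S_{i+1}$ (needed to drive the next exponential mechanism) and the distributional error (needed for the final guarantee); the former follows from the latter together with a standard agnostic VC concentration bound on each $S_i$, provided $\lvert S_i\rvert = \Omega(\VC(G(\cH_1,\ldots,\cH_k))/\alpha^2)$, which is the source of the $k^3/\alpha^2$ dependence after accounting for the accuracy scaling $\alpha\mapsto \alpha/k$ across steps.
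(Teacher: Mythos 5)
Your proposal matches the paper's proof essentially line for line: the same disjoint partition $S = S_1\circ\cdots\circ S_k$ with parallel composition for privacy, the same score function $q_i$ with the same matched-sensitivity-$1$ argument, the same use of $\AlgRelabelAndLearn$ with the auxiliary $\overline{h}_i\in\cH_i$ to route the improper case through \Cref{lemma:RelabelImportantUtility} and \Cref{thm:GeneralizationMultiplicative}, and the same inductive invariant tracking population error of the best completion. The paper formalizes this as \Cref{lem:ClosureLearnPrivacy} and \Cref{lem:ClosureLearnUtilityImproper} with the same parameter bookkeeping you describe (Event $E_1$ from \Cref{thm:VCagnostic} converting between empirical and distributional error on each $S_i$, \Cref{claim:RelabelUtility} for the exponential mechanism, and \Cref{lem:empirical} for the empirical learner), so the two are the same proof.
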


To prove \cref{thm:privacy}, we  present $\AlgClosureLearn$ -- a generic transformation of  private learning algorithms $\Alg_1,$ $\dots,\Alg_k$ for the classes $\cH_1,\dots,\cH_k$ respectively to a private learner 
for $G(\cH_1,\dots,\cH_k)$. This transformation could be applied to proper as well as improper learners, and to a learners that preserves pure or approximate privacy.
Given a labeled sample $S$ of size $N$, algorithm $\AlgClosureLearn$ finds hypotheses 
$h_1,\ldots,h_k$ in steps, where in  the $i$'th step, the algorithm finds a hypothesis $h_i$ such that 
$h_1,\dots,h_i$ have a  completion $c_{i+1},\dots,c_k$ to a hypothesis
$G(h_1,\dots,h_i,c_{i+1}$, $\dots,c_k)$ with small error (assuming that $h_1,\dots,h_{i-1}$ have a good completion).
In the $i$'th step, $\AlgClosureLearn$ relabels the input sample $S$ so that the relabeled sample is realizable by $\cH_i$.
The relabeling $h$ is chosen using $\AlgRelabel$ in a way that guarantees completion to a hypothesis with small empirical error. 
	That is, using an appropriate score-function in $\AlgRelabel$ (i.e., in the exponential mechanism), it is guaranteed that 
	for the hypotheses $h_1,\dots,h_{i-1}$ computed in the previous steps  there are some 
	$c_{i+1}\in\cH_i,\dots,c_k\in\cH_k$  such that the function $G(h_1,\dots,h_{i-1},h,c_{i+1},\dots,c_k)$ has a small loss with respect to the original sample $S$.
	The relabeled sample is fed  (after subsampling) to the private algorithm $\Alg_i$ to produce a hypothesis $h_i$
	and then the algorithm proceeds to the next step $i+1$.

\begin{algorithm}
\caption{$\AlgClosureLearn$} 
\label{alg:ClosureLearn} 
{\bf Input:} A labeled sample $S\in(X\times\{0,1\})^N$,
where $N$ will be fixed later.\\
{\bf Auxiliary Algorithms:} Private learners $\Alg_1,\dots,\Alg_k$  for the class $\cH_1,\dots,\cH_k$ respectively.
\begin{enumerate}[rightmargin=10pt,itemsep=1pt,topsep=0pt]
\item Partition $S$ into $k$ parts
$S=S_1{\circ}S_2{\circ}\cdots{\circ}S_k$ -- the size of the
$S_i$ will be determined later.
\item For every $i\in [k]$: \label{step:Loop}
\begin{enumerate}
    \item \label{step:ClosuefirstSampling}
    Partition $S_i$ into $S_i=D_i \circ T_i$, where $|D_i|=|T_i|=|S_i|/2$. 
    \item 
    \label{step:ClosureRelabel}
    Execute $\AlgRelabel$ with input $D,T$, hypothesis class $\cH_i$, and score function
    \begin{equation}
    \label{eq:q}
    q(S_i,z)=|S_i| \cdot \min_{c_{i+1}\in \cH_{i+1},\ldots, c_{k}\in \cH_{k}}\error_{S_i}(G(h_1,\ldots,h_{i-1}, h,c_{i+1},\ldots,c_k)),
    \end{equation}
    to obtain relabeled databases $\tilde{D_i},\tilde{T_i}$.
    \item Execute a private empirical learner on $\tilde{D_i}$:
    \label{step:Learn}
    \begin{enumerate}
       \item
       \label{step:choosewithrep}
       Choose $|D_i|/9$ samples with replacements from 
       $\tilde{D_i}$. Denote the resulting database by $Q$.
       \item Execute the private learner $\Alg_i$ on the sample  $Q$ with accuracy parameters $(\alpha/k,\beta/k)$ and privacy parameters $(\epsilon=1,\delta)$. Let $h_i$ be its output.
      \end{enumerate}
\end{enumerate}
\item \label{step:AAA} Output $c=G(h_1,\dots,h_k)$.
\end{enumerate}
\end{algorithm}

%\subsection{Proof of Closure of Private Learning}

% As described above, each step is $(\epsilon,\delta)$-differentially private for $\epsilon=4$.
% To reduce $\epsilon$, in each step we first sample a set of examples $S_i$ containing i.i.d.\ examples from the input samples $S$.
In Lemma~\ref{lem:ClosureLearnPrivacy}, we analyze the privacy guarantees of $\AlgLearnComposition$. 

\remove{
\begin{claim}\label{clm:ClosureLearnPrivacy}
Let $i\leq k$ and let $\Alg_i$ be  the $(1,\delta)$-private learner for $\cH_i$.
Denote by $\calB$ the procedure defined by \steprefrange{step:ClosureRelabel}{step:Learn}
 of $\AlgClosureLearn$ in step $i$; that is, the input of $\calB$ is~$S_i$ and its output is $\Alg_i(Q^{h})$.
Then, algorithm $\calB$ is $(4,4e\delta)$-differentially private.
\end{claim}
\begin{proof}
Consider the executions of  $\calB$ on 2 neighboring input samples $S_{i,1},S_{i,2}$. 
Let~$H_1,P_1$ and $H_2,P_2$ be the sets $H,P$ as they are in the executions of  $\calB$ on $S_{i,1}$ and on $S_{i,2}$ respectively.
	The main difficulty in proving differential privacy is that $H_1$ and $H_2$ can significantly differ. 
	We show, however, that the distribution on relabeled samples $S^{h}$ generated in \stepref{step:Relabel} 
	of the two executions
	 are similar in the sense that for each relabeled sample $S^{h'}$ in the support of one of the distributions there are between 1 and 2 relabeled samples $S^{h''}$ in the support of the other such that: 
\begin{itemize}
\item[(i)] $S^{h'},S^{h''}$ have, roughly, the same probability, and 
\item[(ii)] $S^{h'},S^{h''}$ differ on at most one entry. 
\end{itemize}
Thus, executing the differentially private algorithm $\Alg_i$ on $Q^{h}$ preserves differential privacy. In the next paragraph, we make this argument formal.
	
	As $S_{i,1},S_{i,2}$ are neighbors it follows that $\lvert P_1\setminus P_2\rvert  \leq 1$ and $\lvert P_2\setminus P_1\rvert  \leq 1$. 
	Let $p_1$ (resp.\ $p_2$) be the element in $P_1\setminus P_2$ (resp.\ $P_2\setminus P_1$) if such an element exists. 
	If this is the case, then $p_1$ (resp.~$p_2$) appears exactly once in $S_{i,1}$ (resp.\ $S_{i,2}$).
	Let~$K=P_1\cap P_2$. Hence, for each~$t\in \ourset{1,2}$ either $P_t = K$ or $P_t = K \cup \{p_t\}$.  
	Since every pattern in $\Pi_{\cH_i}(K)$ has at most two extensions in $\Pi_{\cH_i}(P_t)$ we get
	\[|\Pi_{\cH_i}(K)|\leq|\Pi_{\cH_i}(P_t)|\leq2|\Pi_{\cH_i}(K)|,\]  
	and therefore $|H_1|\leq2|H_2|$ and~$|H_2|\leq2|H_1|$.
	In more detail, for every pattern ${h}\in\Pi_{\cH_i}(K)$ there are between one and two (but not more) patterns in 
	$\Pi_{\cH_i}(P_t)$ that agree with ${h}$ on $K$.
	We denote these patterns by $h^{(0)}_t, {h}^{(1)}_t$, which may be identical if only one unique pattern exists.
	We claim that for every $t_1,t_2\in\{1,2\}$ and $b_1,b_2\in\{0,1\}$:
	\[\left|q\left(S_{i,t_1},{h}^{(b_1)}_{t_1}\right)-q\left(S_{i,t_2},{h}^{(b_2)}_{t_2}\right)\right|\leq 1,\]
	where $q$ is defined in \Cref{eq:q}. Indeed, ${h}^{(b_1)}_{t_1}$ and ${h}^{(b_2)}_{t_2}$ 
	agree on all points in $P_{t_1}\cup P_{t_2} = K\cup\{p_{t_1},p_{t_2}\}$ except perhaps $p_{t_1}, p_{t_2}$, 
	and $p_{t_2}\notin P_{t_1}$ and $p_{t_1}\notin P_{t_2}$.
%
%	
%	%Similarly, we denote $h_{2,\vec{z}}$ and $h'_{2,\vec{z}}$ as the hypotheses corresponding to $H_2$.
%	Note that for every $\vec{z}\in\Pi_{\cH_i}(K)$ and $b\in \ourset{1,2}$,
%we have that $|q(S_b,\vec{z}^{(b)'})-q(S_b,\vec{z}^{(b)''})|\leq1$
%(where $q$ is defined in \Cref{eq:q}) because if $\vec{z}^{(b)''}=\vec{z}^{(b)''}$ then the difference is clearly zero and otherwise 
%$G(h_1,\dots,h_{i-1},h,c_{i+1},\dots,c_t)$ may differ only on $p_b$, which appears at most once in $S_b$.
%Moreover, for every pattern $\vec{z} \in\Pi_{C_i}(K)$,
%we have that 
%$$\left|q\left(S_1,h_{1,\vec{z}}\right)
%  -q\left(S_2,h_{2,\vec{z}}\right)\right|\leq 1$$
% because $G(h_1,\dots,h_{i-1},h_{1,\vec{z}},c_{i+1},\dots,c_t)$ and $G(h_1,\dots,h_{i-1},h_{2,\vec{z}},c_{i+1},\dots,c_t)$ may disagree on at most two points $p_1, p_2$ such that at most one of them appears in $S_1$ and at most one of them appears in $S_2$. The same is true for every pair of hypotheses
% in $\ourset{h_{1,\vec{z}},h'_{1,\vec{z}}} \times \ourset{h_{2,\vec{z}},h'_{2,\vec{z}}}$.

For $h\in\Pi_{\cH_i}(K)$ and $t\in \{1,2\}$, let $w_{t,{h}}$ be the probability that the exponential mechanism chooses either the pattern ${h}^{(0)}_t$ or ${h}^{(1)}_t$ 
	in \stepref{step:Expmech} on input sample $S_{i,t}$. Thus, $\forall {h}\in\Pi_{\cH_i}(K)$:
\begin{align*}
w_{1,{h}}
&\leq
\frac{\exp(\frac{1}{2}\cdot q(S_{i,1},{h}^{(0)}_1))+\exp(\frac{1}{2}\cdot q(S_{i,1},{h}^{(1)}_1))}
{\sum_{ v\in \Pi_{\cH_i}(P_1)}{\exp(\frac{1}{2}\cdot q(S_{i,1}, v))}}\\
&\leq 
\frac{\exp(\frac{1}{2}\cdot q(S_{i,1},{h}^{(0)}_1))+\exp(\frac{1}{2}\cdot q(S_{i,1},{h}^{(1)}_1))}
{\sum_{ v\in \Pi_{\cH_i}(K)}{\exp(\frac{1}{2}\cdot q(S_{i,1}, v^{(1)}_1))}}
\\
&\leq
\frac{
\exp\left(\frac{1}{2}\cdot [q(S_{i,2},{h}^{(0)}_2)+1]\right) +
\exp\left(\frac{1}{2}\cdot [q(S_{i,2},{h}^{(1)}_2)+1]\right)
}{\frac{1}{2}\sum_{ v\in \Pi_{\cH_i}(K)}
{\exp(\frac{1}{2}[q(S_{i,2}, v^{(0)}_2)-1])+\exp(\frac{1}{2}[q(S_{i,2}, v^{(1)}_2)-1]})}\\
&\leq
2e\cdot \frac{\exp\left(\frac{1}{2}\cdot q(S_{i,2},{h}^{(0)}_2)\right)+
\exp\left(\frac{1}{2}\cdot q(S_{i,2},{h}^{(1)}_2)\right)}
{\sum_{ v\in \Pi_{\cH_i}(P_2)}{\exp(\frac{1}{2}\cdot q(S_{i,2}, v))}}\\
&\leq4e \cdot
w_{2,{h}}.
\end{align*}
Notice that in the above equations we need to consider the cases
where $h_t^{(0)}=h_t^{(1)}$ for $t=1$ or $t=2$ (or both).

We are ready to complete the proof of the privacy. For every ${h}\in\Pi_{C_i}(K)$, let ${h}_{t}$ 
be the random variable denoting the pattern that the exponential mechanism chooses in \stepref{step:Expmech} of the execution on $S_{i,t}$ conditioned on that
this pattern belongs belongs to $\{{h}^{(0)}_t,{h}^{(1)}_t\}$.
Furthermore, let $\Alg'_i$ be the algorithm that executes 
steps~(\ref{step:choosewithrep})--(\ref{step:Learn}) in $\AlgClosureLearn$.
By Lemma~\ref{lem:empirical}, $\Alg'_i$  is $(1,\delta)$-differentially private,
and therefore, for any set $F$ of possible outputs of algorithm~$\calB$, 
\begin{align*}
\Pr[\calB(S_{i,1})\in F]
&= \sum_{{h}\in\Pi_{\cH_i}(K)} w_{1,{h}} 
\cdot \Pr\left[\Alg'_i\left(S_{i,1}^{{h}_{1}}\right) \in F \right]
\\
&\leq
\sum_{{h}\in\Pi_{\cH_i}(K)}4e \; w_{2,{h}}
\left(
e^{1} \Pr\left[\Alg'_i\left(S_{i,2}^{{h}_{2}}\right) \in F \right]
+\delta\right)
\\
&\leq
\left(4 e^2 \sum_{{h}\in\Pi_{\cH_i}(K)} \; w_{2,{h}}
\Pr\left[\Alg'_i\left(S_{i,2}^{h_{2}}\right) \in F \right]\right)
+4 e\delta
\\
&\leq
 e^{4}\Pr[\calB( S_{i.2})\in F]+4e\delta. 
\end{align*}
\end{proof}
}

\begin{lemma}\label{lem:ClosureLearnPrivacy}
Let $\epsilon <1$ and assume the algorithms $\Alg_1,\dots,\Alg_k$  are $(1,\delta)$-private.
%and that 
%\[N=\lvert S\rvert  \geq  \frac{180 \sum_{i=1}^k\lvert S_i\rvert}{\epsilon}. %= O\Bigl( \frac{m\sqrt{k\log (1/\delta)}}{\epsilon}\Bigr).
%\]
Then,  $\AlgClosureLearn$ is $(\epsilon,O(\delta))$-differentially private.
\end{lemma}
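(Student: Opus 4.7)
The plan is to isolate the iteration of the loop that is actually affected by a change in the input and then invoke the previously-proved privacy tools. Fix neighboring databases $S \sim S'$. Because the partition $S = S_1 \circ \cdots \circ S_k$ in step~1 is a fixed deterministic split, the differing entry lies in exactly one part $S_{i^*}$, and $S_i = S'_i$ for every $i \neq i^*$. The hypotheses $h_i$ for $i < i^*$ therefore have identical distributions on the two runs, and the hypotheses $h_i$ for $i > i^*$ are obtained by running an algorithm whose inputs are the unchanged samples $S_{i+1}, \ldots, S_k$ together with the already-released $h_1, \ldots, h_i$. Conditional on $(h_1, \ldots, h_{i^*})$, the remainder of the computation is pure post-processing of $h_{i^*}$ and data independent of the differing entry. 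Consequently, it suffices to bound the privacy of the mapping $S_{i^*} \mapsto h_{i^*}$.

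Next I would analyze iteration $i^*$, which is the composition of: the deterministic split $S_{i^*} = D_{i^*} \circ T_{i^*}$; the relabeling $\AlgRelabel(D_{i^*}, T_{i^*})$ producing $(\tilde D_{i^*}, \tilde T_{i^*})$; the with-replacement sub-sample $Q$ of size $|D_{i^*}|/9$ drawn from $\tilde D_{i^*}$; and finally the call to $\Alg_{i^*}(Q)$. By \Cref{lem:empirical}, the sub-sample-then-$\Alg_{i^*}$ block is itself $(1, \delta)$-differentially private, viewed as an empirical learner taking $\tilde D_{i^*}$ as input. Applying \Cref{lemma:RelabelNewPrivacy} with this $(1, \delta)$-private algorithm in the role of $\Alg$ (and with $V = \emptyset$, $W = T_{i^*}$) shows that the composition $S_{i^*} \mapsto h_{i^*}$ is $(1 + 3, \, 4e\delta)$-differentially private. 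Thus the entire release is $(4, 4e\delta)$-differentially private with respect to $S$.

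To reach the sharper $(\epsilon, O(\delta))$ bound promised for arbitrary $\epsilon < 1$, the plan is to invoke privacy amplification by sub-sampling, implemented through the freedom left in the algorithm to choose the sizes $|S_i|$. Concretely, I would take $|S_i|$ of order $|S|/k$ and view the restriction from $S$ to $S_{i^*}$ as a (nearly) uniform sub-sample of rate $p = \Theta(\epsilon)$ from the full input. Since the $(4, 4e\delta)$-private procedure above is then composed with a sub-sampling of rate $p$, standard amplification (as in the Kasiviswanathan–Nissim style bound) converts it into an $(O(p), O(p\delta))$-private procedure, and a rescaling of constants yields the stated $(\epsilon, O(\delta))$ guarantee for any target $\epsilon < 1$; pushing the same rescaling through the constants from \Cref{lemma:RelabelNewPrivacy} and \Cref{lem:empirical} is routine.

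The main obstacle is inherited from \Cref{lemma:RelabelNewPrivacy}: the exponential mechanism inside $\AlgRelabel$ runs over the data-dependent solution set $H = \Pi_{\cH_{i^*}}(P)$, which can differ between neighboring $S_{i^*}$ and $S'_{i^*}$. The careful coupling in that lemma — pairing patterns via their extensions $h^{(0)}, h^{(1)}$ over $K = P_1 \cap P_2$ and using the matched-sensitivity property of $q$ — already handles this subtlety, so here I would simply verify that the particular score function in \cref{eq:q} has matched-sensitivity $1$ (each differing element can change $\min_{c_{i+1}, \ldots, c_k}\error_{S_{i^*}}$ by at most $1/|S_{i^*}|$ times $|S_{i^*}|$, uniformly over hypotheses that agree on $S_{i^*} \setminus \{\cdot\}$) so that \Cref{lemma:RelabelNewPrivacy} applies verbatim.
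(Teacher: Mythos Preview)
Your first two paragraphs are exactly the paper's argument: reduce to the single affected block $S_{i^*}$ by disjointness of the partition, apply \Cref{lem:empirical} to make the sub-sample-then-$\Alg_{i^*}$ block $(1,\delta)$-private, and then invoke \Cref{lemma:RelabelNewPrivacy} to conclude that iteration $i^*$ is $(O(1),O(\delta))$-private. Your verification that the score function in \cref{eq:q} has matched-sensitivity~$1$ is a detail the paper glosses over but is indeed required for \Cref{lemma:RelabelNewPrivacy} to apply.

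The third paragraph, however, is chasing a typo. The lemma as stated promises $(\epsilon,O(\delta))$ for arbitrary $\epsilon<1$, but the paper's own proof only establishes $(O(1),O(\delta))$, and this is all that is used downstream: \Cref{thm:privacy} claims only $(O(1),O(\delta))$-privacy, and the final remark explicitly says that reaching $(\epsilon,O(\delta))$ requires an \emph{additional} sub-sampling wrapper that blows up the sample size by $O(1/\epsilon)$. So the ``$\epsilon<1$'' in the hypothesis is vestigial, and you should simply stop after the $(4,4e\delta)$ bound.

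Your proposed amplification argument would not work in any case. The split $S=S_1\circ\cdots\circ S_k$ is a \emph{deterministic} partition, not a random sub-sample, so standard amplification-by-sub-sampling does not apply to the passage $S\mapsto S_{i^*}$. Moreover, even if one randomized the partition, the sampling rate would be of order $1/k$, not $\Theta(\epsilon)$; there is no mechanism inside $\AlgClosureLearn$ as written that ties the block sizes to a target $\epsilon$. Getting genuine $(\epsilon,O(\delta))$-privacy requires wrapping the whole algorithm in a separate sub-sampling step, which is outside the scope of this lemma.
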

\begin{proof}
Fix $i\in [k]$ and consider the $i$'th step of the algorithm.
By Lemma~\ref{lem:empirical}, %and  Claim~\ref{clm:ClosureLearnPrivacy}, 
\stepref{step:Learn} of algorithm $\AlgClosureLearn$ (i.e.,
sub-sampling with replacement and executing a $(1,\delta)$-private algorithm)  
is $(1,\delta)$-differentially private. 
Thus, by Lemma~\ref{lemma:RelabelNewPrivacy},
\steprefrange{step:ClosureRelabel}{step:Learn} of algorithm $\AlgClosureLearn$ are $(O(1),O(\delta))$-differentially private.
%By Claim~\ref{clm:boostPrivacy}, \steprefrange{step:ClosuefirstSampling}{step:Learn} of algorithm $\AlgClosureLearn$  are $(\epsilon,O(\delta))$-differentially private. 
Since each step is executed on a disjoint set of examples, $\AlgClosureLearn$ is $(O(1),O(\delta))$-differentially private.
\end{proof}

\remove{
\subsection{Utility for Proper Learning Algorithms}
We next prove that $\AlgClosureLearn$ PAC learns the class $G(\cH_1,\ldots,\cH_k)$ 
	in the case when the algorithms $\Alg_1,\dots,\Alg_k$ are proper (the general case is more subtle and is treated in the next section). 
	We prove by induction that in the $i$'th execution of \stepref{step:Expmech} there is $h_{\rm opt}\in H$ 
	such that $q(S,h_{\rm opt}) \geq |S|(1-\frac{i\alpha}{k})$. 
	The proof follows by showing that the choices made by
	in each of the \steprefrange{step:P}{step:Learn} lead to finding a hypothesis with a small population loss.
	We start by proving that the exponential mechanism finds a relabeling which is sufficiently close to the true labels.

\remove{
\begin{claim}\label{clm:ClosureLearnUtility}
Let $\Delta > 0$ and $0 \leq \gamma < 1$.
Suppose that in the execution of \stepref{step:Expmech} in a given iteration $i$, there exists a hypothesis 
$h_{\rm opt}\in H$ such that $q(S_i,h_{\rm opt}) \geq (1-\gamma)|S_i|$ and
\[ \lvert S_i\rvert\geq t\ln(t) ~~\text{ where }~~t= 2\frac{\VC(\cH_i)+\ln (4k/\beta)}{\Delta}.\]
Then, with probability at least $(1-\frac{\beta}{4k})$ we have that $q(S_i,h) \geq (1-\gamma-\Delta)|S_i|$,
where~$h\in H$ denotes the hypothesis chosen in \stepref{step:Expmech}.
\end{claim}

\begin{proof}
%For every $(z_1,\ldots,z_\ell)\in \Pi_{C_i}(P)$, algorithm $\AlgClosureLearn$ adds to $H$ a hypothesis $f$ s.t.\ $\forall 1\leq j \leq \ell,\;f(p_j)=z_j$.
%Thus, 
By the Sauer-Shelah-Perles Lemma, 
\begin{align*}
|H|  \quad =  \quad |\Pi_{\cH_i}(P)| \leq O(|P|^{\VC(\cH_i)})
\quad\leq \quad O(|S|^{\VC(\cH_i)}).
\end{align*}
Hence, Proposition~\ref{prop:ExpMech} (properties of the exponential mechanism) ensures that the probability of the exponential mechanism with $\epsilon=1$ choosing an $h$ such that $q(S,h)<q(S,h_{\rm opt})-\Delta|S|$ is at most
\begin{align*}
|H| \cdot\exp\left(-\frac{\Delta |S_i|}{2}\right)
& \leq  O\left(|S_i|^{\VC(\cH_i)} \right)
\exp\left(-\frac{ \Delta |S_i|}{2}\right) \\
& =  \exp\left(\ln |S_i|\VC(\cH_i)-\frac{\Delta |S_i|}{2}\right)\\
& \leq  \frac{\beta}{4k}.
\end{align*}
\end{proof}
}

\begin{lemma}\label{lem:ClosureLearnUtility}
Assume that $\Alg_1,\ldots,\Alg_k$ are %$(\eps=1,\delta)$-private 
$(\alpha,\beta)$-accurate {\em proper} learning algorithms for $\cH_1,\ldots,\cH_k$ with sample complexity $m_i(\alpha,\beta,\delta)$
respectively and let $t=O\Bigl(k\cdot \frac{\VC(\cH_i)+\log (k/\beta)}{\alpha}\Bigr)$. If at each iteration $i$
\[|S_i|\geq O\Biggl(\frac{ k^2(\VC(G(\cH_1,\dots,\cH_k)) + \log(\frac{k}{\beta}))}{\alpha^2}+t\log t+m_i\Bigl(\frac{\alpha}{4k},\frac{\beta}{4k},1,\delta\Bigr)\Biggl),\]
then with probability at least $1-\beta$ we have that $\error_{\DDD}(c)\leq\alpha$,
where $c$ is the hypothesis returned by $\AlgClosureLearn$ on $S$.
\end{lemma}

\begin{proof}
Let $h_1,\ldots,h_t$ be the hypotheses that $\AlgClosureLearn$ computes in \stepref{step:Relabel}. We prove by induction that for every $i  \in [k]$ with probability 
at least $1-\frac{i \beta}{k}$ there exist $c_{i+1} \in \cH_{i+1},\ldots,$ $c_{k} \in \cH_{k}$ such that $$\error_{\DDD}(G(h_1,\ldots,h_i,c_{i+1},\ldots,c_{k}) \leq \frac{i \alpha}{k}.$$ 
The induction basis for $i=0$ is implied by the fact that the examples are labeled by some $G(c_1,\ldots,c_{k}) \in G(\cH_1,\dots,\cH_k)$.
For the induction step, assume that there are $c_{i} \in \cH_{i},\ldots ,c_{k} \in \cH_{k}$ such that 
\[\error_{\DDD}\Bigl(G(h_1,\ldots,h_{i-1},c_i,c_{i+1},\ldots,c_{k})\Bigr) \leq \frac{(i-1) \alpha}{k}.\] 
We need to prove that with probability at least $1-\frac{1}{k\beta}$  there are $c'_{i+1} \in \cH_{i+1},\ldots,c'_{k} \in \cH_{k}$ such that 
\[\error_{\DDD}(G(h_1,\ldots,h_{i-1},h_i,c'_{i+1},\ldots,c'_{k})) \leq \frac{i \alpha}{k}.\]
Note that $S_i$ is constructed as follows: 
(i) in the beginning of the algorithm an i.i.d.\ sample from $\DDD$ is drawn (this is the input sample denoted by $S$), 
(ii) the samples in $S_i$ are drawn from $S$ in a uniform manner, without repetitions.
	Thus, the distribution of $S_i$ is equal to that of an i.i.d.\ sample from $\DDD$.
	The proof of the induction step is as follows:
\begin{enumerate}
    \item  
    By the Chernoff-Hoeffding bound, if
     \begin{align}
     \rvert S_i\rvert \geq O\Biggl(\frac{\log \frac{k}{\beta} }{\Bigl(\frac{\alpha}{k}\Bigr)^2}\Biggr), \label{1a}
     \end{align}
     then, with probability at least $1-\frac{\beta}{4k}$,
     \begin{align}
    \error_{S_i}&(G(h_1,\ldots,h_{i-1},c_i,c_{i+1},\ldots,c_{k})) \nonumber \\
        & \leq  \error_{\DDD}(G(h_1,\ldots,h_{i-1},c_i,c_{i+1},\ldots,c_{k})) + \frac{\frac{1}{4} \alpha}{k} \nonumber \\
        & \leq \frac{(i-\frac{3}{4}) \alpha}{k}.\label{1b}
    \end{align}
    \item By the definition of $H$, there is $h=h_{\rm opt}\in H$ that agrees with $c_i$ on $S_i$, and therefore 
    \[q(S_i,h_{\rm opt}) \geq |S_i|\left(1-\frac{(i-\frac{3}{4}) \alpha}{k}\right).\]
     By Claim~\ref{clm:ClosureLearnUtility}, if
     \begin{align}
     \rvert S_i\rvert \geq t\ln(t)~~\text{ where }~~t= O\Bigl(k\cdot \frac{\VC(\cH_i)+\ln (k/\beta)}{\alpha}\Bigr),  \label{2a} 
     \end{align} 
    then with probability at least $1-\frac{\beta}{4k}$, the exponential mechanism returns~$h\in H$ such that 
    \[q(S_i,h) \geq q(S_i,h_{\rm opt}) - |S_i|\frac{\frac{1}{4} \alpha}{k}  \geq  |S_i|\left(1-\frac{(i-\frac{1}{2}) \alpha}{k}\right).\] 
    The latter, combined with \Cref{1b}, implies that with probability at least $1-\frac{\beta}{2k}$ 
    there are $c'_{i+1},\dots,c'_k$ such that 
    \begin{align}
    \error_{S_i}(G(h_1,\ldots,h_{i-1},h,c'_{i+1},\ldots,c'_{k})) \leq \frac{(i-\frac{1}{2}) \alpha}{k}.\label{2b}
    \end{align}
    \item Since
    \begin{align}
		\label{3a}
    \lvert S_i \rvert \geq 9m_i\Bigl(\frac{\alpha}{4k},\frac{\beta}{4k},1,\delta\Bigr),
    \end{align}
    Lemma~\ref{lem:empirical} implies that the procedure in \steprefrange{step:choosewithrep}{step:Learn} is an $(\frac{\alpha}{4k},\frac{\beta}{4k})$ empirical learner.
    Therefore, with probability at least $1-\frac{\beta}{4k}$ 
    \[\error_{S_i^h}(h_i) \leq \frac{\alpha}{4k}.\]
    The latter, combined with \Cref{2b}, implies that with probability at least $1-\frac{3\beta}{4k}$ 
    \begin{align}
    \error_{S_i}&(G(h_1,\dots,h_{i-1},h_i,c'_{i+1},\dots,c'_k)) \nonumber \\
		&\leq 
    \error_{S_i^h}(h_i)+ \error_{S_i}(G(h_1,\dots,h_{i-1},h,h_{i-1},h_i,c'_{i+1},\dots,c'_k)) \nonumber \\
    											&\leq \frac{(i-1/4)\alpha}{k}. 
		\label{3b}
    \end{align}
    \item 
    \label{item:VC} %As each example in $S$ is chosen i.i.d.\  from the distribution in $\DDD$ and $S_i$ is a random subset of a fixed part of $S$, each example is $S_i$ is choosen i.i.d.\  from the distribution in $\DDD$.
	Since 
 \begin{align}
 \lvert S_i\rvert 
 \geq O\Biggl(\frac{\VC(G(\cH_1,\dots,\cH_k)) 
	+\log\left({\frac{k}{\beta}}\right)}{(\alpha/k)^2}\Biggr),      
	\label{4a}
  \end{align}
  the Uniform Convergence Theorem for VC classes \citep{Vapnik71uniform} (see \Cref{thm:VCagnostic}), applied on the class $G(\cH_1,\ldots,\cH_k)$, implies that with probability at least $1-\frac{\beta}{4k}$, 
    \begin{align}
    \error_{\DDD}&(G(h_1,\dots,h_{i-1},h_i,c'_{i+1},\dots,c'_k)) \nonumber \\
		&\leq 
    \frac{\alpha}{4k}+ \error_{S_i}(G(h_1,\dots,h_{i-1},h_i,c'_{i+1},\dots,c'_k)). \label{4b}%& \leq \frac{i\alpha}{k}.
    \end{align}
    The latter, combined with \Cref{3b}, implies that with probability at least $1-\frac{i\beta}{k}$ 
    \[\error_{\DDD}(G(h_1,\dots,h_{i-1},h_i,c'_{i+1},\dots,c'_k)) \leq \frac{i\alpha}{k}.\]
\end{enumerate}

To conclude, note that every $S_i$ satisfies each of the requirements in \Cref{1a,2a,3a,4a}.
It follows by a union bound that with probability at least $1-\beta$, each of \Cref{1b,2b,3b,4b} holds simultaneously.
%for every $i\leq k$. Thus, for every $i\leq k$:
%\[ \error_{\DDD}(G(h_1,\dots,h_{i-1},h_i,c'_{i+1},\dots,c'_k)) \leq \frac{i\alpha}{k}.\]
This implies that with probability at least $1-\beta$, algorithm  $\AlgClosureLearn$ returns a hypothesis 
$h=G(h_1,\dots,h_t)$ whose loss with respect to $\DDD$ is at most $\alpha$.
\end{proof}

\paragraph{Proof of \Cref{thm:privacyProper}.}
\begin{proof}
\Cref{thm:privacyProper} follows from 
Lemma~\ref{lem:ClosureLearnPrivacy,lem:ClosureLearnUtility}. Specifically, by Lemma~\ref{lem:ClosureLearnUtility},
to prove that $\AlgClosureLearn$ is $(\alpha,\beta)$-accurate it suffices that 
\[\sum_{i=1}^k|S_i|\geq \sum_{i=1}^k O\Biggl(\frac{ \VC(G(\cH_1,\dots,\cH_k)) + \log(\frac{k}{\beta})}{(\frac{\alpha}{k})^2}+t_i\log t_i+m_i\Bigl(\frac{\alpha}{4k},\frac{\beta}{4k},1,\delta\Bigr)\Biggl),\]
where $t_i=O\Bigl(k\cdot \frac{\VC(\cH_i)+\log (k/\beta)}{\alpha}\Bigr)$.
By the Sauer-Shelah-Perles Lemma, 
$
\VC(G(\cH_1,\dots,\cH_k) = \tilde{O}(\sum_{i=1}^k \VC(\cH_i)).
$
Thus, it suffices that 
\[\sum_{i=1}^k|S_i|\geq
\tilde{O}\left(\frac{k^3 (\sum_{i=1}^k \VC(\cH_i)+\log(\frac{k}{\beta}))}{\alpha^2}+\sum_{i=1}^k
m_i\left(\frac{\alpha}{4k},\frac{\beta}{4k},1,\delta\right)
\right).\]
By Lemma~\ref{lem:ClosureLearnPrivacy}, to prove that $\AlgClosureLearn$ is $(\epsilon,\delta)$-differentially private it suffices that
$N =  O\left(\frac{\sum_{i=1}^k|S_i|}{\epsilon}\right)$.
Thus, the sample complexity as specified in \Cref{thm:privacyProper} suffices to prove that 
$\AlgClosureLearn$ is an $(\epsilon,\delta)$-differentially private 
$(\alpha,\beta)$-accurate PAC learning algorithm for the class
$G(\cH_1,\dots,\cH_k)$.
\end{proof}

\begin{remark}
Since each $\Alg_i$ is an $(\alpha,\beta)$-accurate learning algorithm for the class $\cH_1$,
$$
m_i\left(\frac{\alpha}{4k},\frac{\beta}{4k},1,\delta\right)
=
\Omega\left(\frac{k \VC(\cH_i)}{\alpha}\right).
$$
Thus, the sample complexity of $\AlgClosureLearn$ is
$$
\tilde{O}\left(\frac{\sum_{i=1}^k
m_i\left(\frac{\alpha}{4k},\frac{\beta}{4k},1,\delta\right)}{\epsilon}\right)\cdot\poly(k,1/\alpha,\log(1/\beta)).
$$
For constant $k,\alpha,\beta$ this is tight.
\anote{Give an examples, point}.
\end{remark}
%\begin{theorem}
%\label{thm:LearnComposition}
%Let $C_1,\ldots,C_t$ be $d$ classes of functions that are $(\epsilon,\delta)$-differentially private and $(\alpha,\beta)$-accurate learnable with  sample complexity $m(\alpha,\beta,\epsilon,\delta)$, and $g:\ourset{0,1}^d\rightarrow \bit$ be a function.
%Then, $\AlgClosureLearn$ is  an $(\epsilon,O(t\delta)$-private $(\alpha,\beta)$-accurate learner for the class $g(C_1,\ldots,C_t)$  with sample complexity 
%$$ O\left(  \frac{\sqrt{t\log 1/\delta}}{\epsilon}\max\ourset*{t^2  \frac{\VC(g(C_1,\dots,C_t))}{\alpha^2} \log\left(\frac{t^2}{\alpha\beta}\right),9m(\alpha/4t,\beta/4t,1,\delta)}\right).$$
%\end{theorem}
}

\remove{
Recall that $\AlgClosureLearn$  labels $S_i$ using a hypothesis $h$ that it finds with the exponential mechanism and privately learns $S_i^h$, that is, it finds, using $\Alg_i$ a hypothesis $h_i$ such that
$h$ and $h_i$ agree on most points of $S_i$. In the next lemma, we prove that the privacy of $\Alg_i$ implies that $h$ and $h_i$  agree with high probability according to the distribution $\DDD$.
Note that this does not follow from simple VC arguments even if $\Alg_i$ is proper, since given $h$,
the examples of $S_i^h$ are not necessarily independent.

\begin{lemma}
\label{lem:generalization}
Assume that $\Alg_i$ is a $(1,\delta)$-differentially private $(\alpha/(16e^8k),\beta/16k)$-accurate (possibly improper)  learning algorithms for the class $\cH_i$ with sample complexity $m_i$ $=m_i(\alpha/(16e^8k),\beta/16k,1,\delta)$
and assume that   $\delta$ satisfies
\Cref{eq:smalldelta}.
Consider an application of $\AlgClosureLearn$ where the sample $S_i$ satisfies
\begin{equation}
\label{eq:generalization}
S_i=O\left(\frac{k^2\log(\frac{k}{\beta})}{\alpha^2}+\frac{k \log(\frac{1}{\delta})}{\alpha}
\right).\end{equation}
Let $h\in \cH_i$ be a hypothesis consistent with the pattern computed in \stepref{step:Expmech}, and let $h_i$ be the hypothesis computed in \stepref{step:Learn}. Then, with probability at least $1-\beta/4k$
\[\error_{\DDD}(h,h_i) \leq \alpha/4k.\]
\end{lemma}
\begin{proof}
Let $\DDD$ be the (unknown) distribution. A classical result due to \cite{Haussler95packing} asserts that 
for every $\theta>0$ there exists $\cC\subseteq \cH_i$ of size \[\lvert \cC\rvert \leq \Bigl(\frac{30}{\theta}\Bigr)^{\VC(\cH_i)},\]
with the property that for every $h'\in \cH_i$ there exists $c\in \cC$ 
such that $\error_{\DDD}(h',c)\leq \theta$. Such $\cC$ is called a $\theta$-cover for $\cH_i$.
Pick such a $\theta$-cover\footnote{ Note that $\cC$ is only used in the utility proof and not in the algorithm.} 
$\cC$ for $\cH_i$  with $\theta=\alpha/(16e^8k)$.
In particular, there is a $c_0 \in \cC$ such that for the hypothesis $h$ chosen by the algorithm,
\begin{equation}
\label{eq:errorD(c_0h)}    
\error_{\DDD}(c_0,h) \leq \frac{\alpha}{16e^8k}. 
\end{equation}
By the Chernoff-Hoeffding bound, 
if 
\begin{align}
\label{eq:sample1}
\vert S_i \vert \geq O\left( \frac{\log \frac{k}{\beta}}{\left(\frac{\alpha}{k}\right)^2}\right), 
\end{align}
then 
with  probability at least $1-\frac{\beta}{16k}$
\begin{equation}
\label{eq:errorS(c_0h)}    
\error_{S_i}(c_0,h) \leq \frac{\alpha}{8e^8k}. 
\end{equation}

By Lemma~\ref{lem:empirical}, \steprefrange{step:choosewithrep}{step:Learn} of $\AlgClosureLearn$ are an $(\alpha/(8e^8k),\beta/16k)$ empirical learner, thus, with probability at least $1-\beta/16k$,
\begin{equation}
\label{eq:error_S(hh_i)}  
    \error_{S_i}(h,h_i) \leq \frac{\alpha}{8e^8k}.
\end{equation}    
The latter, together with \cref{eq:errorS(c_0h)}, implies that if \Cref{eq:sample1} holds, then with probability at least $1-\beta/8k$,
\begin{align}
\label{eq:error_S(c_0h_i)}
\error_{S_i}(c_0,h_i) \leq \error_{S_i}(c_0,h) + \error_{S_i}(h,h_i) \leq \alpha/(4e^8k).
\end{align}

%By \Cref{eq:errorD(c_0h)}, to prove the lemma, it suffices to prove that $\error_\DDD(c_0,h_i) \leq \alpha/4k - \alpha/16e^8k \leq \alpha/8k$ with probability at least $1-\beta/4k$. 
We next bound $\error_\DDD(c_0,h_i)$. This is done using 
\Cref{thm:GeneralizationMultiplicative}, which proves that an output of a differentially-private algorithm has almost the same behaviour on its sampled examples and on the distribution. Notice that when we apply \Cref{thm:GeneralizationMultiplicative} the tester $\testExample$ can access  $h_i$ (as this is an output of a private algorithm), however it does not have access to  $c_0$; therefore we use a union bound over all $c\in \cC$.
For any $c \in \cC$, we define $\testExample_c(x)=1$ if $h_i(x)\neq c(x)$.
By Claim~\ref{clm:ClosureLearnPrivacy}, the algorithm (which we denote by $\Alg'_i$) that on input $S_i$ outputs $h_i$ is $(4,4\epsilon\delta)$-differentially private. Given  $h_i$  and the fixed $c$, returning $\testExample_c$ is only post-processing. Thus, by \Cref{thm:GeneralizationMultiplicative} with $\epsilon=4$
and input containing $9m_i$ samples, 
\begin{align}
\label{eq:testE_c}
\Pr_{S_i\inr X^{9m},h_i\gets \Alg'(S_i)}
& \left[ 
 \error_\DDD(h_i,c) > 
 e^{8}\left( \error_{S_i}(h_i,c) +
\frac{10}
{36 m_i}
\log\left(
\frac{1}
{36 \delta m_i} \right)\right)
\right] \nonumber \\
& < O\left(
\frac{ \delta m_i}
{\log(
\frac{1}{ \delta m_i})}
\right).
\end{align}
Since $|\cC| \leq 2^{O\left(\VC(\cH_i)\log\left(\frac{k}{\alpha}\right)\right)}$ and $\delta$ satisfies
\Cref{eq:smalldelta},
the probability  that \Cref{eq:testE_c} holds for every $c \in \cC$ (and in particular for $c_0$)
is at least $1-\beta/8k$. 
If \Cref{eq:sample1} holds
and 
\begin{equation}
\vert S_i \vert = 9 m_i \geq O\left(\frac{k \log(1/\delta)}{\alpha}\right),
\end{equation}
then, by \Cref{eq:error_S(c_0h_i),eq:testE_c}, with probability at least 
$1-\frac{\beta}{4k}$
\begin{equation}
\label{eq:errorD(c_0h_i)}
\error_\DDD(h_i,c_0) \leq \error_{S_i}(h_i,c_0) + \frac{10\log(1/\delta)}{36m_i} \leq \frac{\alpha}{8k}.
\end{equation}
To conclude, by \cref{eq:generalization,eq:errorD(c_0h),eq:errorD(c_0h_i)}, with probability at least $1-\beta/4k$ 
\begin{align*}
\label{eq:errorD(hh_i)}
\error_\DDD(h,h_i) & \leq \error_\DDD(h,c_0)  +\error_\DDD(h_i,c_0) \leq \frac{\alpha}{16e^8k} + \frac{\alpha}{8k} \leq \frac{\alpha}{4k}.
\end{align*}
both hold and the lemma follows.
\end{proof}
}

In the next lemma we prove that $\AlgClosureLearn$ is an accurate learner for the class $G(\cH1,\dots,\cH_k)$. 

\begin{lemma}\label{lem:ClosureLearnUtilityImproper}
Assume that $\Alg_1,\ldots,\Alg_t$ are $(1,\delta)$-differentially private $(\alpha/k,\beta/k)$-accurate (possibly improper)  learning algorithms for $\cH_1,\ldots,\cH_k$ with sample complexity $m_i(\alpha/k,\beta/k,1,\delta)$. If at each iteration $i$
\[|S_i|\geq O \left( \frac{k^2\VC(G(\cH_1,\ldots,\cH_k))+k\ln\left(\frac{k}{\beta}\right)}{\alpha^2}
 +m_i\Bigl(\frac{\alpha}{k},\frac{\beta}{k},1,\delta\Bigr)\right),\]
then with probability at least $1-O(\beta+k\delta|S_i|)$ we have that $\error_{\DDD}(c)\leq O(\alpha)$,
where $c$ is the hypothesis returned by $\AlgClosureLearn$ on $S$.
\end{lemma}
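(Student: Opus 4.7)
I would prove the lemma by induction on $i$, with the inductive hypothesis: with probability at least $1-O\bigl(i(\beta/k+\delta|S_i|)\bigr)$, there exist $c_{i+1}\in\cH_{i+1},\ldots,c_k\in\cH_k$ such that $\error_{\DDD}(G(h_1,\ldots,h_i,c_{i+1},\ldots,c_k))=O(i\alpha/k)$. The base case $i=0$ is immediate from realizability: $S$ is labeled by some $G(c^*_1,\ldots,c^*_k)\in G(\cH_1,\ldots,\cH_k)$, so the completion $c_1^*,\ldots,c_k^*$ works with zero error. Note that, because $S_i$ is disjoint from $S_1,\ldots,S_{i-1}$, conditioned on $h_1,\ldots,h_{i-1}$ the sample $S_i$ is still i.i.d.\ from $\mu$, which is what makes the VC-type generalization bounds applicable at each step.

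For the inductive step, condition on the event that the induction hypothesis holds at step $i-1$ with completion $c_i,\ldots,c_k$, and denote $f^{\star}=G(h_1,\ldots,h_{i-1},c_i,\ldots,c_k)$. The step consists of four sub-steps. (i) The size of $|D_i|$ suffices for agnostic VC uniform convergence (\Cref{thm:VCagnostic}) applied to the class $\{G(h_1,\ldots,h_{i-1},z,c_{i+1},\ldots,c_k):z\in\cH_i,c_j\in\cH_j\}$, whose VC dimension is controlled by $\VC(G(\cH_1,\ldots,\cH_k))$; this gives $\error_{D_i}(f^{\star})\le O(i\alpha/k)$. (ii) The pattern $h_{\mathrm{opt}}\in H$ extending $c_i$ on $P$ agrees with $c_i$ on $D_i$, so $q(D_i,h_{\mathrm{opt}})\le O(i\alpha/k)\cdot|D_i|$. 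By Claim~\ref{claim:RelabelUtility}, the exponential mechanism inside $\AlgRelabel$ then returns $h\in\cH_i$ with $q(D_i,h)\le O(i\alpha/k)\cdot|D_i|$, and unpacking the score function yields completions $c'_{i+1},\ldots,c'_k$ with $\error_{D_i}(G(h_1,\ldots,h_{i-1},h,c'_{i+1},\ldots,c'_k))\le O(i\alpha/k)$. (iii) By Lemma~\ref{lem:empirical}, the subsample-and-learn in \stepref{step:Learn} is an $(\alpha/k,\beta/k)$-accurate empirical learner on $\tilde{D_i}=D_i^h$, so the returned $h_i$ satisfies $\error_{D_i}(h_i,h)\le\alpha/k$. (iv) Here comes the key step for improper learners: by Lemma~\ref{lemma:RelabelImportantUtility}, with probability $1-O(\beta/k+\delta|D_i|)$ this empirical agreement lifts to distributional agreement, $\error_{\DDD}(h_i,h)=O(\alpha/k)$. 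Since $G(h_1,\ldots,h_{i-1},h,c'_{i+1},\ldots,c'_k)$ and $G(h_1,\ldots,h_{i-1},h_i,c'_{i+1},\ldots,c'_k)$ disagree at $x$ only when $h(x)\ne h_i(x)$, combining (ii), VC convergence in the reverse direction, and (iv) gives
\[\error_{\DDD}(G(h_1,\ldots,h_{i-1},h_i,c'_{i+1},\ldots,c'_k))\le \error_{\DDD}(G(h_1,\ldots,h_{i-1},h,c'_{i+1},\ldots,c'_k))+\error_{\DDD}(h,h_i)=O(i\alpha/k),\]
closing the induction. The final step is bookkeeping: summing the four failure probabilities per iteration over the $k$ iterations yields the total failure probability $O(\beta+k\delta|S_i|)$ claimed, while verifying that the stated lower bound on $|S_i|$ dominates each of the individual requirements coming from VC uniform convergence, Claim~\ref{claim:RelabelUtility}, Lemma~\ref{lem:empirical}, and Lemma~\ref{lemma:RelabelImportantUtility}.

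The main obstacle, and the reason this is more delicate than the analogous proof for proper learners, is step (iv). For proper learners one could argue that $G(h_1,\ldots,h_k)\in G(\cH_1,\ldots,\cH_k)$ and conclude directly from VC uniform convergence that a small empirical error entails a small population error. When the $\Alg_i$'s are improper, $h_i$ can be an arbitrary function and that route is unavailable; the only way known to transfer the empirical agreement $\error_{D_i}(h_i,h)\le\alpha/k$ to distributional agreement is through the DP generalization theorem (\Cref{thm:GeneralizationMultiplicative}) packaged inside Lemma~\ref{lemma:RelabelImportantUtility}, exploiting that $h_i$ is produced by the $(O(1),O(\delta))$-differentially private algorithm $\AlgRelabelAndLearn$ on $D_i$ and that the witnessing proper hypothesis $\overline{h}\in\cH_i$ (released by $\AlgRelabelAndLearn$) is close to $h$ via standard VC arguments on $V$.
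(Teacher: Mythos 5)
Your proof takes essentially the same route as the paper's: induction over the $k$ iterations, with each inductive step assembled from the same four ingredients (agnostic VC uniform convergence via \Cref{thm:VCagnostic}, the utility of the exponential mechanism inside $\AlgRelabel$ via Claim~\ref{claim:RelabelUtility}, the empirical-learner reduction of Lemma~\ref{lem:empirical}, and the DP-generalization lift of Lemma~\ref{lemma:RelabelImportantUtility}), followed by the same triangle-inequality combination. The observation you emphasize in step (iv) --- that for improper $\Alg_i$ the transfer from $\error_{D_i}(h,h_i)$ to $\error_\DDD(h,h_i)$ must go through DP generalization rather than through a VC argument --- is exactly the paper's point as well.

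One detail in your step (i) deserves a caveat. You claim that the VC dimension of the fixed-prefix class $\{G(h_1,\ldots,h_{i-1},z,c_{i+1},\ldots,c_k):z\in\cH_i,\,c_j\in\cH_j\}$, with the already-learned (and possibly improper) $h_1,\ldots,h_{i-1}$ frozen, is controlled by $\VC(G(\cH_1,\ldots,\cH_k))$. That is false in general: take $G=\wedge$, $\cH_1=\{\mathbf{0}\}$, and freeze $h_1=\mathbf{1}$; then the fixed-prefix class equals $\cH_2$ while $\VC(G(\cH_1,\cH_2))=0$. The bound that is actually provable for the fixed-prefix class is $\tilde O(\sum_{j\ge i}\VC(\cH_j))$ via Sauer--Shelah--Perles, which need not be dominated by $\VC(G(\cH_1,\ldots,\cH_k))$. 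To be fair, the paper's own write-up has a matching looseness: it establishes Event $E_1$ for $c\in G(\cH_1,\ldots,\cH_k)$ and then applies it to $G(h_1,\ldots,h_{i-1},c_i,\ldots,c_k)$, which need not lie in $G(\cH_1,\ldots,\cH_k)$ once the $h_j$'s are improper. So your argument tracks the paper's, including this rough edge; a fully rigorous version of either would replace $\VC(G(\cH_1,\ldots,\cH_k))$ in this VC step by a bound on the fixed-prefix class.
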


\begin{proof}
Let $h_1,\ldots,h_k$ be the hypotheses that $\AlgClosureLearn$ computes in \stepref{step:Learn}. 
 We prove by induction that for every $i  \in [k]$ with probability 
at least $1-\frac{O(i)\cdot  \beta}{k}+O(i\cdot\delta|S_i|)$ there exist $c_{i+1} \in \cH_{i+1},\ldots,c_{k} \in \cH_{k}$ such that 
\begin{equation}
\label{eq:inductionNonProper}
\error_{\DDD}(G(h_1,\ldots,h_i,c_{i+1},\ldots,c_{k}) \leq \frac{O(i) \cdot \alpha}{k}.
\end{equation}
The induction basis for $i=0$ is implied by the fact that the examples are labeled by some $G(c_1,\ldots,c_{k})$ from $G(\cH_1,\dots,\cH_k)$.
For the induction step, assume  that there are $c_{i} \in \cH_{i},\ldots ,c_{k} \in \cH_{k}$ such that 
\[\error_{\DDD}\Bigl(G(h_1,\ldots,h_{i-1},c_i,c_{i+1},\ldots,c_{k})\Bigr) \leq \frac{O(i-1) \cdot \alpha}{k}.\] 
We need to prove that with probability at least $1-\frac{O(1)\cdot \beta}{k}-O(\delta|S_i|)$ there are $c'_{i+1} \in \cH_{i+1},\ldots,c'_{k} \in \cH_{k}$ such that 
\[\error_{\DDD}(G(h_1,\ldots,h_{i-1},h_i,c'_{i+1},\ldots,c'_{k})) \leq \frac{O(i)\cdot  \alpha}{k}.\]

Recall that each example in $S$, and hence in $S_i$, is chosen i.i.d.\  from the distribution in $\DDD$.
Since 
\begin{align}
\label{eq;ClosureS1}
\vert S_i \vert \geq O \left( \frac{k^2\VC(G(\cH_1,\ldots,\cH_k))+\ln\left(\frac{k}{\beta}\right)}{\alpha^2}\right),
\end{align}
by \Cref{thm:VCagnostic}
applied to $G(\cH_1,\ldots,\cH_k)\oplus G(\cH_1,\ldots,\cH_k)$, with probability at least $1-\frac{\beta}{k}$ (over the sampling of  $S_i$) the following event occurs:
\begin{enumerate}[leftmargin=75pt]
   \item[Event $E_1:\;\;$] $\forall c\in G(\cH_1,\ldots,\cH_k)$ we have  $\left| \error_{\DDD}(c)-\error_{S_i}(c) \right|\leq\frac{\alpha}{k}$.
\end{enumerate}
We continue proving the induction step assuming that $E_1$ occurs.
The proof of the induction step is as follows:

    Since $E_1$ occurs:
     \begin{align}
    \error_{S_i}&(G(h_1,\ldots,h_{i-1},c_i,c_{i+1},\ldots,c_{k})) \nonumber \\
& \leq      \error_{\DDD}(G(h_1,\ldots,h_{i-1},c_i,c_{i+1},\ldots,c_{k})) + \frac{ \alpha}{k} \nonumber \\
& \leq \frac{(O(i-1)+1) \alpha}{k}.\label{1bimp}
    \end{align}
By the definition of $H$, there is $h=h_{\rm opt}\in H$ that agrees with $c_i$ on $S_i$, and therefore 
    \[q(S_i,h_{\rm opt}) \leq |S_i|\frac{(O(i-1)+1) \alpha}{k}.\]
     By Claim~\ref{claim:RelabelUtility}, if
     \begin{align}
     \rvert S_i\rvert \geq O\left(\frac{k}{\alpha}\ln\left(\frac{k}{\beta}\right)+\frac{k\VC(|\cH_i|)}{\alpha}\ln\left(\frac{k}{\alpha}\right)\right),  \label{2aimp} 
     \end{align} 
    then with probability at least $1-\frac{\beta}{k}$, the exponential mechanism returns~$h\in H$ such that 
    \[q(S_i,h) \leq q(S_i,h_{\rm opt}) + |S_i|\frac{ \alpha}{k}  \leq  |S_i|\frac{(O(i-1)+2) \alpha}{k}.\] 
    We assume that the above event occurs, thus,
    the latter implies that 
    there are $c'_{i+1},\dots,c'_k$ such that 
    \begin{align}
    \error_{S_i}(G(h_1,\ldots,h_{i-1},h,c'_{i+1},\ldots,c'_{k})) \leq \frac{(O(i-1)+2) \alpha}{k}.\label{2bimp}
    \end{align}
   	Since $E_1$ occurs, by  \Cref{2bimp},
    \begin{align}
    \label{eq:ClosureDDDGh}
    \error_{\DDD}(G(h_1,\dots,h_{i-1},h,c'_{i+1},\dots,c'_k)) 
		& \leq 
    \frac{\alpha}{k}+ \error_{S_i}(G(h_1,\dots,h_{i-1},h,c'_{i+1},\dots,c'_k))\nonumber \\
    & \leq \frac{(O(i-1)+3)\alpha}{k}
    . %\label{4bimp}%& \leq \frac{i\alpha}{k}.
    \end{align}    
Since
    \begin{align}
		\label{4aImp}
    \lvert D^h_i \rvert \geq 9m_i\Bigl(\frac{\alpha}{k},\frac{\beta}{k},1,\delta\Bigr),
    \end{align}
    Lemma~\ref{lem:empirical} implies that  \stepref{step:Learn} of
    $\AlgClosureLearn$
    is an $(\frac{\alpha}{k},\frac{\beta}{k})$ empirical learner and, therefore, with probability at least $1-\frac{\beta}{k}$ 
\begin{align}
\label{eq:ClosureD_ihh_i}
\error_{D_i}(h,h_i)=\error_{D_i^h}(h_i) \leq \frac{\alpha}{k}.
\end{align}
Again, we assume in the rest of the proof that the above event occurs.
By Lemma~\ref{lemma:RelabelImportantUtility}, 
since 
\begin{align}
\label{eq:ClosureS4}
    |D_i|=\frac{|S_i|}{2}\geq O\left(
    \frac{k^2( \VC(\cH_i) +\ln\left(\frac{k}{\beta}\right))}{\alpha^2} 
    \right)
\end{align}
with probability at least $1-\frac{O(\beta)}{k}-O(\delta|D_i|)$   
\begin{align*}
%\label{eq:ClosureDDDhh_i}
\error_{\DDD}(h,h_i) & \leq O\left(\error_{D_i}(h,h_i)+\frac{\alpha}{k}\right).
\end{align*}
Thus, by \Cref{eq:ClosureD_ihh_i},
with probability at least $1-\frac{O(\beta)}{k}$ 
\begin{align}
\label{eq:ClosureDDDhh_i}
\error_{\DDD}(h,h_i) & \leq O\left(\frac{(O(i-1)+O(1))\alpha}{k}\right).
\end{align}
The latter, combined with \Cref{eq:ClosureDDDGh}, implies the induction step: with probability at least $1-\frac{O(\beta)}{k}-O(\delta|D_i|)$
    \begin{align*}
    \error_{\DDD}&(G(h_1,\dots,h_{i-1},h_i,c'_{i+1},\dots,c'_k))  \nonumber \\
    &
    \leq 
    \error_{\DDD}(G(h_1,\dots,h_{i-1},h,c'_{i+1},\dots,c'_k))
    +\error_{\DDD}(h,h_i) \\
    & \leq \frac{(O(i-1)+O(1))\alpha}{k}=\frac{O(i)\cdot\alpha}{k}.
    \end{align*}
By \Cref{eq;ClosureS1,2aimp,4aImp,eq:ClosureS4}, the sample complexity $|S_i|$  the $i$'th step is
 \begin{align*}
  O \left( \frac{k^2\VC(G(\cH_1,\ldots,\cH_k))+\ln\left(\frac{k}{\beta}\right)}{\alpha^2}
 +\frac{k}{\alpha}\ln\left(\frac{k}{\beta}\right)
+ m_i\Bigl(\frac{\alpha}{k},\frac{\beta}{k},1,\delta\Bigr)  + \frac{k^2( \VC(\cH_i) 
  +\ln\left(\frac{k}{\beta}\right))}{\alpha^2} 
  \right) \\
 = 
 O \left( \frac{k^2\VC(G(\cH_1,\ldots,\cH_k))+k\ln\left(\frac{k}{\beta}\right)}{\alpha^2}
 +m_i\Bigl(\frac{\alpha}{k},\frac{\beta}{k},1,\delta\Bigr)\right)
 \end{align*}

To conclude, by a union bound, $\AlgClosureLearn$ returns, with probability at least $1-O(\beta+\delta\sum_{i=1}^k |S_i|)$, a hypothesis 
$G(h_1,\dots,h_k)$ with error less than $O(\alpha)$ with respect to the distribution $\DDD$.
\end{proof}

\paragraph{Proof of \Cref{thm:privacy}.}
\begin{proof}
\Cref{thm:privacy} follows from 
Lemmas~\ref{lem:ClosureLearnPrivacy} and \ref{lem:ClosureLearnUtilityImproper}. Specifically, by Lemma~\ref{lem:ClosureLearnUtilityImproper},
to prove that $\AlgClosureLearn$ is $(O(\alpha),O(\beta+ \delta m))$-accurate it suffices that 
\[\sum_{i=1}^k|S_i|\geq \sum_{i=1}^k O\Biggl(\frac{ k^2\VC(G(\cH_1,\dots,\cH_k)) + k \log(\frac{k}{\beta})}{\alpha^2}+m_i\Bigl(\frac{\alpha}{k},\frac{\beta}{k},1,\delta\Bigr)\Biggl).\]
By Lemma~\ref{lem:ClosureLearnPrivacy}, $\AlgClosureLearn$ is $(O(1),O(\delta))$-differentially private.
\end{proof}

\begin{remark}
Since each $\Alg_i$ is an $(\alpha,\beta)$-accurate learning algorithm for the class $\cH_1$,
$$
m_i\left(\frac{\alpha}{k},\frac{\beta}{k},1,\delta\right)
=
\Omega\left(\frac{k \VC(\cH_i)}{\alpha}\right).
$$
Furthermore, by the Sauer-Shelah-Perles Lemma, 
$
\VC(G(\cH_1,\dots,\cH_k) = \tilde{O}(\sum_{i=1}^k \VC(\cH_i)).
$
Thus, the sample complexity of $\AlgClosureLearn$ is
$$
\tilde{O}\left(\sum_{i=1}^k
m_i\left(\frac{\alpha}{k},\frac{\beta}{k},1,\delta\right)\right)\cdot\poly(k,1/\alpha,\log(1/\beta)).
$$
For constant $k,\alpha,\beta$ this is nearly tight. By using sub-sampling (see e.g., \cite{KasiviswanathanLNRS11, BeimelBKN14}), we can achieve $(\epsilon,O(\delta))$-differential privacy by increasing the sample complexity by a factor of $O(1/\epsilon)$. 
Furthermore, by using private boosting~\cite{DworkRV10}, one can start with a private algorithm that is, for example, $(1/4,\beta)$ accurate and 
get a private algorithm that is $(\alpha,\beta)$ by increasing the sample complexity by a factor of $O(1/\alpha)$, and by simple technique, one can boost $\beta$ by increasing the sample complexity by a factor of $O(\log(1/\beta))$. 
Thus, we get an $(\epsilon,O(\delta))$-differentially private $(\alpha,\beta)$-accurate learner for $G(\cH_1,\dots,\cH_k)$ whose sample complexity is 
$$
\frac{\tilde{O}\left(\sum_{i=1}^k
m_i\left(1/4,1/2,1,\delta\right)\right)}{\epsilon}\cdot\poly(k,1/\alpha,\log(1/\beta)).
$$
%\anote{Give an examples, point}.
\end{remark}

\section*{Acknowledgements}
We thank Adam Klivans and Roi Livni for insightful discussions.

\bibliographystyle{plainnat}
\bibliography{biblio}

\end{document}